\newtheorem{theorem}{Theorem}
\newtheorem*{proposition*}{Proposition}
\newtheorem*{corollary*}{Corollary}
\theoremstyle{definition}
\newtheorem{definition}{Definition}[]
\newtheorem*{assumption*}{Assumption}
\newtheorem*{problem*}{Problem}
\newtheorem{problem}{Problem}
\theoremstyle{remark}
\newtheorem*{solution*}{Solution}
\newtheorem*{example*}{Example}
\newcommand{\prl}[1]{\left(#1\right)}
\newcommand{\brl}[1]{\left[#1\right]}
\algnewcommand{\IfThenElse}[3]{
	\State \algorithmicif\ #1\ \algorithmicthen\ #2\ \algorithmicelse\ #3}
\DeclareMathOperator{\tr}{tr}
\DeclareMathOperator{\diag}{diag}
\newcommand{\NEW}[1]{{\color{black}#1}}
\newcommand{\NEWW}[1]{{\color{black}#1}}
\newcommand{\calD}{{\cal D}}
\newcommand{\calG}{{\cal G}}
\newcommand{\calH}{{\cal H}}
\newcommand{\calJ}{{\cal J}}
\newcommand{\calL}{{\cal L}}
\newcommand{\calR}{{\cal R}}
\newcommand{\calT}{{\cal T}}
\newcommand{\calV}{{\cal V}}
\newcommand{\frakg}{{\boldsymbol{\mathfrak{g}}}}
\newcommand{\frakp}{{\boldsymbol{\mathfrak{p}}}}
\newcommand{\frakq}{{\boldsymbol{\mathfrak{q}}}}
\newcommand{\frakse}{{\boldsymbol{\mathfrak{s}}\boldsymbol{\mathfrak{e}}}}
\newcommand{\bfa}{\mathbf{a}}
\newcommand{\bfb}{\mathbf{b}}
\newcommand{\bfc}{\mathbf{c}}
\newcommand{\bfd}{\mathbf{d}}
\newcommand{\bfe}{\mathbf{e}}
\newcommand{\bff}{\mathbf{f}}
\newcommand{\bfh}{\mathbf{h}}
\newcommand{\bfp}{\mathbf{p}}
\newcommand{\bfr}{\mathbf{r}}
\newcommand{\bfs}{\mathbf{s}}
\newcommand{\bfu}{\mathbf{u}}
\newcommand{\bfv}{\mathbf{v}}
\newcommand{\bfx}{\mathbf{x}}
\newcommand{\bfy}{\mathbf{y}}
\newcommand{\bfzeta}{\boldsymbol{\zeta}}
\newcommand{\bfeta}{\boldsymbol{\eta}}
\newcommand{\bftheta}{\boldsymbol{\theta}}
\newcommand{\bfmu}{\boldsymbol{\mu}}
\newcommand{\bfpi}{\boldsymbol{\pi}}
\newcommand{\bftau}{\boldsymbol{\tau}}
\newcommand{\bfvarphi}{\boldsymbol{\varphi}}
\newcommand{\bfpsi}{\boldsymbol{\psi}}
\newcommand{\bfomega}{\boldsymbol{\omega}}
\newcommand{\bfxi}{\boldsymbol{\xi}}
\newcommand{\bfA}{\mathbf{A}}
\newcommand{\bfB}{\mathbf{B}}
\newcommand{\bfD}{\mathbf{D}}
\newcommand{\bfI}{\mathbf{I}}
\newcommand{\bfJ}{\mathbf{J}}
\newcommand{\bfK}{\mathbf{K}}
\newcommand{\bfL}{\mathbf{L}}
\newcommand{\bfM}{\mathbf{M}}
\newcommand{\bfR}{\mathbf{R}}
\newcommand{\bbR}{\mathbb{R}}
\newcommand{\bbS}{\mathbb{S}}
\newcommand{\sfG}{\mathsf{G}}
\newcommand{\sfL}{\mathsf{L}}
\newcommand{\sfP}{\mathsf{P}}
\newcommand{\sfT}{\mathsf{T}}
\newcommand{\sfa}{\mathsf{a}}
\newcommand{\sfd}{\mathsf{d}}
\newcommand{\sfg}{\mathsf{g}}
\newcommand{\sfad}{\mathsf{a}\mathsf{d}}
\newcommand{\sfAd}{\mathsf{A}\mathsf{d}}
\begin{document}

\title{Port-Hamiltonian Neural ODE Networks on Lie Groups For Robot Dynamics Learning and Control}
\author{Thai~Duong,~\IEEEmembership{Student Member,~IEEE,} Abdullah~Altawaitan,~\IEEEmembership{Student Member,~IEEE,} Jason Stanley~\IEEEmembership{Student Member,~IEEE,}~and~Nikolay Atanasov,~\IEEEmembership{Senior Member,~IEEE}
	\thanks{We gratefully acknowledge support from NSF CCF-2112665 (TILOS).}%
	\thanks{The authors are with the Department of Electrical and Computer Engineering, University of California San Diego, 
	La Jolla, CA 92093, USA (e-mails: \{tduong,aaltawaitan,jtstanle,natanasov\}@ucsd.edu). A. Altawaitan is also affiliated with Kuwait University as a holder of a scholarship.}
}

\markboth{IEEE Transactions on Robotics}
{Duong \MakeLowercase{\textit{et al.}}: Port-Hamiltonian Neural ODE Networks on Lie Groups For Robot Dynamics Learning and Control}


\maketitle

\begin{abstract}
Accurate models of robot dynamics are critical for safe and stable control and generalization to novel operational conditions. Hand-designed models, however, may be insufficiently accurate, even after careful parameter tuning. This motivates the use of machine learning techniques to approximate the robot dynamics over a training set of state-control trajectories. The dynamics of many robots are described in terms of their generalized coordinates on a matrix Lie group, e.g. on $SE(3)$ for ground, aerial, and underwater vehicles, and generalized velocity, and satisfy conservation of energy principles. This paper proposes a \NEWW{port-Hamiltonian} formulation over a Lie group of the structure of a neural ordinary differential equation (ODE) network to approximate the robot dynamics. In contrast to a black-box ODE network, our formulation \NEWW{embeds} energy conservation principle and Lie group's  constraints \NEWW{in the dynamics model} and explicitly accounts for energy-dissipation effect such as friction and drag forces in the dynamics model. We develop energy shaping and damping injection control for the learned, potentially under-actuated Hamiltonian dynamics to enable a unified approach for stabilization and trajectory tracking with various robot platforms.
\end{abstract}

\begin{IEEEkeywords}
Dynamics learning, Hamiltonian dynamics, $SE(3)$ manifold, neural ODE networks
\end{IEEEkeywords}

\section*{Supplementary Material}
Software and videos supplementing this paper:\\ 
\centerline{\url{https://thaipduong.github.io/LieGroupHamDL}}

\section{Introduction}
\label{sec:intro}

Motion planning and optimal control algorithms depend on the availability of accurate system dynamics models. Models obtained from first principles and calibrated over a small set of parameters via system identification~\cite{ljung1999system} are widely used for unmanned ground vehicles (UGVs), unmanned aerial vehicles (UAVs), and unmanned underwater vehicles (UUVs). Such models may over-simplify or even incorrectly describe the underlying structure of the dynamical system, leading to bias and modeling errors that cannot be corrected by adjusting a few parameters. Data-driven techniques \cite{nguyen2011model,deisenroth2011pilco,williams2017information, raissi2018multistep,chua2018deep} have emerged as a powerful approach to approximate system dynamics with an over-parameterized machine learning model, trained over a dataset of system state and control trajectories. Neural networks are expressive function approximation models, capable of identifying and generalizing interaction patterns from the training data. Training neural network models, however, typically requires large amounts of data and computation time, which may be impractical in mobile robotics applications. Recent works \cite{lutter2023combining,gupta2019general,cranmer2020lagrangian,greydanus2019hamiltonian,chen2019symplectic,roehrl2020modeling, lu2022ModLaNets, neary2023compositional} have considered a hybrid \NEWW{(gray-box)} approach, where prior knowledge of the physics, governing the system dynamics, is used to assist the learning process. The dynamics of physical systems obey kinematic constraints and energy conservation laws. These laws are known to be universally true but a black-box machine learning model might struggle to infer them from the training data, causing poor generalization. Instead, prior knowledge may be encoded into the learning model, e.g., using a prior distribution \cite{deisenroth2011pilco}, a graph-network forward kinematic model \cite{sanchez2018graph}, or a network architecture reflecting the structure of Lagrangian \cite{lutter2019deeplagrangian} or Hamiltonian \cite{greydanus2019hamiltonian} mechanical systems. Moreover, many physical robot platforms are composed of rigid-body interconnections and their state evolution respects the structure of a Lie group \cite{hall2013lie}, e.g., the position and orientation kinematics of a rigid body evolve on the $SE(3)$ Lie group \cite{LynchParkBook}. \NEWW{Existing works \cite{falorsi2020neural, elamvazhuthi2023learning, lou2020neural}  on Lie group neural ODEs networks for learning dynamics and normalizing flows focus on preservation of the Lie group structure during backpropagation using an adjoint method, either via a higher-dimensional space \cite{falorsi2020neural, elamvazhuthi2023learning} or local coordinates \cite{lou2020neural, wotte2024optimal}.}

\begin{figure}[t]
    \centering
    \includegraphics[width=\linewidth]{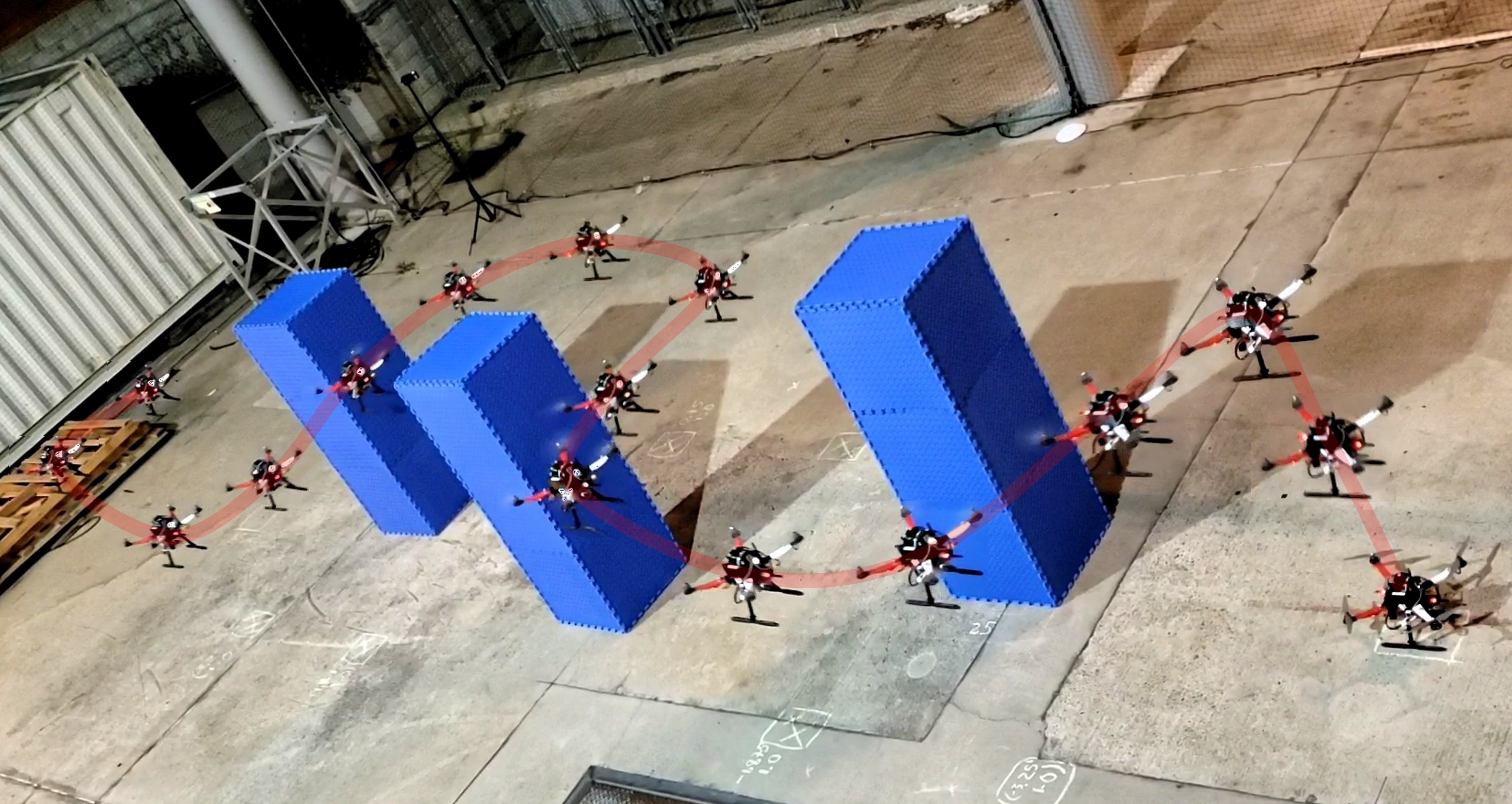}
    \caption{Quadrotor trajectory tracking using a learned port-Hamiltonian dynamics model.}
    \label{fig:zigzag_traj_tracking}
\end{figure}
The goal of this paper is to incorporate both the kinematic structure and the energy conservation properties of physical systems with Lie group states into the structure of a dynamics learning model. We also aim to design a control approach that achieves stabilization or trajectory tracking using the learned model without requiring prior knowledge of its parameters. In other words, the same control design should enable trajectory tracking for learned models of different rigid-body UGVs, UAVs, or UUVs.

Lagrangian and Hamiltonian mechanics \cite{lurie2013analytical,HolmBook} provide physical system descriptions that can be integrated into the structure of a neural network \cite{greydanus2019hamiltonian, bertalan2019learning, chen2019symplectic, finzi2020simplifying, zhong2020symplectic, willard2020integrating}. Prior work, however, has only considered vector-valued states, when designing Lagrangian- or Hamiltonian-structured neural networks. This limits the applicability of these techniques as many common robot systems have states on a Lie group. For example, Hamiltonian equations of motion are available for orientation but existing formulations rely predominantly on $3$ dimensional vector parametrizations, such as Euler angles \cite{maciejewski1985hamiltonian,shivarama2004hamilton}, which suffer from singularities. \NEWW{Similar to \cite{falorsi2020neural, elamvazhuthi2023learning}, our work 
embeds matrix Lie groups in a higher-dimensional space $\bbR^{n\times n}$, allowing us to train the model via the widely used neural ODE network on Euclidean space \cite{chen2018neural}. However, we focus on embedding the Hamiltonian formulation of robot dynamics in the model and deriving a control policy for trajectory tracking. Concurrently, Wotte et al. \cite{wotte2024optimal} offer an exciting approach to learn $SE(3)$ Hamiltonian dynamics from data using a neural ODE on Lie group, trained via local coordinates \cite{lou2020neural} with guarantees of satisfaction of Lie group constraints. The authors designed an adjoint method on the Lie algebra for neural ODE training, and a potential shaping controller for stabilization of a fully actuated rigid body. 

Our preliminary work \cite{duong21hamiltonian} designed a neural ODE network  \cite{chen2018neural} to capture Hamiltonian dynamics on the $SE(3)$ manifold \cite{lee2017global}, and \NEWW{derived} a trajectory tracking control policy for potentially under-actuated systems. Our model is shown to provide accurate long-term trajectory predictions, respecting $SE(3)$ constraints and conserve total energy with high precision}. Inspired by \cite{zhong2020symplectic, zhong2020dissipative}, we model kinetic energy and potential energy by separate neural networks, each governed by a set of Hamiltonian equations on $SE(3)$. \NEWW{However, our preliminary work \cite{duong21hamiltonian}} is developed specifically for the $SE(3)$ manifold and \NEWW{does not model} dissipation elements that drain energy from the system, such as friction or drag forces \NEWW{in real robot systems}. \NEWW{The Hamiltonian formulation of robot dynamics can be generalized to port-Hamiltonian formulation, where the dynamics are governed by energy exchange, i.e., the law of energy conservation, energy dissipation, e.g., from friction, and energy injection, e.g., from control inputs.} In this paper, we generalize our \NEWW{dynamics learning and control method in \cite{duong21hamiltonian} using a port-Hamiltonian} neural ODE network to embed general matrix Lie group constraints and introduce an energy dissipation term, represented by another neural network, to model friction and air drag in physical systems. \NEWW{We compensate for energy dissipation in the trajectory tracking control design to provide accurate tracking performance}. We verify our approach extensively with simulated robot systems, including a pendulum and a Crazyflie quadrotor, and with several real quadrotor platforms. In summary, this paper makes the following \emph{contributions}.

\begin{itemize}

  \item We design a neural ODE model that respects port-Hamiltonian dynamics over \NEWW{\emph{a matrix Lie group} to enable data-driven learning of robot dynamics}.
  
  \item We develop a unified control policy for port-Hamiltonian dynamics on a Lie group that achieves trajectory tracking if permissible by the system's degree of underactuation.
  
  \item We demonstrate our dynamics learning and control approach \NEWW{extensively} with simulated robot systems (a pendulum and a quadrotor) and several real quadrotor robots.
\end{itemize}

\section{Related Work}
\label{sec:related_work}

Data-driven techniques \cite{song2023reaching, salzmann2023real, scaramuzza2022learning, loquercio2021learning, ibarz2021train} have shown impressive results in learning robot dynamics models from state-control trajectories. Neural networks offer especially expressive system models but their training requires large amounts of data, which may be impractical in mobile robot applications. Recently, a hybrid approach \cite{lutter2023combining,gupta2019general,cranmer2020lagrangian,greydanus2019hamiltonian,chen2019symplectic,roehrl2020modeling, lu2022ModLaNets, neary2023compositional, willard2020integrating, nghiem2023physics, djeumou2022neural, thorpe2023physics}  has been considered where prior knowledge about a physical system is integrated into the design of a machine learning model. Models designed with structure respecting kinematic constraints \cite{sanchez2018graph}, symmetry \cite{ruthotto2019deep,wang2020incorporating}, Lagrangian mechanics \cite{roehrl2020modeling, lutter2019deeplagrangian,gupta2019general,cranmer2020lagrangian,lutter2019deepunderactuated, lutter2023combining, lu2022ModLaNets} or Hamiltonian mechanics \cite{greydanus2019hamiltonian, bertalan2019learning, chen2019symplectic, finzi2020simplifying, zhong2020symplectic, willard2020integrating, neary2023compositional, beckers2022gaussian, beckers2023bayesian} guarantee that the laws of physics are satisfied by construction, regardless of the training data. 

Sanchez-Gonzalez et al. \cite{sanchez2018graph} design graph neural networks to represent the kinematic structure of complex dynamical systems and demonstrate forward model learning and online planning via gradient-based trajectory optimization. Ruthotto et al. \cite{ruthotto2019deep} propose a partial differential equation (PDE) interpretation of convolutional neural networks and derive new parabolic and hyperbolic ResNet architectures guided by PDE theory. Wang et al. \cite{wang2020incorporating} design symmetry equivariant neural network models, encoding rotation, scaling, and uniform motion, to learn physical dynamics that are robust to symmetry group distributional shifts. 

Lagrangian-based methods \cite{roehrl2020modeling, lutter2019deeplagrangian,gupta2019general,cranmer2020lagrangian,lutter2019deepunderactuated, lutter2023combining, lu2022ModLaNets} design neural network models for physical systems based on the Euler-Lagrange differential equations of motion \cite{lurie2013analytical,HolmBook}, in terms of generalized coordinates $\mathbf\frakq$, their velocity $\dot{\mathbf\frakq}$ and a Lagrangian function $\mathcal{L}(\mathbf\frakq, \dot{\mathbf\frakq})$, defined as the difference between the kinetic and potential energies. The energy terms are modeled by neural networks, either separately \cite{lutter2019deeplagrangian,lutter2019deepunderactuated, lutter2023combining} or together \cite{cranmer2020lagrangian}. 

Hamiltonian-based methods \cite{greydanus2019hamiltonian, bertalan2019learning, chen2019symplectic, finzi2020simplifying, zhong2020symplectic, willard2020integrating, neary2023compositional} use a Hamiltonian formulation \cite{lurie2013analytical,HolmBook} of the system dynamics, instead, in terms of generalized coordinates $\mathbf\frakq$, generalized momenta $\mathbf\frakp$, and a Hamiltonian function, $\mathcal{H}(\mathbf\frakq, \mathbf\frakp)$, representing the total energy of the system.
Greydanus et al. \cite{greydanus2019hamiltonian} model the Hamiltonian as a neural network and update its parameters by minimizing the discrepancy between its symplectic gradients and the time derivatives of the states $(\mathbf\frakq, \mathbf\frakp)$. This approach, however, requires that the state time derivatives are available in the training data set. Chen et al. \cite{chen2019symplectic}, Zhong et al. \cite{zhong2020symplectic} relax this assumption by using differentiable leapfrog integrators \cite{leimkuhler2004simulating} and differentiable ODE solvers \cite{chen2018neural}, respectively.  The need for time derivatives of the states is eliminated by back-propagating a loss function measuring state discrepancy through the ODE solvers via the adjoint method. \NEW{Our work extends the approach in \cite{zhong2020symplectic, zhong2020dissipative} by formulating the Hamiltonian dynamics over a matrix Lie group, which enforces kinematic constraints in the neural ODE network used to learn the dynamics.}
Toth et al. \cite{toth2019hamiltonian} and Mason et al. \cite{mason2022learning} show that, instead of from state trajectories, the Hamiltonian function can be learned from high-dimensional image observations. Finzi et al. \cite{finzi2020simplifying} show that using Cartesian coordinates with explicit constraints improves both the accuracy and data efficiency for the Lagrangian- and Hamiltonian-based approaches. In a closely related work, Zhong et al. \cite{zhong2020dissipative} showed that dissipating elements, such as friction or air drag, can be incorporated in a Hamiltonian-based neural ODE network by reformulating the system dynamics in port-Hamiltonian form \cite{van2014port}. The continuous-time equations of motions in Lagrangian or Hamiltonian dynamics can also be discretized using variational integrators \cite{MaWe2001} to learn discrete-time Lagrangian and Hamiltonian systems \cite{havens2021forced, duruisseaux2023lie, chen2021datadriven, so2022data} and provide long-term prediction for control methods such as model predictive control \cite{borrelli_MPC_book}. This approach eliminates the need to use an ODE solver to roll out the dynamics but its prediction accuracy depends on the discretization time step. \NEWW{Meanwhile, our work encodes not only the Hamiltonian structure but also the Lie group constraints, satisfied by the states of rigid-body robot systems, such as UGVs, UAVs and UUVs, in a neural ODE network to learn robot dynamics from data.}

While most existing dynamics learning approaches focus on Euclidean dynamics, many robot systems have states evolving on a matrix Lie group. \NEWW{Recent works on neural ODE networks \cite{falorsi2020neural, elamvazhuthi2023learning, lou2020neural, wotte2024optimal} on Lie groups are classified into either extrinsic \cite{falorsi2020neural, elamvazhuthi2023learning} or intrinsic methods \cite{lou2020neural, wotte2024optimal}. The extrinsic approach \cite{falorsi2020neural, elamvazhuthi2023learning} embeds a Lie group in a higher-dimensional space with Lie group constraints, enabling training with neural ODEs on Euclidean space. Our work belongs to the extrinsic approach by enforcing matrix Lie group constraints on its embedding space $\bbR^{n\times n}$, but focuses on incorporating the law of energy conservation, via a Hamiltonian formulation, in the dynamics model and providing control design for trajectory tracking. Meanwhile, the intrinsic approach \cite{lou2020neural, wotte2024optimal} develops adjoint methods for training using local coordinates on the Lie algebra, e.g. via local charts \cite{lou2020neural}, and therefore, guarantees Lie groups constraints by design.} \NEWW{Concurrently to our work, Wotte et al. \cite{wotte2024optimal} develop a neural ODE network on Lie group to learn Hamiltonian dynamics on the $SE(3)$ manifold by~deriving an adjoint method on the Lie algebra, offering an exciting approach for learning structure-preserving dynamics model.}

\NEWW{While few dynamics learning papers consider control design based on the learned model, we develop a general trajectory tracking controller for Lie group Hamiltonian dynamics}. The Hamiltonian formulation and its port-Hamiltonian generalization \cite{van2014port} are built around the notion of system energy and, hence, are naturally related to control techniques for stabilization aiming to minimize the total energy. Since the minimum point of the Hamiltonian might not correspond to a desired regulation point, the control design needs to inject additional energy to ensure that the minimum of the total energy is at the desired equilibrium. For fully-actuated \mbox{\NEWW{port-Hamiltonian}} systems, it is sufficient to shape the potential energy only using an energy-shaping and damping-injection (ES-DI) controller \cite{van2014port}. For underactuated systems, both the kinetic and potential energies needs to be shaped, e.g., via interconnection and damping assignment passivity-based control (IDA-PBC) \cite{van2014port,ortega2002stabilization,acosta2014robust,cieza2019ida}. Wang and Goldsmith \cite{wang2008modified} extend the IDA-PBC controller from stabilization to trajectory tracking. Closely related to our controller design, Souza et. al. \cite{souza2014passivity} apply this technique to design a controller for an underactuated quadrotor robot but use Euler angles as the orientation representation. Port-Hamiltonian structure and energy-based control design are also used to learn distributed control policies from state-control trajectories \cite{furieri2022distributed, galimberti2023hamiltonian, sebastian2023lemurs}.

We connect Hamiltonian-dynamics learning with the idea of IDA-PBC control to allow stabilization of any rigid-body robot without relying on its model parameters a priori. We design a trajectory-tracking controller for underactuated systems, \NEW{e.g., quadrotor robots}, based on the IDA-PBC approach and show how to construct desired pose and momentum trajectories given only desired position and yaw. We demonstrate the tight integration of dynamics learning and control to achieve closed-loop trajectory tracking with underactuated quadrotor robots.

\section{Problem Statement}
\label{sec:problem_statement}

Consider a robot with state $\bfx$ consisting of generalized coordinates $\frakq$ evolving on a Lie group $\sfG$ and generalized velocity $\bfxi$ on the Lie algebra $\frakg$ of $\sfG$. Let $\dot{\bfx} = \bff(\bfx, \bfu)$ characterize the robot dynamics with control input $\bfu \in \bbR^m$. For example, the state of rigid-body mobile robot, such as a UGV or UAV, may be modeled by its pose on the $SE(3)$ group, consisting of position and orientation, and its twist on the $\mathfrak{se}(3)$ Lie algebra, consisting of linear and angular velocity. The control input of an Ackermann-drive UGV may include its linear acceleration and steering angle rate, and that of a quadrotor UAV may include the total thrust and moment generated by the propellers. See Sec.~\ref{subsec:ham_dyn_lie_group} for more details.

We assume that the function $\bff$ specifying the robot dynamics is unknown and aim to approximate it using a dataset $\mathcal{D}$ of state and control trajectories. Specifically, let $\calD = \{t_{0:N}^{(i)}, \bfx^{(i)}_{0:N}, \bfu^{(i)}\}_{i=1}^D$ consist of $D$ state sequences $\bfx^{(i)}_{0:N}$, obtained by applying a constant control input $\bfu^{(i)}$ to the system with initial condition $\bfx^{(i)}_0$ at time $t_0^{(i)}$ and sampling its state $\bfx^{(i)}(t_n^{(i)}) =: \bfx^{(i)}_n$ at times $t_0^{(i)} < t_1^{(i)} < \ldots < t_N^{(i)}$. Using the dataset $\mathcal{D}$, we aim to find a function $\bar{\bff}_{\bftheta}$ with parameters $\bftheta$ that approximates the true dynamics $\bff$ well. To optimize $\bftheta$, we roll out the approximate dynamics $\bar{\bff}_{\bftheta}$ with initial state $\bfx_0^{(i)}$ and constant control $\bfu^{(i)}$ and minimize the discrepancy between the computed state sequence $\bar{\bfx}^{(i)}_{1:N}$ and the true state sequence $\bfx^{(i)}_{1:N}$ in $\calD$. 

\begin{problem}
\label{problem:dynamics_learning}
Given a dataset $\calD = \{t_{0:N}^{(i)}, \bfx^{(i)}_{0:N}, \bfu^{(i)}\}_{i=1}^D$ and a function $\bar{\bff}_{\bftheta}$, find the parameters $\bftheta$ that minimize:
\begin{equation}
\label{problem_formulation_unknown_env_equation}
\begin{aligned}
\min_{\bftheta} \;&\sum_{i = 1}^D \sum_{n = 1}^N \ell(\bfx^{(i)}_n,\bar{\bfx}^{(i)}_n)\\
\text{s.t.} \;\; & \dot{\bar{\bfx}}^{(i)}(t) = \bar{\bff}_{\bftheta}(\bar{\bfx}^{(i)}(t), \bfu^{(i)}), \;\;\bar{\bfx}^{(i)}(t_0) = \bfx^{(i)}_0,\\
& \bar{\bfx}^{(i)}_n = \bar{\bfx}^{(i)}(t_n), \;\;\forall n = 1, \ldots, N,\;\;\forall i = 1, \ldots, D,
\end{aligned}
\end{equation}
where $\ell$ is a distance metric on the state space.
\end{problem}

Further, we aim to design a feedback controller capable of tracking a desired state trajectory $\bfx^*(t)$, $t \geq t_0$, for the learned model $\bar{\bff}_{\bftheta}$ of the robot dynamics.

\begin{problem}
\label{problem:setpoint_reg}
Given an initial condition $\bfx_0$ at time $t_0$, desired state trajectory $\bfx^*(t)$, $t \geq t_0$, and learned dynamics $\bar{\bff}_{\bftheta}$, design a feedback control law $\bfu = \bfpi(\bfx, \bftheta, \bfx^*(t))$ such that $\limsup_{t \to \infty} \ell(\bfx(t),\bfx^*(t))$ is bounded.
\end{problem}

We consider robot kinematics on the Lie group $\sfG$ such that when there is no control input, $\bfu = \bf0$, the dynamics $\bff(\bfx, \bfu)$ respect the law of energy conservation. We embed these constraints in the structure of the parametric function $\bar{\bff}_{\bftheta}$. We review matrix Lie groups, with the $SE(3)$ manifold as an example, and Hamiltonian dynamics equations next.

\section{Preliminaries}
\label{sec:prelim}

\subsection{Matrix Lie Groups}
In this section, we cover the background needed to define Hamiltonian dynamics on a Lie group. Please refer to \cite{hall2013lie, lee2017global, marsden2013introduction} for a more detailed overview of matrix Lie groups.

\begin{definition}[Dot Product]\label{def:dot_product}
    The dot product $\langle \cdot, \cdot \rangle$ between two matrices $\bfxi$ and $\bfpsi$ in $\bfR^{n\times m}$ is \NEWW{can be chosen} as:
    \begin{equation}
        \langle\bfxi, \bfpsi \rangle = \tr(\bfxi^\top\bfpsi).
    \end{equation}
\end{definition}

\NEWW{The dot product definition above is used to define the dual maps in Def. \ref{def:left_translation} and \ref{def:coadjoint}, and loss functions in Sec. \ref{subsec:training} and \ref{subsec:SE3_dyn_learning}.}

\begin{definition}[General Linear Group \cite{hall2013lie}]
    The general linear group $\mathsf{GL}(n,\bbR)$ is the group of $n\times n$ invertible real matrices.
\end{definition}

\begin{definition}[Matrix Lie Group \cite{hall2013lie}] 
    A matrix Lie group $\sfG$ is a subgroup of $\mathsf{GL}(n,\bbR)$ with identity element $\bfe$ such that if any sequence of matrices $\{A_n\}_{n=0}^\infty$ in $\sfG$ converges to a matrix $A$, then either $A$ is in $\sfG$ or $A$ is not invertible. A matrix Lie group is also a smooth embedded submanifold on $\bbR^{n\times n}$.
\end{definition}

\begin{definition}[Tangent Space and Bundle]
    The tangent space $\sfT_\frakq\sfG$ is the set of all tangent vectors $\bfxi$ to the manifold $\sfG$ at $\frakq$. The tangent bundle $\sfT\sfG$ is the set of all the pairs $(\frakq, \bfxi)$ with $\frakq \in \sfG$ and $\bfxi \in \sfT_\frakq\sfG$.
\end{definition}

\begin{definition}[Lie Algebra and Lie Bracket]
    A Lie algebra is a vector space $\frakg$, equipped with a Lie bracket operator $[\cdot,\cdot]: \frakg \times \frakg \rightarrow \frakg$ that satisfies:
    \begin{equation*}
        \begin{aligned}
            &\text{bilinearity:}\phantom{-} [a\bfxi_1 + b\bfxi_2, \bfxi_3] = a [\bfxi_1, \bfxi_3] + b [\bfxi_2, \bfxi_3], \\
            &\phantom{bilinearity:} [\bfxi_3, a\bfxi_1 + b\bfxi_2] = a [\bfxi_3, \bfxi_1] + b [\bfxi_3, \bfxi_2],\\
            &\text{skew-symmetry:}\phantom{-} [\bfxi_1, \bfxi_2] = -[\bfxi_2, \bfxi_1],\\
            &\text{Jacobi identity:} \\
            &\phantom{bilinearity:} [\bfxi_1, [\bfxi_2, \bfxi_3]] + [\bfxi_2, [\bfxi_3, \bfxi_1]] + [\bfxi_3, [\bfxi_1, \bfxi_2]] = 0.
        \end{aligned}
    \end{equation*}
\end{definition}


Every matrix Lie group $\sfG$ is associated with a Lie algebra $\frakg$, which is the tangent space at the identity element $\sfT_\bfe \sfG$. An element $\frakq \in \sfG$ is linked with an element $\bfxi \in \frakg$ via the exponential map $\exp_{\sfG} : \frakg \rightarrow \sfG$ and the logarithm map $\log_{\sfG} : \sfG \rightarrow \frakg$ \cite{hall2013lie}. Since tangent spaces of $\sfG$, and in particular the Lie algebra $\frakg$, are isomorphic to Euclidean space, we can define a linear mapping $(\cdot)^{\wedge}: \bbR^n \rightarrow \frakg$ and its inverse $(\cdot)^\vee: \sfg \rightarrow \bbR^n$, where $n$ is the dimension of $\sfG$. Thus, we can map between $\sfG$ and $\bbR^n$ using the compositions:
\begin{equation}
    \exp^\wedge_\sfG = \exp_\sfG \circ \;^\wedge, \quad \log_\sfG^\vee = \;^\vee \circ \log_\sfG.
\end{equation}

\NEWW{In this paper, we consider a matrix Lie group element $\frakq \in \sfG$ embedded in $\bbR^{n\times n}$, instead of its $n$-dimensional representations, e.g., $\log_\sfG^\vee \frakq$, due to potential issues of discontinuity \cite{zhou2019continuity} and singularity \cite{BarfootBook, hemingway2018perspectives}. For example, Zhou. et al. \cite{zhou2019continuity} show that higher-dimensional representations of rotations, e.g., $(n^2 -n)$ dimensions for $SO(n)$, are more suitable for learning using neural networks because they ensure continuity.}

\begin{definition} [Left Translation and Invariant Vectors] \label{def:left_translation}
    The left translation $\sfL_{\frakq}: \sfG \rightarrow \sfG$ with $\frakq \in \sfG$ is defined as: 
    \begin{equation}
        \sfL_\frakq(\bfh) = \frakq \bfh.
    \end{equation}
    The left-invariant vector $\sfT_\bfe \sfL_{\frakq}(\bfxi)$ is defined as the derivative of the left translation $\sfL_{\frakq}$ at $\bfh=\bfe$ in the direction of $\bfxi$.
    This vector describes the kinematics of the Lie group, which relates the velocity $\bfxi \in \frakg$ to the change $\dot{\frakq} \in \sfT_\frakq\sfG$ of coordinates $\frakq$:
    \begin{equation}
        \dot{\frakq} = \sfT_\bfe\sfL_\frakq (\bfxi) = \frakq \bfxi.
    \end{equation}
\end{definition}
\NEWW{Given a pairing $\langle \cdot,\cdot\rangle$ on $\frakg^* \times \frakg$ (e.g., Def.~\ref{def:dot_product}), the dual map $\sfT_\bfe^*\sfL_\frakq$ of $\sfT_\bfe\sfL_\frakq$ satisfies
\begin{equation} \label{eq:dual_TeLq}
     \langle\sfT_\bfe^*\sfL_\frakq (\bfeta), \bfxi\rangle = \langle\bfeta, \sfT_\bfe\sfL_\frakq (\bfxi)\rangle,
\end{equation}
for any $\bfeta \in \frakg^*$ and $\bfxi \in \frakg$.}

\begin{definition}[Adjoint Operator]
    \NEWW{For $\frakq \in \sfG$}, the adjoint $\sfAd_\frakq: \frakg \rightarrow \frakg$ is defined as:
    \begin{equation}
        \sfAd_\frakq(\bfpsi) = \frakq \bfpsi \frakq^{-1}.
    \end{equation}
    The algebra adjoint $\sfad_{\bfxi}: \frakg \rightarrow \frakg$ is the directional derivative of $\sfAd_\frakq$ at $\frakq = \bfe$ in the direction of $\bfxi \in \frakg$:
    \begin{equation}
        \sfad_{\bfxi}(\bfpsi) = \left.{\frac{d}{dt}\sfAd_{\exp_{\sfG}({t\bfxi})}(\bfpsi)}\right|_{t=0} = [\bfxi, \bfpsi].
    \end{equation}
\end{definition}

\begin{definition}[Cotangent Space and Bundle]
    The dual space of the tangent space $\sfT_\frakq\sfG$, i.e., the space of all linear functionals from $\sfT_\frakq\sfG$ to $\bbR$, is called the cotangent space $\sfT^*_\frakq\sfG$. At the identity $\bfe$, the cotangent space of the Lie algebra $\frakg = \sfT_\bfe\sfG$ is denoted $\frakg^*$. The cotangent bundle $\sfT^*\sfG$ is the set of all the pairs $(\frakq, \bf\frakp)$ with $\frakq \in \sfG$ and $\frakp \in \sfT^*_\frakq\sfG$.
\end{definition}

\begin{definition}[Coadjoint Operator] \label{def:coadjoint}
    \NEWW{For $\frakq \in \sfG$}, the coadjoint $\sfAd^*_\frakq: \frakg^* \rightarrow \frakg^*$ is defined as $\langle \sfAd^*_\frakq(\bfvarphi), \bfpsi\rangle = \langle \bfvarphi, \sfAd_\frakq(\bfpsi)\rangle$, \NEWW{where $\bfvarphi \in \frakg^*$, $\bfpsi \in \frakg$ and $\langle \cdot,\cdot\rangle$ is a pairing on $\frakg^* \times \frakg$}. The algebra coadjoint $\sfad^*_{\bfxi}: \frakg^* \rightarrow \frakg^*$ is the dual map of $\sfad_{\bfxi}$, satisfying $\langle \sfad^*_{\bfxi}(\bfvarphi), \bfpsi\rangle = \langle \bfvarphi, \sfad_{\bfxi}(\bfpsi)\rangle$.
\end{definition}

 The next section describes the $SE(3)$ Lie group to illustrate the definitions above. The $SE(3)$ Lie group is used to represent the position and orientation of a rigid body.

\subsection{Example: SE(3) Manifold}
\label{subsec:SO3_SE3_kinematics}

Consider a fixed world inertial frame of reference and a rigid body with a body-fixed frame attached to its center of mass. The pose of the body-fixed frame in the world frame is determined by the position $\bfp = [x,y,z]^\top \in \bbR^3$ of the center of mass and the orientation of the body-fixed frame's coordinate axes:
\begin{equation}
\label{eq:rotmat_def}
\bfR = \begin{bmatrix}
\bfr_1 & \bfr_2 & \bfr_3
\end{bmatrix}^\top \in SO(3),
\end{equation}
where $\bfr_1, \bfr_2, \bfr_3 \in \mathbb{R}^3$ are the rows of the rotation matrix $\bfR$. A rotation matrix is an element of the special orthogonal group:
\begin{equation}
\label{eq:SO3_def}
SO(3) = \left\{\bfR \in \bbR^{3\times 3} : \bfR^\top \bfR = \bfI, \det(\bfR) = 1\right\}.
\end{equation}
The rigid-body position and orientation can be combined in a single pose matrix $\frakq\in SE(3)$, which is an element of the special Euclidean group:
\begin{equation}
\label{eq:SE3_def}
SE(3) = \left\{ \begin{bmatrix}
\bfR & \bfp \\
\bf0^\top & 1
\end{bmatrix} \in \mathbb{R}^{4\times 4}: \bfR \in SO(3), \bfp \in \mathbb{R}^3\right\}.
\end{equation}
The kinematic equations of motion of the rigid body are determined by the linear velocity $\bfv \in \bbR^3$ and angular velocity $\bfomega \in \bbR^3$ of the body-fixed frame with respect to the world frame, expressed in body-frame coordinates. The generalized velocity $\bfzeta = [\bfv^\top\!,\, \bfomega^\top]^\top \in \mathbb{R}^6$ determines the rate of change of the rigid-body pose according to the $SE(3)$ kinematics:
\begin{equation}
\label{eq:pose_kinematics}
\dot{\frakq} = \frakq \bfxi = \frakq\hat{\bfzeta} =: \frakq\begin{bmatrix}
\hat{\bfomega} & \bfv\\
\bf0^\top & 0
\end{bmatrix},
\end{equation}
where we overload $\hat{\cdot}$ to denote the mapping from a vector $\bfzeta \in \bbR^6$ to a $4 \times 4$ twist matrix $\bfxi = \hat{\bfzeta}$ in the Lie algebra $\mathfrak{se}(3)$ of $SE(3)$ and from a vector $\bfomega \in \bbR^3$ to a $3 \times 3$ skew-symmetric matrix $\hat{\bfomega}$ in the Lie algebra $\mathfrak{so}(3)$ of $SO(3)$:
\begin{equation}
\label{eq:hatmap}
\hat{\bfomega} = \begin{bmatrix}
0 & -\omega_3 & \omega_2 \\
\omega_3 & 0 & -\omega_1 \\
-\omega_2 & \omega_1 & 0 
\end{bmatrix}.
\end{equation}
Please refer to \cite{BarfootBook} for an excellent introduction to the use of $SE(3)$ in robot state estimation problems.

\subsection{Hamiltonian Dynamics on Matrix Lie Groups}
\label{subsec:ham_dyn_lie_group}

In this section, we describe Hamilton's equations of motion on a matrix Lie group \cite{marsden2013introduction,lee2017global}. Our neural network architecture design in Sec. \ref{sec:data_gen_net_design} is based on Lie group Hamiltonian dynamics, which \NEWW{encode} both kinematic constraints and energy conservation. 

Consider a system with generalized coordinates $\frakq$ in a matrix Lie group $\sfG$ and generalized velocity $\dot{\frakq} \in \sfT_\frakq\sfG$. The dynamics of the state $\bfx = (\frakq, \dot{\frakq}) \in \sfT\sfG$ satisfy:
\begin{equation} \label{eq:kinematics_lie_group}
    \dot{\frakq} = \sfT_\bfe\sfL_\frakq(\bfxi) = \frakq \bfxi,
\end{equation}
where $\bfxi$ is a element in the Lie algebra $\frakg$. 

The Lagrangian on a Lie group $\calL : \sfG \times \frakg \rightarrow \bbR$ is defined as the difference between the kinetic energy $\calT : \sfG \times \frakg \rightarrow \bbR$ and the potential energy $\calV : \sfG \rightarrow \bbR$:
\begin{equation}
    \calL(\frakq, \bfxi) = \calT(\frakq, \bfxi) - \calV(\frakq).
\end{equation}
The Hamiltonian is obtained using a Legendre transformation:
\begin{equation} \label{eq:ham_legendre}
    \calH(\frakq, \frakp) = \frakp \cdot \bfxi - \calL(\frakq, \bfxi),
\end{equation}
where the momentum $\frakp$ is defined as:
\begin{equation} \label{eq:p_lie_alg}
    \frakp = \frac{\partial \calL (\frakq, \bfxi)}{\partial \bfxi}.
\end{equation}

The state $(\frakq, \frakp) \in \sfT^*\sfG$ evolves according to the Hamiltonian dynamics \cite{lee2017global} as:
\begin{subequations}\label{eq:ham_dyn_lie_group}
    \begin{align}
        \dot{\frakq} &= \sfT_\bfe\sfL_\frakq\left(\frac{\partial \calH (\frakq, \frakp)}{\partial \frakp}\right), \label{eq:ham_dyn_lie_group_q}\\
        \dot{\frakp} &= \sfa\sfd^*_{\bfxi} (\frakp) - \sfT_\bfe^*\sfL_\frakq\left(\frac{\partial \calH (\frakq, \frakp)}{\partial \frakq}\right) + \bfB(\frakq)\bfu.\label{eq:ham_dyn_lie_group_p}
    \end{align}
\end{subequations}

\NEWW{Obtaining explicit expressions for $\sfa\sfd^*_{\bfxi} (\frakp)$ and $\sfT_\bfe^*\sfL_\frakq\left(\bfeta\right)$ with $\frakq \in \sfG$, $\bfxi \in \frakg$, and $\frakp,\bfeta \in \frakg^*$ depends on the structure of the matrix Lie algebra $\frakg$ and the pairing $\langle \cdot,\cdot\rangle$ on $\frakg^* \times \frakg$. Appendix \ref{subsec:derivation_dual_map} provides details and an example for the $SE(3)$ manifold.}

By comparing \eqref{eq:kinematics_lie_group} and \eqref{eq:ham_dyn_lie_group}, we have:
\begin{equation} \label{eq:xi_ham}
    \bfxi = \frac{\partial \calH (\frakq, \frakp)}{\partial \frakp}.
\end{equation}
Let $\bfeta = \frac{\partial \calH (\frakq, \frakp)}{\partial \frakq}$. When there is no control input, i.e., $\bfu = 0$, the conservation of energy is guaranteed as:
\begin{align} \label{eq:dH_dt}
    \frac{d \calH(\frakq, \frakp)}{dt} &= \langle \bfeta, \dot{\frakq}\rangle + \langle\bfxi, \dot{\frakp}\rangle,\notag\\
    &= \langle\bfeta, \sfT_\bfe\sfL_\frakq\left(\bfxi\right)\rangle - \langle\bfxi, \sfT_\bfe^*\sfL_\frakq\left(\bfeta\right)\rangle + \langle\bfxi, \sfa\sfd^*_{\bfxi}(\frakp)\rangle,\notag\\
    &= 0,
\end{align}
because of Eq. \eqref{eq:dual_TeLq} and, by definition,
\begin{equation}
    \langle\bfxi, \sfa\sfd^*_{\bfxi}(\frakp)\rangle = \langle\sfa\sfd_{\bfxi}(\bfxi), \frakp\rangle = \langle[\bfxi, \bfxi], \frakp\rangle = 0.
\end{equation}

\subsection{Reformulation as Port-Hamiltonian Dynamics} \label{subsec:port_ham_dyn}

The notion of energy in dynamical systems is shared across multiple domains, including mechanical, electrical, and thermal. A port-Hamiltonian generalization \cite{van2014port} of Hamiltonian mechanics is used to model systems with energy-storing elements (e.g., kinetic and potential energy), energy-dissipating elements (e.g., friction or resistance), and external energy sources (e.g., control inputs), connected via energy ports. An input-state-output port-Hamiltonian system has the form:
\begin{equation}
\label{eq:port_Hal_dyn}
\begin{bmatrix}
\dot{\frakq} \\
\dot{\frakp} \\
\end{bmatrix}
= (\mathbf\calJ(\frakq, \frakp) - \mathbf\calR(\frakq, \frakp))
\begin{bmatrix}
\frac{\partial \mathcal{H}}{\partial\frakq} \\
\frac{\partial \mathcal{H}}{\partial\frakp} 
\end{bmatrix} + \mathbf\calG(\frakq, \frakp)\bfu,
\end{equation}
where $\mathbf\calJ(\frakq, \frakp)$ is a skew-symmetric interconnection matrix, representing the energy-storing elements, $\mathbf\calR(\frakq, \frakp) \succeq 0$ is a positive semi-definite dissipation matrix, representing the energy-dissipating elements, and $\calG(\frakq, \frakp)$ is an input matrix such that $\mathbf\calG(\frakq, \frakp)\bfu$ represents the external energy sources. In the absence of energy-dissipating elements and external energy sources, the skew-symmetry of $\mathbf\calJ(\frakq, \frakp)$ guarantees the energy conservation of the system.

To model energy dissipating elements such as friction or drag forces, we reformulate the Hamiltonian dynamics on a matrix Lie group \eqref{eq:ham_dyn_lie_group} in port-Hamiltonian form \eqref{eq:port_Hal_dyn}.
Such elements are often modeled \cite{faessler2017differential} as a linear transformation $\bfD(\frakq, \frakp) \succeq 0$ of the velocity $\bfxi$ and only affect the generalized momenta $\frakp$, i.e.,
\begin{equation}
\bf\mathcal{R}(\frakq, \frakp) = \begin{bmatrix}
\bf0 & \bf0\\
\bf0 & \bfD(\frakq, \frakp)
\end{bmatrix}.
\end{equation}
The Hamiltonian dynamics \eqref{eq:ham_dyn_lie_group} is a special case of \eqref{eq:port_Hal_dyn}, where the dissipation matrix is $\bfD(\frakq, \frakp) = \bf0$, the input matrix is $\calG(\frakq, \frakp) = \begin{bmatrix}\bf0^\top & \bfB(\frakq)^\top\end{bmatrix}^\top$ and the interconnection matrix  $\calJ(\frakq, \frakp)$ can be obtained by rearranging \eqref{eq:ham_dyn_lie_group} with $\bfu = 0$ and is guaranteed to be skew-symmetric due to the energy conservation \eqref{eq:dH_dt}. \NEWW{In an implementation, the coordinates $\frakq$ and momentum $\frakp$ may be represented as vectors in $\bbR^{n^2}$, leading to an interconnection matrix $\calJ(\frakq, \frakp) \in \bbR^{2n^2 \times 2n^2}$.}

\subsection{Example: Hamiltonian Dynamics on the SE(3) Manifold}
\label{subsec:ham_dyn_se3}

In this section, we consider the generalized coordinate $\frakq$ of a mobile robot consisting of its position $\bfp \in \mathbb{R}^3$ and orientation $\bfR\in SO(3)$. Let $\frakq = (\bfp, \bfR)$ be the generalized coordinates and $\bfzeta = (\bfv, \bfomega) \in \bbR^6$ be the generalized velocity, consisting of the body-frame linear velocity $\bfv \in \bbR^3$ and the body-frame angular velocity $\bfomega\in \mathbb{R}^3$. The coordinate $\frakq$ evolves on the Lie group $SE(3)$ while the generalized velocity satisfies $\dot{\frakq} = \frakq\bfxi$, where $\bfxi = \hat{\bfzeta}$ is a twist matrix in $\frakse(3)$, as shown in Eq. \eqref{eq:pose_kinematics}.

The isomorphism between $\frakse(3)$ and $\bbR^6$ via \eqref{eq:pose_kinematics} simplifies the Hamiltonian \eqref{eq:ham_dyn_lie_group} and its port-Hamiltonian formulation \eqref{eq:port_Hal_dyn} as follows. The Lagrangian function on $SE(3)$ can be expressed in terms of $\frakq$ and $\bfzeta$, instead of $\frakq$ and $\bfxi$:
\begin{equation} \label{eq:lagrangian_angvel_linvel}
\mathcal{L}(\frakq, \bfzeta) = \frac{1}{2} \bfzeta^\top \bfM(\frakq)\bfzeta -\calV(\frakq).
\end{equation}
%
The generalized mass matrix has a block-diagonal form when the body frame is attached to the center of mass \cite{lee2017global}:
\begin{equation}
\label{eq:mass_matrix_hamiltonian}
\bfM(\frakq) = \begin{bmatrix}
\bfM_\bfv(\frakq) & \bf0 \\
\bf0 & \bfM_{\bfomega}(\frakq)
\end{bmatrix} \in \bbS_{\succ0}^{6 \times 6},
\end{equation}
where $\bfM_\bfv(\frakq), \bfM_{\bfomega}(\frakq) \in \bbS_{\succ0}^{3 \times 3}$.
The generalized momenta are defined, as before, via the partial derivative of the Lagrangian with respect to the twist:
\begin{equation}
\label{eq:momenta_Mtwist}
\frakp = \begin{bmatrix} {\frakp}_{\bfv} \\ {\frakp}_{\bfomega} \end{bmatrix} = \frac{\partial \mathcal{L}(\frakq, \bfzeta)}{\partial \bfzeta} = \bfM(\frakq)\bfzeta \in \mathbb{R}^6.
\end{equation}
%
The Hamiltonian function of the system becomes:
\begin{equation} \label{eq:hamiltonian_se3}
    \calH(\frakq, \frakp) = \frakp \cdot \bfzeta - \calL(\frakq, \bfzeta) = \frac{1}{2} \frakp^\top \bfM^{-1}(\frakq)\frakp + \calV(\frakq).
\end{equation}

By vectorizing the generalized coordinates $\frakq = [\bfp^\top \quad \bfr_1^\top \quad \bfr_2^\top \quad \bfr_3^\top]^\top$, the Hamiltonian dynamics on $SE(3)$ can be described in port-Hamiltonian form \eqref{eq:port_Hal_dyn} \cite{lee2017global, forni2015port, rashad2019port} with interconnection matrix:
\begin{equation}
\label{eq:SE3_PH_J}
\bf\mathcal{J}(\frakq, \frakp) = \begin{bmatrix}
\bf0 & \frakq^{\times} \\
-\frakq^{\times\top} & \frakp^{\times} 
\end{bmatrix}, \qquad \frakp^{\times} = \begin{bmatrix}
\bf0 & \hat{\frakp}_{\bfv}\\
\hat{\frakp}_{\bfv} & \hat{\frakp}_{\bfomega}
\end{bmatrix},
\end{equation}
and input matrix $\qquad \mathcal{G}(\frakq, \frakp) = \begin{bmatrix} \bf0^\top & \bfB(\frakq)^\top \end{bmatrix}^\top$,
where $\frakq^{\times} = \begin{bmatrix}
\bfR^\top\!\!\!\! & \bf0 & \bf0 & \bf0 \\
\bf0 & \hat{\bfr}_1^\top & \hat{\bfr}_2^\top & \hat{\bfr}_3^\top
\end{bmatrix}^\top$. \NEWW{Note that the kinematics constraints $\dot{\frakq} = \frakq \bfxi$ of the coordinates $\frakq$ on $SE(3)$, which guarantee that the coordinates $\frakq$ remain in $SE(3)$, are not affected by vectorization}. The port-Hamiltonian formulation allows us to model dissipation elements in the dynamics by the dissipation matrix:
\begin{equation}
\label{eq:dissipation_matrix_hamiltonian}
\bfD(\frakq, \frakp) = \begin{bmatrix}
\bfD_\bfv(\frakq, \frakp) & \bf0 \\
\bf0 & \bfD_{\bfomega}(\frakq, \frakp)
\end{bmatrix} \in \bbS_{\succ0}^{6 \times 6},
\end{equation}
where the components $\bfD_\bfv(\frakq, \frakp)$ and $\bfD_{\bfomega}(\frakq, \frakp)$ correspond to $\frakp_\bfv$ and $\frakp_{\bfomega}$, respectively. 
The equations of motions on the $SE(3)$ manifold are written in port-Hamiltonian form as:
\begin{subequations} \label{eq:portham_dyn_SE3}
\begin{align}
    \;\dot{\bfp} &=\;\;\bfR\frac{\partial{\mathcal{H}(\frakq, \frakp)}}{\partial \frakp_{\bfv}}, \label{eq:ham_se3_pos_dot}\\
    \dot{\bfr_i} &=\;\;\bfr_i \times \frac{\partial{\mathcal{H}(\frakq, \frakp)}}{\partial \frakp_{\bfomega}}, \quad i = 1,2,3 \label{eq:ham_se3_rot_dot}\\
    \dot{\frakp}_{\bfv} &=\;\;\frakp_{\bfv}\times \frac{\partial{\mathcal{H}(\frakq, \frakp)}}{\partial \frakp_{\bfomega}} - \bfR^\top \frac{\partial{\mathcal{H}(\frakq, \frakp)}}{\partial \bfp} \label{eq:ham_se3_pv_dot}\\
    & \qquad -\;\bfD_\bfv(\frakq, \frakp) \frac{\partial{\mathcal{H}(\frakq, \frakp)}}{\partial \frakp_{\bfv}} +  \bfb_{\bfv}(\frakq)\bfu, \notag \\
    \dot{\frakp}_{\bfomega} &=\;\;\frakp_{\bfomega} \times \frac{\partial{\mathcal{H}(\frakq, \frakp)}}{\partial \frakp_{\bfomega}} + \frakp_{\bfv}\times \frac{\partial{\mathcal{H}(\frakq, \frakp)}}{\partial \frakp_{\bfv}} +  \label{eq:ham_se3_pw_dot}\\
     &\quad\sum_{i = 1}^3 \bfr_i \times \frac{\partial{\mathcal{H}(\frakq, \frakp)}}{\partial \bfr_i} - \bfD_{\bfomega}(\frakq, \frakp) \frac{\partial{\mathcal{H}(\frakq, \frakp)}}{\partial \frakp_{\bfomega}} + \bfb_{\bfomega}(\frakq)\bfu, \notag 
\end{align}
\end{subequations}
where the input matrix is $\bfB(\frakq) = \begin{bmatrix} \bfb_{\bfv}(\frakq)^\top & \bfb_{\bfomega}(\frakq)^\top \end{bmatrix}^\top$.

\subsection{Neural ODE Networks}
\label{subsec:neuralode}

In this section, we briefly describe neural ODE networks \cite{chen2018neural}, which approximate the closed-loop dynamics $\dot{\bfx} = \bff(\bfx,\bfpi(\bfx))$ of a system for some unknown control policy $\bfu = \bfpi(\bfx)$ by a neural network $\bar{\bff}_{\bftheta}(\bfx)$. The parameters of $\bar{\bff}_{\bftheta}(\bfx)$ are trained using a dataset $\calD = \{t^{(i)}_{0:N}, \bfx_{0:N}^{(i)}\}_i$ of state trajectory samples $\bfx_n^{(i)} = \bfx^{(i)}(t_n^{(i)})$ via forward and backward passes through a differentiable ODE solver, where the backward passes provide the gradient of the loss function. Given an initial state $\bfx_0^{(i)}$ at time $t_0^{(i)}$, a forward pass returns predicted states at times $t_1^{(i)}, \ldots, t_N^{(i)}$:
\begin{equation}
\label{eq:ode_solver}
\{\bar{\bfx}_1^{(i)}, \ldots, \bar{\bfx}_N^{(i)}\} = \text{ODESolver}(\bfx_0^{(i)}, \bar{\bff}_{\bftheta}, t_1^{(i)}, \ldots, t_N^{(i)}).
\end{equation}
The gradient of a loss function, $\sum_{i=1}^D\sum_{j=1}^N \ell(\bfx_j^{(i)},\bar{\bfx}_j^{(i)})$, is back-propagated by solving another ODE with adjoint states. The parameters $\bftheta$ are updated by gradient descent to minimize the loss.
For physical systems, Zhong et al. \cite{zhong2020symplectic} extends the neural ODE by integrating the Hamiltonian dynamics on $\bbR^n$ into the neural network model $\bar{\bff}_{\bftheta}(\bfx)$, and consider zero-order hold control input $\bfu$, leading to a neural ODE network with the following approximated dynamics:
\begin{equation}
\label{eq:gen_dyn_state_input}
\begin{bmatrix}
\dot{\bfx}\\
\dot{\bfu}
\end{bmatrix} = \begin{bmatrix}
\bar{\bff}_{\bftheta}(\bfx, \bfu)\\
\bf0
\end{bmatrix}.
\end{equation} 

\NEWW{Recently, neural ODE networks have been extended from Euclidean space to Lie groups \cite{falorsi2020neural, elamvazhuthi2023learning, lou2020neural, wotte2024optimal}, guaranteeing that the Lie group constraints are satisfied by the predicted states by design. While it is possible to train our Hamiltonian dynamics model using a Lie group neural ODE network, we leave this investigation for future work due to the lack of suitable open-source software for Lie group ODE integration.}

\section{Learning Lie Group Hamiltonian Dynamics}
\label{sec:data_gen_net_design}

We consider a Hamiltonian system with unknown kinetic energy $\calT(\frakq)$, potential energy $\calV(\frakq)$, input matrix $\bfB(\frakq)$, dissipation matrix $\bfD(\frakq, \frakp)$, and design a structured neural ODE network to learn these terms from state-control trajectories.

\subsection{Data Collection}
\label{subsubsec:data_gen}

We collect a data set $\calD = \{t_{0:N}^{(i)}, \bfx^{(i)}_{0:N}, \bfu^{(i)}\}_{i=1}^D$ consisting of state sequences $\bfx^{(i)}_{0:N}$, where $\bfx_n^{(i)}~=~[\frakq_n^{(i)\top} \quad \bfxi_n^{(i)\top}]^\top$ for $n = 0, \ldots, N$. Such data are generated by applying a constant control input $\bfu^{(i)}$ to the system and sampling the state $\bfx_{n}^{(i)} = \bfx^{(i)}(t_n^{(i)})$ at times $t_n^{(i)}$ for $n = 0, \ldots, N$. The generalized coordinates $\frakq$ and velocity $\bfxi$ may be obtained from a state estimation algorithm, such as odometry algorithm for mobile robots \cite{vio_benchmark,OdometrySurvey}, or from a motion capture system. In physics-based simulation the data can be generated by applying random control inputs $\bfu^{(i)}$. In real-world applications, where safety is a concern, data may be collected by a human operator manually controlling the robot.

\subsection{Model Architecture}

Since robots are physical systems, their dynamics $\bff(\bfx,\bfu)$ satisfy the Hamiltonian formulation (Sec. \ref{subsec:ham_dyn_lie_group}). To learn the dynamics $\bff(\bfx,\bfu)$ from a trajectory dataset $\calD$, we design a neural ODE network (Sec. \ref{subsec:neuralode}), approximating the dynamics via a parametric function $\bar{\bff}_{\bftheta}(\bfx,\bfu)$ based on Eq. \eqref{eq:ham_dyn_lie_group}. 

\begin{figure}[t]
\centering
\includegraphics[width=\linewidth]{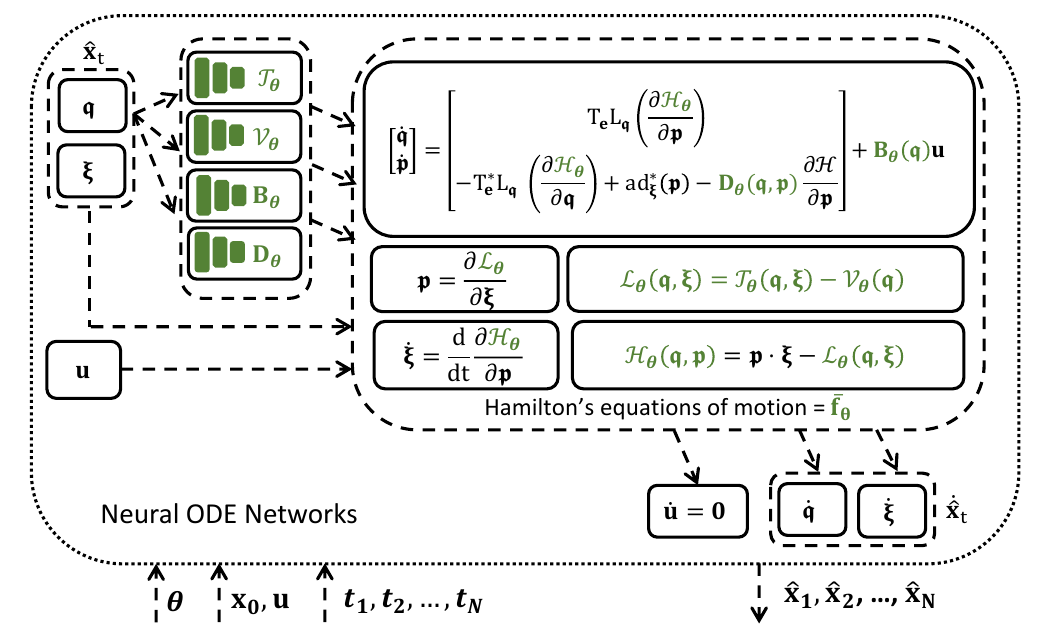}
\caption{\NEWW{Architecture of port-Hamiltonian neural ODE network on matrix Lie group. The trainable terms are shown in green.}}
\label{fig:model_arch}
\end{figure}

To integrate the Hamiltonian equations into the structure of $\bar{\bff}_{\bftheta}(\bfx,\bfu)$, we use four neural networks with parameters $\bftheta = (\bftheta_{\calT}, \bftheta_{\calV}, \bftheta_{\bfD}, \bftheta_{\bfB})$ to approximate the kinetic energy by $\calT_{\bftheta}(\frakq, \bfxi)$, the potential energy by $\calV_{\bftheta}(\frakq)$, the dissipation matrix by $\bfD_{\bftheta}(\frakq, \frakp)$, and the input matrix by $\bfB_{\bftheta}(\frakq)$, respectively. Since the generalized momenta $\frakp$ are not directly available in $\calD$, the time derivative of the generalized velocity $\bfxi$ is obtained from Eq. \eqref{eq:xi_ham}.  
The approximated dynamics function $\bar{\bff}_{\bftheta}(\bfx,\bfu)$ is described with an internal state $\frakp$ as follows:
\begin{subequations}\label{eq:learned_ham_dyn_lie_group}
    \begin{align}
        \dot{\frakq} &= \sfT_\bfe\sfL_\frakq\left(\frac{\partial \calH_{\bftheta} (\frakq, \frakp)}{\partial \frakp}\right), \label{eq:learned_ham_dyn_lie_group_q}\\
        \dot{\frakp} &= \sfa\sfd^*_{\bfxi} (\frakp) - \bfD_{\bftheta}(\frakq, \frakp)\frac{\partial \calH_{\bftheta} (\frakq, \frakp)}{\partial \frakp} \notag \\
        & \qquad -\sfT_\bfe^*\sfL_\frakq\left(\frac{\partial \calH_{\bftheta} (\frakq, \frakp)}{\partial \frakq}\right) + \bfB_{\bftheta}(\frakq)\bfu,\label{eq:learned_ham_dyn_lie_group_p} \\
        \dot{\bfxi} &= \frac{d}{dt}\frac{\partial \calH_{\bftheta}(\frakq, \frakp)}{\partial \frakp},
    \end{align}
\end{subequations}
where $\calH_{\bftheta}(\frakq, \frakp) = \frakp \cdot \bfxi - \calL_{\bftheta}(\frakq, \bfxi)$, and $\calL_{\bftheta}(\frakq, \bfxi) = \calT_{\bftheta}(\frakq, \bfxi) - \calV_{\bftheta}(\frakq)$.
The time derivative $\frac{d}{dt}\frac{\partial \calH_{\bftheta} (\frakq, \frakp)}{\partial \frakp}$ is calculated using automatic differentiation, e.g. by Pytorch \cite{pytorch}. The approximated dynamics function $\bar{\bff}_{\bftheta}(\bfx,\bfu)$ is implemented in a neural ODE network architecture for training, shown in Fig. \ref{fig:model_arch}.

\subsection{Training Process}
\label{subsec:training}

Let $\bar{\bfx}^{(i)}(t)$ denote the state trajectory predicted with control input $\bfu^{(i)}$ by the approximate dynamics $\bar{\bff}_{\bftheta}$ initialized at $\bar{\bfx}^{(i)}(t_0^{(i)}) = \bfx_0^{(i)}$. For sequence $i$, forward passes through the ODE solver in \eqref{eq:ode_solver} return the predicted states $\bar{\bfx}^{(i)}_{0:N}$ at times $t^{(i)}_{0:N}$, where $\bar{\bfx}_n^{(i)} = [\bar{\frakq}_n^{(i)\top} \quad \bar{\bfxi}_n^{(i)\top}]^\top$, for $n = 1, \ldots, N$. The predicted coordinates $\bar{\frakq}_n^{(i)}$ and the ground-truth ones $\frakq_n^{(i)}$ are used to calculate a loss on the Lie group manifold:
\begin{equation} \label{eq:loss_function}
L_{\frakq}(\bftheta) = \sum_{i = 1}^D \sum_{n = 1}^N \left\|\log_\sfG^\vee \left(\bar{\frakq}_n^{(i)} \left(\frakq_n^{(i)}\right)^{-1}\right) \right\|_2^2.
\end{equation}
We use the squared Euclidean norm to calculate losses for the generalized velocity terms: 
\begin{equation}
L_{\bfxi}(\bftheta) = \sum_{i = 1}^D \sum_{n = 1}^N \Vert \bfxi_n^{(i)} - \bar{\bfxi}_n^{(i)} \Vert^2_2.
\end{equation}
The total loss $L(\bftheta)$ is defined as:
\begin{equation}
L(\bftheta)= L_{\frakq}(\bftheta) + L_{\bfxi}(\bftheta).
\end{equation}
The gradient of the total loss function $L(\bftheta)$ is back-propagated by solving an ODE with adjoint states \cite{chen2018neural}. Specifically, let $\bfa = \frac{\partial L}{\partial \bar{\bfx}}$ be the adjoint state and $\bfs = (\bar{\bfx}, \bfa, \frac{\partial L}{\partial \bftheta})$ be the augmented state. The augmented state dynamics are \cite{chen2018neural}:
\begin{equation}
\dot{\bfs} = \bar{\bff}_{\bfs} = (\bar{\bff}_{\bftheta}, -\bfa^\top\frac{\partial \bar{\bff}_{\bftheta}}{\partial \bar{\bfx}}, -\bfa^\top\frac{\partial \bar{\bff}_{\bftheta}}{\partial \bftheta}).
\end{equation}
The predicted state $\bar{\bfx}$, the adjoint state $\bfa$, and the derivatives $\frac{\partial L}{\partial \bftheta}$ can be obtained by a single call to a reverse-time ODE solver starting from $\bfs_N = \bfs({t_N})$:
\begin{equation}
\bfs_0 = \left(\bar{\bfx}_0, \bfa_0, \frac{\partial \calL}{\partial \bftheta}\right) = \text{ODESolver}(\bfs_N, \bar{\bff}_\bfs, t_N),
\end{equation}
where at each time $t_k, k = 1, \ldots, N$, the adjoint state $\bfa_{k}$ at time $t_k$ is reset to $ \frac{\partial L}{\partial \bar{\bfx}_{k}}$. The resulting derivative $ \frac{\partial L}{\partial \bftheta}$ is used to update the parameters $\bftheta$ using gradient descent.  \NEWW{Note that even though the Lie group and Hamiltonian structures are preserved in the continuous-time dynamics function $\bar{\bff}$, the Lie group constraints might still be violated by the predicted states when the model is trained with a neural ODE network on the embedding space $\bbR^{n\times n}$ instead of on the matrix Lie group \cite{lou2020neural, wotte2024optimal}. To prevent this, we use a high-order integrator, such as $5$th-order Runge-Kutta \cite{dormand1980family}, in the Euclidean space neural ODE, for which open-source software is available \cite{chen2018neural,lienen2022torchode}. Investigating how to train Hamiltonian models with a neural ODE network defined directly on the Lie group \cite{iserles2000lie, lou2020neural, wotte2024optimal} is an interesting future direction. For example, the Lie group neural ODE by Wotte et al. \cite{wotte2024optimal} offers an approach to guarantee Lie group constraints in the dynamics model. Closely related to this direction, Duruisseaux et al. \cite{duruisseaux2023lie} learn discrete-time Hamiltonian dynamics while guaranteeing Lie group constraints by design using variational integration}.

\subsection{Application to SE(3) Hamiltonian Dynamics Learning} \label{subsec:SE3_dyn_learning}

This section applies our Lie group Hamiltonian dynamics learning approach to estimate mobile robot dynamics on the $SE(3)$ manifold (Sec. \ref{subsec:ham_dyn_se3}).

\paragraph*{Neural ODE model architecture}
When the Hamiltonian dynamics in \eqref{eq:ham_dyn_lie_group} are defined on the $SE(3)$ manifold, the equations of motion become \eqref{eq:portham_dyn_SE3}. The neural ODE network architecture in  \eqref{eq:learned_ham_dyn_lie_group} is simplified as follows.
We use five neural networks with parameters $\bftheta = (\bftheta_{\bfv}, \bftheta_{\bfomega}, \bftheta_{V}, \bftheta_{\bfD}, \bftheta_{\bfB})$ to approximate the blocks $\bfM^{-1}_{\bfv;{\bftheta}}(\frakq)$, $\bfM^{-1}_{\bfomega;{\bftheta}}(\frakq)$ of the inverse generalized mass in \eqref{eq:mass_matrix_hamiltonian}, the potential energy $\calV_{\bftheta}(\frakq)$, the dissipation matrix $\bfD_{\bftheta}(\frakq, \frakp)$, and the input matrix $\bfB_{\bftheta}(\frakq)$, respectively. The approximated kinetic energy is calculated as $\calT_{\bftheta}(\frakq, \frakp) = \frac{1}{2}\frakp^\top \bfM^{-1}_{\bftheta}(\frakq)\frakp$, where $\bfM_{\bftheta}(\frakq) = \diag (\bfM_{\bfv;{\bftheta}}(\frakq), \bfM_{\bfomega;{\bftheta}}(\frakq))$.

\paragraph*{Neural network design}
In many applications, nominal information is available about the generalized mass matrices $\bfM^{-1}_{\bfv;{\bftheta}}(\frakq)$, $\bfM^{-1}_{\bfomega;{\bftheta}}(\frakq)$, the potential energy $\calV_{\bftheta}(\frakq)$, the dissipation matrix $\bfD_{\bftheta}(\frakq, \frakp)$, and the input matrix $\bfB_{\bftheta}(\frakq)$, and can be included in the neural network design.

Let $\bfM^{-1}_{\bfv 0}(\frakq)$, $\bfM^{-1}_{\bfomega 0}(\frakq)$, and $\bfD_0(\frakq, \frakp)$ be the nominal values of the generalized mass matrices $\bfM^{-1}_{\bfv;\bftheta}(\frakq)$, $\bfM^{-1}_{\bfomega;\bftheta}(\frakq)$ and the dissipation matrix $\bfD_{\bftheta}(\frakq, \frakp)$ with Cholesky decomposition:
\begin{equation}
\begin{aligned}
    \bfM^{-1}_{\bfv 0}(\frakq) &= \bfL_{\bfv 0}(\frakq) \bfL^\top_{\bfv 0}(\frakq),\\
    \bfM^{-1}_{\bfomega 0}(\frakq) &=\bfL_{\bfomega 0}(\frakq) \bfL^\top_{\bfomega 0}(\frakq),\\
    \bfD_{0}(\frakq) &=\bfL_{\bfD 0}(\frakq) \bfL^\top_{\bfD 0}(\frakq).
\end{aligned}
\end{equation}
The learned terms $\bfM^{-1}_{\bfv;{\bftheta}}(\frakq)$, $\bfM^{-1}_{\bfomega;{\bftheta}}(\frakq)$, and $\bfD_{\bftheta}(\frakq, \frakp)$ are obtained using Cholesky decomposition:
\begin{equation}
\label{eq:M_cholesky}
\begin{aligned}
\bfM^{-1}_{\bfv;{\bftheta}}(\frakq) &= \prl{\bfL_{\bfv 0}(\frakq) + \bfL_{\bfv}(\frakq)}\prl{\bfL_{\bfv 0}(\frakq) + \bfL_{\bfv}(\frakq)}^\top + \varepsilon_{\bfv} \bfI,\\
\bfM^{-1}_{\bfomega;{\bftheta}}(\frakq) &= \prl{\bfL_{\bfomega 0}(\frakq) + \bfL_{\bfomega}(\frakq)}\prl{\bfL_{\bfomega 0}(\frakq) + \bfL_{\bfomega}(\frakq)}^\top + \varepsilon_{\bfomega} \bfI,\\
\bfD_{\bftheta}(\frakq, \frakp) &= \prl{\bfL_{\bfD 0}(\frakq, \frakp) \!+\! \bfL_{\bfD}(\frakq, \frakp)}\prl{\bfL_{\bfD 0}(\frakq, \frakp) \!+\! \bfL_{D}(\frakq, \frakp)}^\top
\end{aligned}
\end{equation}
where $\bfL_{_{\bfv}}(\frakq)$, $\bfL_{\bfomega}(\frakq)$, and $\bfL_{\bfD}(\frakq, \frakp)$ are lower-triangular matrices implemented as three neural networks with parameters $\bftheta_{\bfv}$, $\bftheta_{\bfomega}$, and $\bftheta_{\bfD}$ respectively, and $\varepsilon_{\bfv}, \varepsilon_{\bfomega} > 0$.

The potential energy $\calV(\frakq)$ and the input matrix $\bfB(\frakq)$ are implemented with nominal values $\calV_0(\frakq)$ and $\bfB_0(\frakq)$ as follows:
\begin{equation}
    \begin{aligned}
        \calV_{\bftheta}(\frakq) &= \calV_0(\frakq) + \text{L}_{\calV}(\frakq),\\
        \bfB_{\bftheta}(\frakq) &= \bfB_0(\frakq) + \bfL_\bfB(\frakq),
    \end{aligned}
\end{equation}
where $\text{L}_\calV(\frakq)$ and $\bfL_\bfB(\frakq)$ are two neural networks with parameters $\bftheta_\calV$ and $\bftheta_\bfB$, respectively.

\paragraph*{Loss function}

The orientation loss is calculated as:
\begin{equation}
L_{\bfR}(\bftheta) = \sum_{i = 1}^D \sum_{n = 1}^N \left\|\log_{SO(3)}^\vee (\bar{\bfR}_n^{(i)} \bfR_n^{(i)\top}) \right\|_2^2,
\end{equation}
We use the squared Euclidean norm to calculate losses for the position and generalized velocity terms: 
\begin{equation}
\begin{aligned}
L_{\bfp}(\bftheta) &=& \sum_{i = 1}^D \sum_{n = 1}^N \Vert \bfp_n^{(i)} - \bar{\bfp}_n^{(i)} \Vert^2_2,\\
L_{\bfzeta}(\bftheta) &=& \sum_{i = 1}^D \sum_{n = 1}^N \Vert \bfzeta_n^{(i)} - \bar{\bfzeta}_n^{(i)} \Vert^2_2.
\end{aligned}
\end{equation}
The total loss $\calL(\bftheta)$ is defined as:
\begin{equation}
L(\bftheta)= L_{\bfR}(\bftheta) + L_{\bfp}(\bftheta) + L_{\bfzeta}(\bftheta).
\end{equation}

\section{Energy-based Control Design}
\label{sec:controller_design}

The function $\bar{\bff}_{\bftheta}$ \eqref{eq:learned_ham_dyn_lie_group} learned in Sec.~\ref{sec:data_gen_net_design} satisfies the port-Hamiltonian dynamics in Eq. \eqref{eq:port_Hal_dyn} by design. This section extends the interconnection and damping assignment passivity-based control (IDA-PBC) approach \cite{van2014port, wang2008modified, souza2014passivity} to Lie groups to achieve trajectory tracking (Problem \ref{problem:setpoint_reg}) based on the learned port-Hamiltonian dynamics. We further derive a tracking controller specifically for learned Hamiltonian dynamics on the $SE(3)$ manifold in Sec. \ref{subsec:SE3_dyn_learning}. In the remainder of the paper, we omit the subscript $\bftheta$ in the learned model for readability.

\subsection{IDA-PBC Control Design for Trajectory Tracking}
Consider a desired regulation point $(\frakq^*, \frakp^*) \in \sfT^*\sfG$ that the system should be stabilized to. The Hamiltonian function $\mathcal{H}(\frakq, \frakp)$, representing the total energy of the system, generally does not have a minimum at $(\frakq^*, \frakp^*)$. An IDA-PBC controller \cite{van2014port,ortega2002stabilization,wang2008modified} is designed to inject additional energy $\calH_a(\frakq, \frakp)$ such that the desired total energy:
\begin{equation}
\mathcal{H}_d(\frakq, \frakp) = \mathcal{H}(\frakq, \frakp) + \mathcal{H}_a(\frakq, \frakp)
\end{equation}
achieves its minimum at $(\frakq^*, \frakp^*)$. In other words, the closed-loop system obtained by applying the controller to the port-Hamiltonian dynamics in \eqref{eq:port_Hal_dyn} should have the form:
\begin{equation}
\label{eq:desired_port_Hal_dyn}
\begin{bmatrix}
\dot{\frakq} \\
\dot{\frakp} \\
\end{bmatrix}
= (\mathcal{J}_d(\frakq, \frakp) - \mathcal{R}_d(\frakq, \frakp))
\begin{bmatrix}
\frac{\partial \mathcal{H}_d}{\partial \frakq} \\
\frac{\partial \mathcal{H}_d}{\partial \frakp} 
\end{bmatrix}.
\end{equation}
to ensure that $(\frakq^*, \frakp^*)$ is an equilibrium. The control input $\bfu$ should be chosen so that \eqref{eq:port_Hal_dyn} and \eqref{eq:desired_port_Hal_dyn} are equal. This matching equation design does not directly apply to trajectory tracking, especially for underactuated systems \cite{souza2014passivity, wang2008modified}.

Consider a desired trajectory $(\frakq^*(t), \frakp^*(t))$ that the system should track.
Let $(\frakq_e(t), \frakp_e(t))$ denote the error in the generalized coordinates and momentum, respectively, where $\frakq_e = (\frakq^*)^{-1}\frakq \in \sfG$ and $\frakp_e = \frakp - \frakp^* \in \sfT_{\frakq_e}^*\sfG$. For trajectory tracking, the desired total energy $\calH_d(\frakq_e, \frakp_e)$ is defined in terms of the error state, with desired closed-loop dynamics:
\begin{equation}\label{eq:desired_port_Hal_dyn_tracking}
\begin{bmatrix}
\dot{\frakq}_e \\
\dot{\frakp}_e \\
\end{bmatrix}
= (\mathcal{J}_d(\frakq_e, \frakp_e) - \mathcal{R}_d(\frakq_e, \frakp_e))
\begin{bmatrix} 
\frac{\partial \mathcal{H}_d}{\partial \frakq_e} \\
\frac{\partial \mathcal{H}_d}{\partial \frakp_e} 
\end{bmatrix}.
\end{equation}
Matching \eqref{eq:desired_port_Hal_dyn_tracking} with \eqref{eq:port_Hal_dyn} leads to the following requirement for the control input:
\begin{align}
\mathbf\calG(\frakq, \frakp)\bfu = (&\mathcal{J}_d(\frakq_e, \frakp_e) - \mathcal{R}_d(\frakq_e, \frakp_e))
\begin{bmatrix}
\frac{\partial \mathcal{H}_d}{\partial \frakq_e} \\
\frac{\partial \mathcal{H}_d}{\partial \frakp_e} 
\end{bmatrix} \label{eq:matching_eqn_tracking}\\
- (&\mathbf\calJ(\frakq, \frakp) - \mathbf\calR(\frakq, \frakp))
\begin{bmatrix}
\frac{\partial \mathcal{H}}{\partial \frakq} \\
\frac{\partial \mathcal{H}}{\partial \frakp} 
\end{bmatrix} + \begin{bmatrix}
\dot{\frakq} \\
\dot{\frakp}
\end{bmatrix} - \begin{bmatrix}
\dot{\frakq}_e \\
\dot{\frakp}_e
\end{bmatrix}. \notag
\end{align}
The control input can be obtained from \eqref{eq:matching_eqn_tracking} as the sum $\bfu = \bfu_{ES} + \bfu_{DI}$ of an energy-shaping component $\bfu_{ES}$ and a damping-injection component $\bfu_{DI}$:
\begin{subequations} \label{eq:control_policy_lie_group}
    \begin{align}
        \bfu_{ES} &= \mathbf\calG^{\dagger}(\frakq, \frakp)\left((\mathcal{J}_d(\frakq_e, \frakp_e))
\begin{bmatrix}
\frac{\partial \mathcal{H}_d}{\partial \frakq_e} \\
\frac{\partial \mathcal{H}_d}{\partial \frakp_e} 
\end{bmatrix}  - \begin{bmatrix}
\dot{\frakq}_e \\
\dot{\frakp}_e
\end{bmatrix} \right.  \label{eq:general_u_ES_lie}\\
& \left. \qquad - (\mathbf\calJ(\frakq, \frakp) - \mathbf\calR(\frakq, \frakp))
\begin{bmatrix}
\frac{\partial \mathcal{H}}{\partial \frakq} \\
\frac{\partial \mathcal{H}}{\partial \frakp} 
\end{bmatrix} + \begin{bmatrix}
\dot{\frakq} \\
\dot{\frakp}
\end{bmatrix} \right), \notag \\
\bfu_{DI} &= -\mathbf\calG^{\dagger}(\frakq, \frakp)\mathcal{R}_d(\frakq_e,\frakp_e)\begin{bmatrix}
\frac{\partial \mathcal{H}_d}{\partial \frakq_e} \\
\frac{\partial \mathcal{H}_d}{\partial \frakp_e} 
\end{bmatrix} , \label{eq:general_u_DI_lie}
    \end{align}
\end{subequations}
where $\mathbf\calG^{\dagger}(\frakq, \frakp) = \left(\mathbf\calG^{\top}(\frakq, \frakp)\mathbf\calG(\frakq, \frakp)\right)^{-1}\mathbf\calG^{\top}(\frakq, \frakp)$ is the pseudo-inverse of $\mathbf\calG(\frakq, \frakp)$. The control input $\bfu_{ES}$ exists as long as the desired interconnection matrix $\mathcal{J}_d$, dissipation matrix $\mathcal{R}_d$, and total energy $\mathcal{H}_d$ satisfy the following matching condition for all $(\frakq,\frakp)\in\sfT^*\sfG$ and $(\frakq_e,\frakp_e)\in\sfT^*\sfG$:
\begin{align}\label{eq:matching_condition}
\calG^{\dagger}(\frakq,\frakp)&\left(\mathcal{J}_d(\frakq_e, \frakp_e) - \mathcal{R}_d(\frakq_e, \frakp_e))
\begin{bmatrix}
\frac{\partial \mathcal{H}_d}{\partial \frakq_e} \\
\frac{\partial \mathcal{H}_d}{\partial \frakp_e} 
\end{bmatrix} \right.\\
- &\left.(\calJ(\frakq, \frakp) - \calR(\frakq, \frakp))
\begin{bmatrix}
\frac{\partial \mathcal{H}}{\partial \frakq} \\
\frac{\partial \mathcal{H}}{\partial \frakp} 
\end{bmatrix} + \begin{bmatrix}
\dot{\frakq} \\
\dot{\frakp}
\end{bmatrix} - \begin{bmatrix}
\dot{\frakq}_e \\
\dot{\frakp}_e
\end{bmatrix}\right) = 0. \notag
\end{align}
where $\calG^{\perp}(\frakq, \frakp)$ is a maximal-rank left annihilator of $\calG(\frakq, \frakp)$, i.e., $\calG^{\perp}(\frakq, \frakp)\calG(\frakq, \frakp) = \bf0$.

\subsection{Tracking Control for Port-Hamiltonian Dynamics on the SE(3) Manifold}
Consider a desired state trajectory $\bfx^*(t) = (\frakq^*(t), \bfzeta^*(t))$ that the system should track where $\frakq^*(t) \in 
SE(3)$ is the desired pose and $\bfzeta^*(t) = \begin{bmatrix}\bfv^*(t)^\top & \bfomega^*(t)^\top\end{bmatrix}^\top$ is the desired generalized velocity expressed in the desired frame. Let $\frakp^* = \bfM \begin{bmatrix} \bfR^\top \bfR^*\bfv^* \\ \bfR^\top \bfR^* \bfomega^*\end{bmatrix}$ denote the desired momentum, defined based on \eqref{eq:momenta_Mtwist} with the desired velocity expressed in the body frame. Let $\bfp_e = \bfp - \bfp^*$ and $\bfR_e = \bfR^{*\top} \bfR = \begin{bmatrix}
\bfr_{e1} & \bfr_{e2} & \bfr_{e3}
\end{bmatrix}^\top$ be the position error and rotation error between the current orientation $\bfR$ and the desired one $\bfR^*$, respectively. The vectorized error $\frakq_e$ in the generalized coordinates is:
\begin{equation}\label{eq:qe_def}
\frakq_e = \begin{bmatrix} \bfp_e^{*\top} &
\bfr_{e1}^\top & \bfr_{e2}^\top & \bfr_{e3}^\top \end{bmatrix}^\top.
\end{equation}

The error in the generalized momenta is $\frakp_e = \frakp - \frakp^*$, described in the body frame. The desired total energy is defined in terms of the error state as: 
\begin{equation}
\label{eq:desired_Hamil_tracking}
\calH_d(\frakq_e, \frakp_e) = \frac{1}{2}\frakp_e^\top \bfM_d^{-1}(\frakq_e)\frakp_e + V_d(\frakq_e),
\end{equation}
where $\bfM_d(\frakq_e)$ and $V_d(\frakq_e)$ are the desired generalized mass and potential energy. 

Choosing the following desired inter-connection matrix and dissipation matrix:
\begin{equation}\label{eq:JdRdChoice}
\mathcal{J}_d(\frakq_e, \frakp_e) = \begin{bmatrix}
\bf0 & \bfJ_1 \\
-\bfJ_1^\top & \bfJ_2\end{bmatrix}, \quad
\mathcal{R}_d(\frakq_e, \frakp_e) = \begin{bmatrix}
\bf0 & \bf0 \\
\bf0 & \bfK_\bfd 
\end{bmatrix},
\end{equation}
and plugging $\calJ(\frakq, \frakp)$ and $\calR(\frakq, \frakp)$ from \eqref{eq:SE3_PH_J} into the matching equations in \eqref{eq:matching_eqn_tracking}, leads to: 
\begin{eqnarray}
\bf0 &=& \bfJ_1 \frac{\partial \calH_d}{\partial \frakp_e} - \frakq^{\times} \frac{\partial \calH}{\partial \frakp} + \dot{\frakq} - \dot{\frakq}_e, \label{eq:tracking_cond1} \\
\bfB(\frakq)\bfu &=& \frakq^{\times\top}\frac{\partial \calH}{\partial \frakq} - \bfJ_1^\top \frac{\partial \calH_d}{\partial \frakq_e} + \bfJ_2\frac{\partial \calH_d}{\partial \frakp_e} - \frakp^{\times}\frac{\partial \calH}{\partial \frakp} \nonumber \\
&& - \bfK_\bfd\frac{\partial \calH_d}{\partial \frakp_e} + \bfD(\frakq, \frakp)\frac{\partial \calH}{\partial \frakp} + \dot{\frakp} - \dot{\frakp}_e. \label{eq:tracking_cond2} 
\end{eqnarray}
Assuming $\bfM_d(\frakq_e) = \bfM(\frakq)$, \eqref{eq:tracking_cond1} is satisfied if we choose $\bfJ_1 =\begin{bmatrix}
	\bfR^\top\!\!\!\! & \bf0 & \bf0 & \bf0 \\
	\bf0 & \hat{\bfr}_{e1}^\top & \hat{\bfr}_{e2}^\top & \hat{\bfr}_{e3}^\top
	\end{bmatrix}^\top$. 
Indeed, we have $\dot{\frakq} = \frakq^\times \frac{\partial \calH}{\partial \frakp}$ (from (11) and (19)) and $$\frac{\partial \calH_d}{\partial \frakp_e} = \bfM^{-1}_d \frakp_e = \begin{bmatrix} \bfv_e \\
			    \bfomega_e \end{bmatrix} = \begin{bmatrix} \bfv - \bfR^\top \bfR^*\bfv^* \\
			   \bfomega - \bfR^\top \bfR^*\bfomega^* \end{bmatrix}.$$

The error dynamics becomes:
\begin{equation}
\dot{\frakq}_e = \begin{bmatrix} \dot{\bfp} - \dot{\bfp}^* \\
\dot{\bfr}_{e1} \\ \dot{\bfr}_{e2} \\ \dot{\bfr}_{e3} \end{bmatrix} = \begin{bmatrix}
\bfR^\top\!\!\!\! & \bf0 & \bf0 & \bf0 \\
\bf0 & \hat{\bfr}_{e1}^\top & \hat{\bfr}_{e2}^\top & \hat{\bfr}_{e3}^\top
\end{bmatrix}^\top \begin{bmatrix} \bfv_e \\
\bfomega_e \end{bmatrix} = \bfJ_1 \frac{\partial \calH_d}{\partial \frakp_e},
\end{equation}
since $\dot{\bfR}_e = \frac{d}{dt} (\bfR_e) = \bfR_e \widehat{\bfomega}_e$ as shown in \cite[Sec. III-A]{lee2010geometric} and $\dot{\bfp} - \dot{\bfp}^*~=~\bfR\bfv - \bfR^* \bfv^* = \bfR \bfv_e$.

	
The desired control input can be obtained from \eqref{eq:tracking_cond2} as $\bfu = \bfu_{ES} + \bfu_{DI}$ with:
\begin{subequations}\label{eq:control_policy_SE3}
\begin{align}
\bfu_{ES} &= \bfB^{\dagger}(\frakq)\left(\frakq^{\times\top}\frac{\partial \calH}{\partial \frakq} - \mathbf\bfJ_1^{\top}\frac{\partial \calH_d}{\partial \frakq_e} + \bfJ_2\frac{\partial \calH_d}{\partial \frakp_e} \right. \label{eq:general_u_ES} \\
& \left.  \qquad- \frakp^{\times}\frac{\partial \calH}{\partial \frakp} + \bfD(\frakq, \frakp)\frac{\partial \calH}{\partial \frakp} + \dot{\frakp} - \dot{\frakp}_e \right), \notag \\
\bfu_{DI} &= -\bfB^{\dagger}(\frakq)\bfK_\bfd\frac{\partial \calH_d}{\partial \frakp_e}, \label{eq:general_u_DI}
\end{align}
\end{subequations}
where $\bfB^{\dagger}(\frakq) = \left(\bfB^{\top}(\frakq)\bfB(\frakq)\right)^{-1}\bfB^{\top}(\frakq)$ is the pseudo-inverse of $\bfB(\frakq)$. 
\NEW{The matching condition \eqref{eq:matching_condition} becomes:
\begin{equation}
\begin{aligned}
    \bfB^\perp(\frakq)\left(\frakq^{\times\top}\frac{\partial \calH}{\partial \frakq} - \bfJ_1^{\top}\frac{\partial \calH_d}{\partial \frakq_e} + \bfJ_2\frac{\partial \calH_d}{\partial \frakp_e}\right. & \\
- \left.\frakp^{\times}\frac{\partial \calH}{\partial \frakp} + \dot{\frakp} - \dot{\frakp}_e \right) &= 0. \label{eq:matching_condition_se3}
\end{aligned}
\end{equation}}
In this paper, we reshape the open-loop Hamiltonian $\mathcal{H}(\frakq, \frakp)$ into the following desired total energy $\mathcal{H}_d(\frakq_e, \frakp_e)$, minimized along the desired trajectory:
\begin{align} \label{eq:desired_ham_SE3}
\mathcal{H}_d&(\frakq_e, \frakp_e) = \frac{1}{2}(\bfp - \bfp^*)^\top\bfK_\bfp(\bfp - \bfp^*)  \\
&+ \frac{1}{2} \tr(\bfK_{\bfR}(\bfI - \bfR^{*\top}\bfR)) + \frac{1}{2}(\frakp-\frakp^*)^\top\bfM^{-1}(\frakq)(\frakp-\frakp^*),\notag
\end{align}
where $\bfK_\bfp, \bfK_\bfR \succ 0$ are positive-definite matrices.

For an $SE(3)$ rigid-body system with constant generalized mass matrix $\bfM_d = \bfM$ and $\bfJ_{2} = 0$, which is a common choice, the energy-shaping term in \eqref{eq:general_u_ES} and the damping-injection term in \eqref{eq:general_u_DI} simplify as:
\begin{align}
\label{eq:idapbc_pose_twist_tracking}
\bfu_{ES}(\frakq, \frakp) &= \bfB^{\dagger}(\frakq)\left(\frakq^{\times\top} \frac{\partial V}{\partial \frakq} - \prl{\frakp^{\times}- \bfD(\frakq, \frakp)}\bfM^{-1}\frakp \right. \notag\\
&\qquad\qquad\qquad\qquad\qquad \left. - \bfe(\frakq,\frakq^*) + \dot{\frakp}^* \vphantom{\frac{\partial V}{\partial \frakq}}\right), \notag\\
\bfu_{DI}(\frakq, \frakp) &= -\bfB^{\dagger}(\frakq) \bfK_\bfd  \bfM^{-1}(\frakp-\frakp^*),
\end{align}
where the generalized coordinate error between $\frakq$ and $\frakq^*$ is:
\begin{equation}
\label{eq:coordinate-error}
\begin{aligned}
\bfe(\frakq,\frakq^*) := \bfJ_1^{\top}\frac{\partial  V_d}{\partial \frakq_e} = \begin{bmatrix} \bfR^\top\bfK_\bfp(\bfp - \bfp^*) \\ \frac{1}{2}\prl{\bfK_{\bfR}\bfR^{*\top}\bfR-\bfR^\top\bfR^{*}\bfK_{\bfR}^\top}^{\vee}\end{bmatrix},
\end{aligned}
\end{equation}
and the derivative of the desired momentum is:
\begin{equation}
    \dot{\frakp}^* = \bfM \begin{bmatrix} \bfR^\top \ddot{\bfp}^* - \hat{\bfomega}\bfR^\top \dot{\bfp}^* \\
\bfR^\top \bfR^* \dot{\bfomega}^* - \hat{\bfomega}_e\bfR^\top\bfR^* \bfomega^* \end{bmatrix}.
\end{equation}
By expanding the terms in  \eqref{eq:idapbc_pose_twist_tracking}, we have: 
\begin{eqnarray}
\frakp^{\times}\bfM^{-1}\frakp &=& \frakp^{\times} \bfzeta = \begin{bmatrix}
\hat{\frakp}_\bfv \bfomega \\
\hat{\frakp}_{\bfomega} \bfomega + \hat{\frakp}_\bfv \bfv
\end{bmatrix}, \\
\bfM^{-1}(\frakp-\frakp^*) &=& \begin{bmatrix}
\bfv - \bfR^{\top}\dot{\bfp}^*  \\
\bfomega - \bfR^\top \bfR^* \bfomega^*
\end{bmatrix}, \\
\frakq^{\times\top} \frac{\partial \calV}{\partial \frakq} &=& \begin{bmatrix}
\bfR^{\top} \frac{\partial \calV(\frakq)}{\partial \bfp} \\
\sum_{i = 1}^3 \hat{\bfr}_i \frac{\partial \calV(\frakq)}{\partial \bfr_i}
\end{bmatrix}.
\end{eqnarray}

\begin{theorem} \label{thm:stability_analysis}
\NEW{Consider a port-Hamiltonian system on the $SE(3)$ manifold with dynamics \eqref{eq:portham_dyn_SE3}. Assume that the matching condition \eqref{eq:matching_condition_se3} is satisfied, the desired momentum's derivative $\dot{\frakp}^*$ is bounded, and the matrices $\bfK_\bfp$, $\bfK_\bfR$, and $\bfK_\bfd$ are positive-definite. The control policy in \eqref{eq:control_policy_SE3} leads to closed-loop error dynamics in \eqref{eq:desired_port_Hal_dyn_tracking}, \eqref{eq:JdRdChoice}. The tracking error $(\frakq_e, \frakp_e) = \prl{(\bfp_e, \bfR_e), \frakp_e}$ asymptotically stabilizes to $\prl{(\bf0, \bfI), \bf0}$ with Lyapunov function given by the desired Hamiltonian $\calH_d(\frakq_e,\frakp_e)$ in \eqref{eq:desired_ham_SE3}.}
\end{theorem}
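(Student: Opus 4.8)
The plan is to use the desired Hamiltonian $\mathcal{H}_d(\frakq_e,\frakp_e)$ from \eqref{eq:desired_ham_SE3} as a Lyapunov function for the closed-loop error dynamics and conclude by an invariance argument. First I would verify that substituting $\bfu=\bfu_{ES}+\bfu_{DI}$ from \eqref{eq:control_policy_SE3} into the open-loop dynamics \eqref{eq:portham_dyn_SE3} reproduces the target error dynamics \eqref{eq:desired_port_Hal_dyn_tracking} with $\mathcal{J}_d,\mathcal{R}_d$ as in \eqref{eq:JdRdChoice}. This is essentially true by construction: $\bfu$ was obtained by solving the matching equations \eqref{eq:tracking_cond1} and \eqref{eq:tracking_cond2}; the choices $\bfM_d=\bfM$ and $\bfJ_1$ already make \eqref{eq:tracking_cond1} hold; and the matching condition \eqref{eq:matching_condition_se3} guarantees that the residual of \eqref{eq:tracking_cond2} lies in the range of $\bfB(\frakq)$, so that $\bfB^{\dagger}(\frakq)$ recovers the exact required input.

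Next I would argue that $\mathcal{H}_d$ is a legitimate Lyapunov candidate, i.e.\ smooth and locally positive definite in the error coordinates about $((\bf0,\bfI),\bf0)$: the position term $\tfrac12(\bfp-\bfp^*)^\top\bfK_\bfp(\bfp-\bfp^*)$ and the kinetic term $\tfrac12\frakp_e^\top\bfM^{-1}(\frakq)\frakp_e$ are positive-definite quadratic forms since $\bfK_\bfp\succ0$ and $\bfM\succ0$, while $\tfrac12\tr(\bfK_\bfR(\bfI-\bfR_e))$ is the standard configuration error on $SO(3)$, nonnegative and vanishing only at $\bfR_e=\bfI$ in a neighborhood of the identity with positive-definite Hessian there. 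Then I would differentiate $\mathcal{H}_d$ along \eqref{eq:desired_port_Hal_dyn_tracking}: skew-symmetry of $\mathcal{J}_d$ (with $\bfJ_2=\bf0$) cancels the interconnection term, leaving $\dot{\mathcal{H}}_d=-\big(\tfrac{\partial\mathcal{H}_d}{\partial\frakp_e}\big)^\top\bfK_\bfd\,\tfrac{\partial\mathcal{H}_d}{\partial\frakp_e}=-(\bfM^{-1}\frakp_e)^\top\bfK_\bfd(\bfM^{-1}\frakp_e)\le0$ since $\bfK_\bfd\succ0$.

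For the invariance step, on the set $\{\dot{\mathcal{H}}_d=0\}$ we have $\frakp_e\equiv\bf0$, hence $\dot{\frakp}_e\equiv\bf0$; substituting $\frakp_e=\bf0$ into the $\dot{\frakp}_e$ equation (the kinetic part of $\tfrac{\partial\mathcal{H}_d}{\partial\frakq_e}$ then vanishes) forces $\bfe(\frakq,\frakq^*)=\bfJ_1^{\top}\tfrac{\partial V_d}{\partial\frakq_e}=\bf0$, which by \eqref{eq:coordinate-error}, invertibility of $\bfR$, and $\bfK_\bfp\succ0$ yields $\bfp=\bfp^*$ and $\bfK_\bfR\bfR_e=\bfR_e^\top\bfK_\bfR$; near $\bfR_e=\bfI$ the latter holds only at $\bfR_e=\bfI$. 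Hence the largest invariant subset of a sufficiently small sublevel set of $\mathcal{H}_d$ is $\{((\bf0,\bfI),\bf0)\}$, so LaSalle's invariance principle gives local asymptotic convergence of $(\frakq_e,\frakp_e)$ to it.

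The hard part is that the error dynamics are non-autonomous — $\bfR^*(t)$ enters $\bfJ_1$ and $\dot{\frakp}^*(t)$ enters the control — so classical LaSalle does not apply verbatim. I would handle this via a LaSalle--Yoshizawa/Barbalat argument: $\mathcal{H}_d$ is also decrescent and $\dot{\mathcal{H}}_d\le-W(\frakp_e)$ for a positive-definite $W$, so $(\frakq_e,\frakp_e)$ stays bounded and, using boundedness of $\dot{\frakp}^*$ to bound $\ddot{\mathcal{H}}_d$, Barbalat's lemma gives $\frakp_e\to\bf0$; a second application to $\dot{\frakp}_e$ then drives $\bfe(\frakq,\frakq^*)\to\bf0$, hence $\bfp_e\to\bf0$ and $\bfR_e\to\bfI$ locally. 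A secondary technicality is confirming that $\bfM^{-1}$, $\bfD$, $\bfB^{\dagger}$ and the potential terms are smooth and bounded along trajectories, so that closed-loop solutions exist and remain in the region where positive definiteness of the attitude error holds.
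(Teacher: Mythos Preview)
Your proposal is correct and follows essentially the same approach as the paper: use $\calH_d$ as a Lyapunov function, compute $\dot{\calH}_d = -\frakp_e^\top \bfM^{-1}\bfK_\bfd\bfM^{-1}\frakp_e \leq 0$ via skew-symmetry of $\calJ_d$, and conclude by an invariance argument. In fact you are more careful than the paper, which simply invokes classical LaSalle without addressing the non-autonomous nature of the error dynamics or working out the invariant set; your Barbalat/LaSalle--Yoshizawa treatment and the explicit analysis of $\{\frakp_e=\bf0\}\Rightarrow\bfe(\frakq,\frakq^*)=\bf0$ are genuine improvements in rigor over the paper's proof.
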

\begin{proof}
\NEW{
Since the matching condition is satisfied and the desired momentum's derivative $\dot{\frakp}^*$ is bounded, the control policy in \eqref{eq:control_policy_SE3}, \eqref{eq:general_u_DI} exists and achieves the desired closed-loop error dynamics:
\begin{equation}
\begin{bmatrix}
\dot{\frakq}_e \\
\dot{\frakp}_e \\
\end{bmatrix}
= \begin{bmatrix}
\bf0 & \bfJ_1 \\
-\bfJ_1^\top & \bfJ_2 - \bfK_d\end{bmatrix}
\begin{bmatrix}
\frac{\partial \mathcal{H}_d}{\partial \frakq_e} \\
\frac{\partial \mathcal{H}_d}{\partial \frakp_e} 
\end{bmatrix}.
\end{equation}
We have $\tr\prl{\bfI - \bfR^{*\top} \bfR} \geq 0$, as all entries in $\bfR_e \in SO(3)$ are less than $1$.
Since $\bfM$, $\bfK_\bfp$ and $\bfK_\bfR$ are positive-definite matrices, the desired Hamiltonian $\calH_d$ is positive-definite, and achieves minimum value $0$ only at $\frakq_e = (\bf0, \bfI)$ and $\frakp_e = \bf0$, i.e., no position, rotation and momentum errors. The time derivative of $\calH_d(\frakq_e,\frakp_e)$ can be computed as:
\begin{equation}
\begin{aligned}
\dot{\calH}_d(\frakq_e,\frakp_e) &= \frac{\partial \mathcal{H}_d}{\partial \frakq_e}^\top\dot{\frakq}_e + \frac{\partial \mathcal{H}_d}{\partial \frakp_e} ^\top \dot{\frakp}_e\\
              & = -\frakp_e^\top \bfM^{-1}(\frakq)\bfK_d \bfM^{-1}(\frakq) \frakp_e.
\end{aligned}
\end{equation}
As $\bfK_d$ and $\bfM(\frakq)$ are positive-definite, we have $\dot{\calH}_d(\frakq_e,\frakp_e) \leq 0$ for all $(\frakq_e, \frakp_e)$ and equality holds at $\prl{(\bf0, \bfI), \bf0}$. By LaSalle's invariance principle \cite{khalil2002nonlinear}, the tracking errors $(\frakq_e, \frakp_e)$ asymptotically converge to $\prl{(\bf0, \bfI), \bf0}$.
}
\end{proof}

\NEW{Without requiring a priori knowledge of the system parameters, the control design in \eqref{eq:control_policy_SE3} offers a unified control approach for $SE(3)$ Hamiltonian systems that achieves trajectory tracking, if permissible by the system's degree of underactuation. Thus, our control design solves Problem \ref{problem:setpoint_reg} for rigid-body robot systems, such as UGVs, UAVs, and UUVs, with tracking performance guaranteed by Theorem \ref{thm:stability_analysis}.}


\section{Evaluation}
\label{sec:experimental_results}

We verify the effectiveness of our port-Hamiltonian neural ODE network for dynamics learning and control on matrix Lie groups using a simulated pendulum, a simulated Crazyflie quadrotor, and a real PX4 quadrotor platform, whose states evolve on the $SE(3)$ manifold. The implementation details for the experiments are provided in Appendix \ref{subsec:implement_details}.

\subsection{Pendulum}
\label{subsec:pendulum_so3}
\NEWW{In this section, we verify our port-Hamiltonian dynamics learning and control approach on the $SO(3)$ manifold}. We consider a pendulum with the following dynamics:
\begin{equation}
\label{eq:pend_gt_q_dyn}
\ddot{\varphi} = -15\sin{\varphi} + 3u - 0.2\dot{\varphi},
\end{equation}
where $\varphi$ is the angle of the pendulum with respect to its vertically-downward position and $u$ is a scalar control input. The ground-truth mass, potential energy, friction coefficient, and the input gain are: $m = 1/3$, $\calV(\varphi) = 5(1-\cos{\varphi})$, $D(\varphi) = 0.2/3$, and $B(\varphi) = 1$, respectively.
We collected data of the form $\{(\cos{\varphi}, \sin{\varphi}, \dot{\varphi})\}$ from an OpenAI Gym environment, provided by \cite{zhong2020symplectic}, with the dynamics in \eqref{eq:pend_gt_q_dyn}. To illustrate our Lie group neural ODE learning, we represent the angle $\varphi$ as a rotation matrix:
%
\begin{equation}
\bfR = \begin{bmatrix}
\cos{\varphi} & - \sin{\varphi} & 0 \\
\sin{\varphi} & \cos{\varphi} & 0 \\
0 & 0 & 1 
\end{bmatrix},
\end{equation}
representing the pendulum orientation. We let $\bfomega = [0, 0, \dot{\varphi}]$ and remove position $\bfp$ and linear velocity $\bfv$ from the Hamiltonian dynamics in \eqref{eq:portham_dyn_SE3}, restricting the system to the $SO(3)$ manifold with generalized coordinates $\mathbf\frakq = [\bfr_1^\top \quad \bfr_2^\top \quad \bfr_3^\top]^\top$.

As described in Sec. \ref{subsec:SE3_dyn_learning}, control inputs $\bfu^{(i)}$ were sampled randomly and applied to the pendulum for five time intervals of $0.05s$, forming a dataset $\mathcal{D} = \left\{t_{0:N}^{(i)},\mathbf\frakq_{0:N}^{(i)}, \bfomega_{0:N}^{(i)}, \bfu^{(i)})\right\}_{i =1}^D$ with $N = 5$ and $D=5120$. We trained an $SO(3)$ port-Hamiltonian neural ODE network as described in Sec. \ref{subsec:SE3_dyn_learning} for $5000$ iterations without any nominal model, i.e., $\bfM^{-1}_{\bfomega 0}(\mathbf\frakq) = \bf0$, $\bfD_{\bfomega 0}(\frakq, \frakp) = \bf0$, $V_0(\frakq) = \bf0$ and $\bfB_0(\frakq) = \bf0$.

As noted in \cite{zhong2020symplectic, duong21hamiltonian}, since the generalized momenta $\mathbf\frakp$ are not available in the dataset, the dynamics of $\mathbf\frakq$ in \eqref{eq:pend_gt_q_dyn} do not change if $\mathbf\frakp$ is scaled by a factor $\beta > 0$. This is also true in our formulation as scaling $\mathbf\frakp$ leaves the dynamics of $\mathbf\frakq$ in \eqref{eq:portham_dyn_SE3} unchanged. To emphasize this scale-invariance, let $\bfM_\beta(\mathbf\frakq) = \beta\bfM(\mathbf\frakq)$, $\calV_\beta(\mathbf\frakq) = \beta \calV(\mathbf\frakq)$, $\bfD_{\beta}(\mathbf\frakq, \mathbf\frakp) = \beta \bfD(\mathbf\frakq, \mathbf\frakp)$ $\bfB_\beta(\mathbf\frakq) = \beta\bfB(\mathbf\frakq)$, and:
\begin{equation}
\begin{aligned}
\mathbf\frakp_{\beta} &= \bfM_{\beta}(\mathbf\frakq)\bfomega = \beta \mathbf\frakp,  \quad\qquad \dot{\mathbf\frakp}_\beta = \beta \dot{\mathbf\frakp}, \\
\mathcal{H}_{\beta}(\mathbf\frakq,\mathbf\frakp) &= \frac{1}{2}\mathbf\frakp_{\beta}^\top \bfM_{\beta}^{-1}(\mathbf\frakq) \mathbf\frakp_{\beta} + V_{\beta}(\mathbf\frakq)=\beta\mathcal{H}(\mathbf\frakq,\mathbf\frakp), \\
\frac{\partial{\mathcal{H}_{\beta}(\mathbf\frakq,\mathbf\frakp)}}{\partial \mathbf\frakp_\beta} &= \bfM^{-1}_{\beta}(\mathbf\frakq) \mathbf\frakp_{\beta} = \frac{\partial{H(\mathbf\frakq,\mathbf\frakp)}}{\partial \mathbf\frakp},
\end{aligned}
\end{equation}
guaranteeing that the equations of motions \eqref{eq:portham_dyn_SE3} still hold.

\begin{figure}[t]
\begin{subfigure}[t]{0.23\textwidth}
        \centering
        \includegraphics[width=\textwidth]{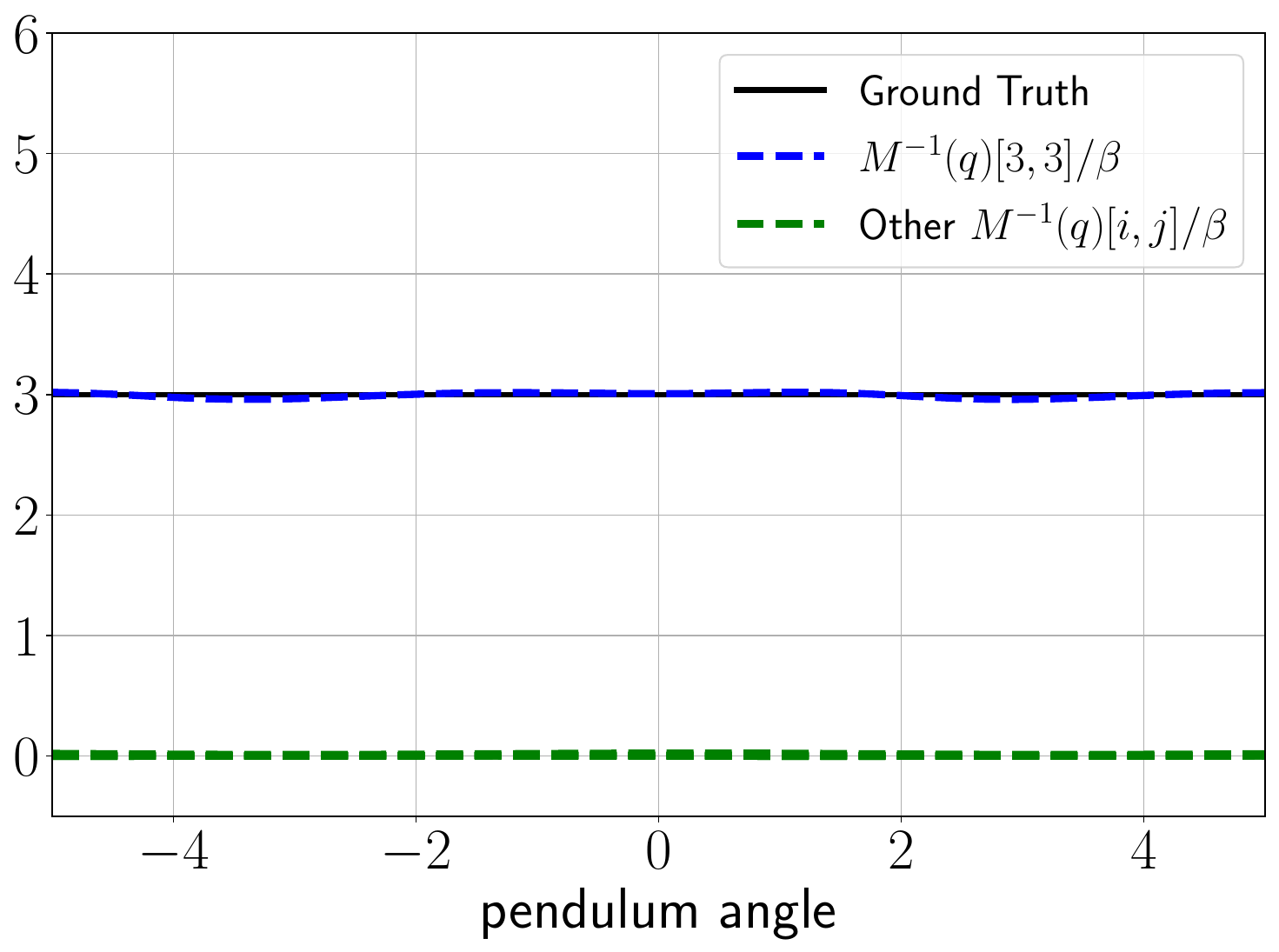}
        \caption{$\bfM^{-1}(q)/\beta$ versus $\varphi$.}
        \label{fig:pend_M_x_all}
\end{subfigure}%
\hfill%
\begin{subfigure}[t]{0.25\textwidth}
        \centering
\includegraphics[width=\textwidth]{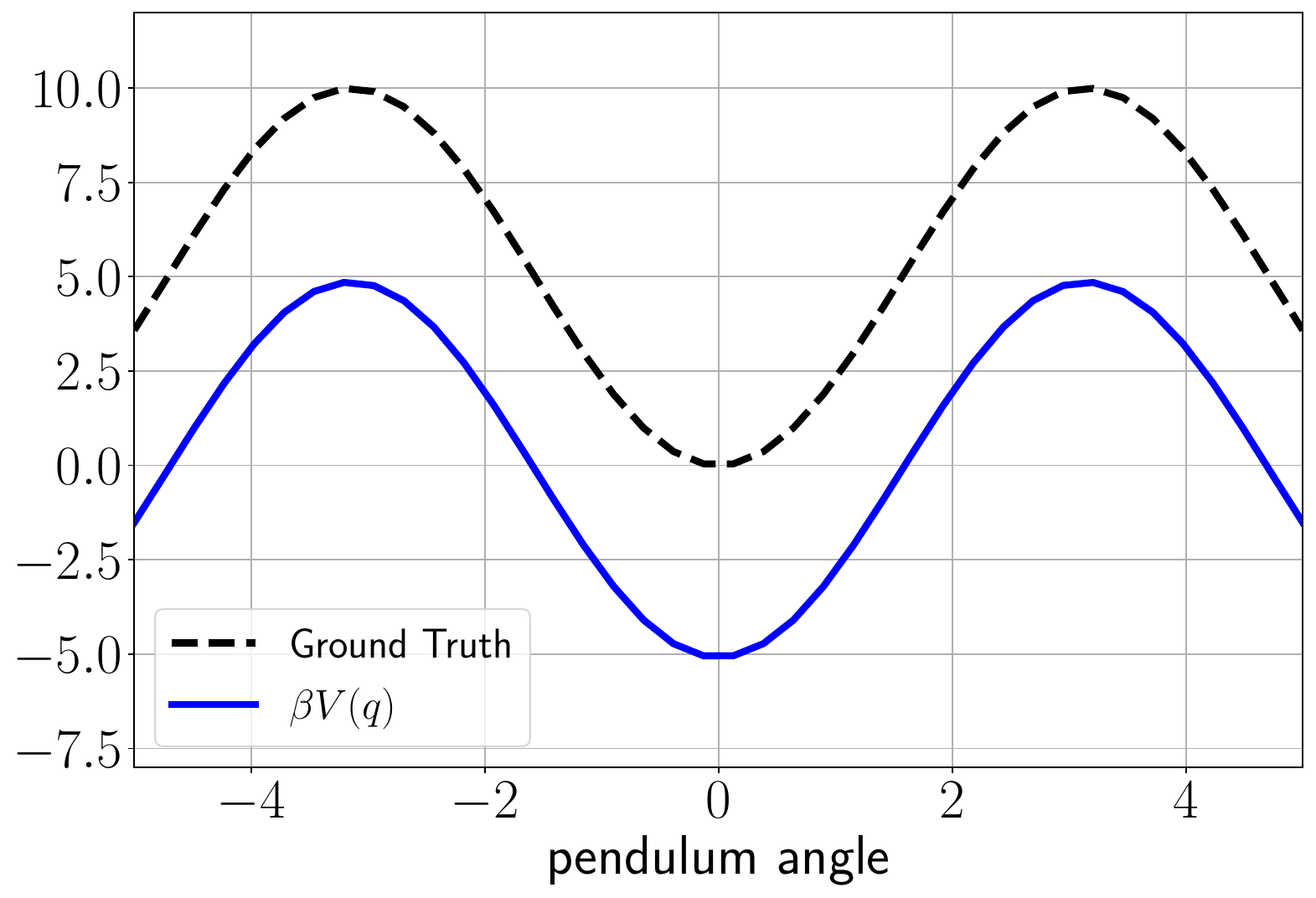}%
        \caption{$\beta \calV(\frakq)$ versus $\varphi$.}
        \label{fig:pend_Vx}
\end{subfigure}%

\begin{subfigure}[t]{0.23\textwidth}
        \centering
        \includegraphics[width=\textwidth]{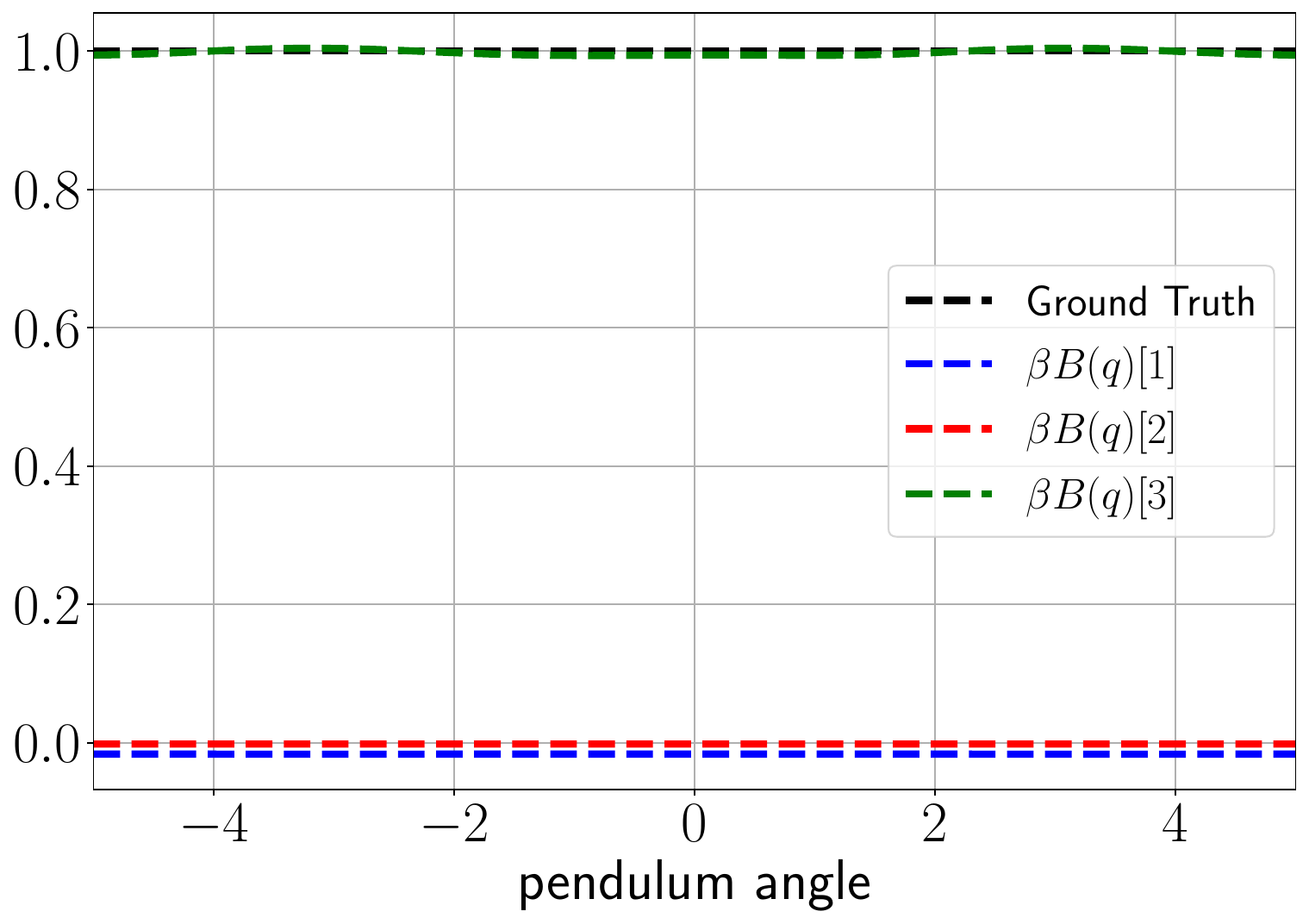}
        \caption{$\beta\bfB(\frakq)$ versus $\varphi$.}
        \label{fig:pend_gx}
\end{subfigure}%
\hfill%
\begin{subfigure}[t]{0.24\textwidth}
        \centering
        \includegraphics[width=\textwidth]{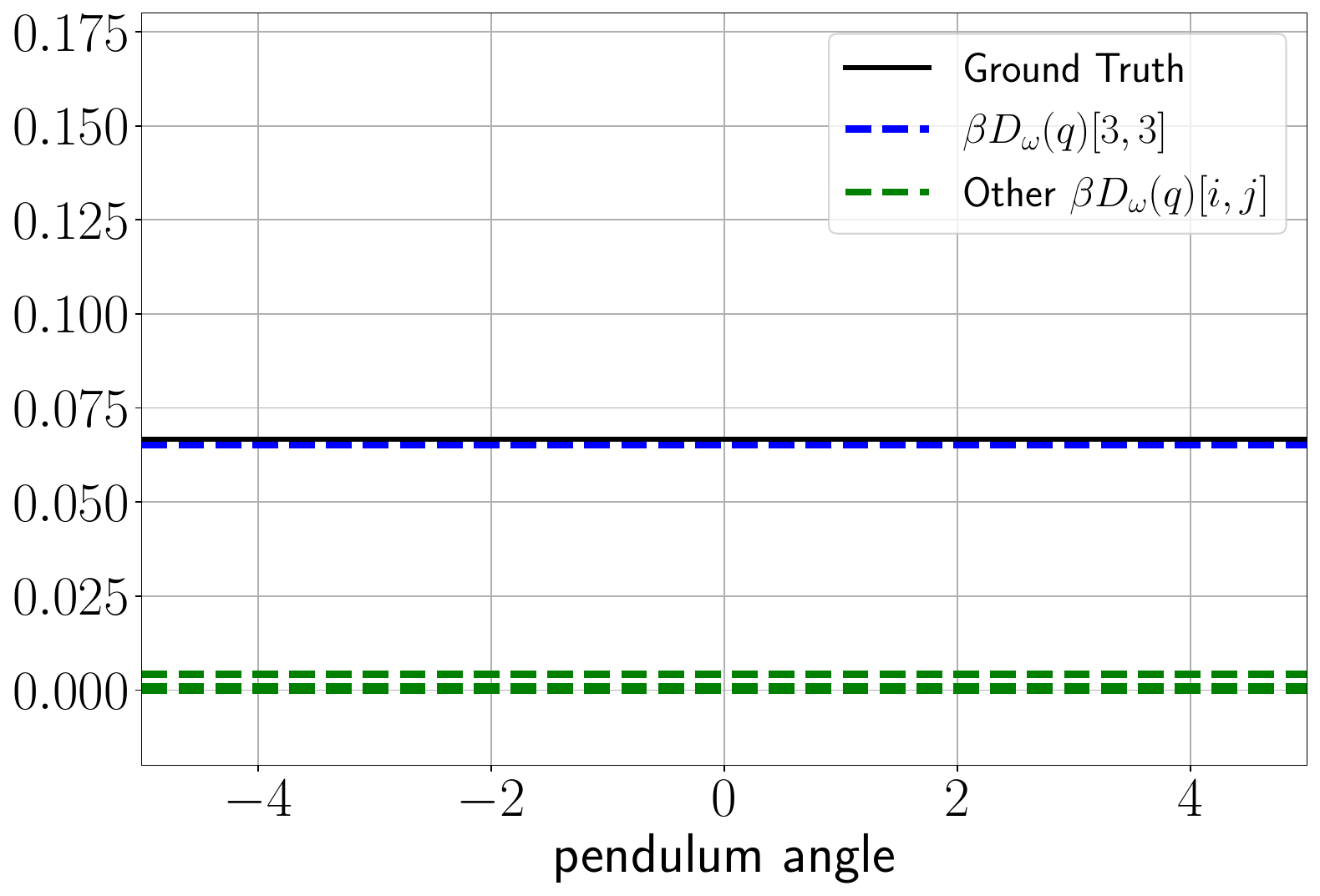}
        \caption{$\beta\bfD_\omega(\frakq)$ versus $\varphi$.}
        \label{fig:pend_D_omega}
\end{subfigure}%

\begin{subfigure}[t]{0.23\textwidth}
        \centering
\includegraphics[width=\textwidth]{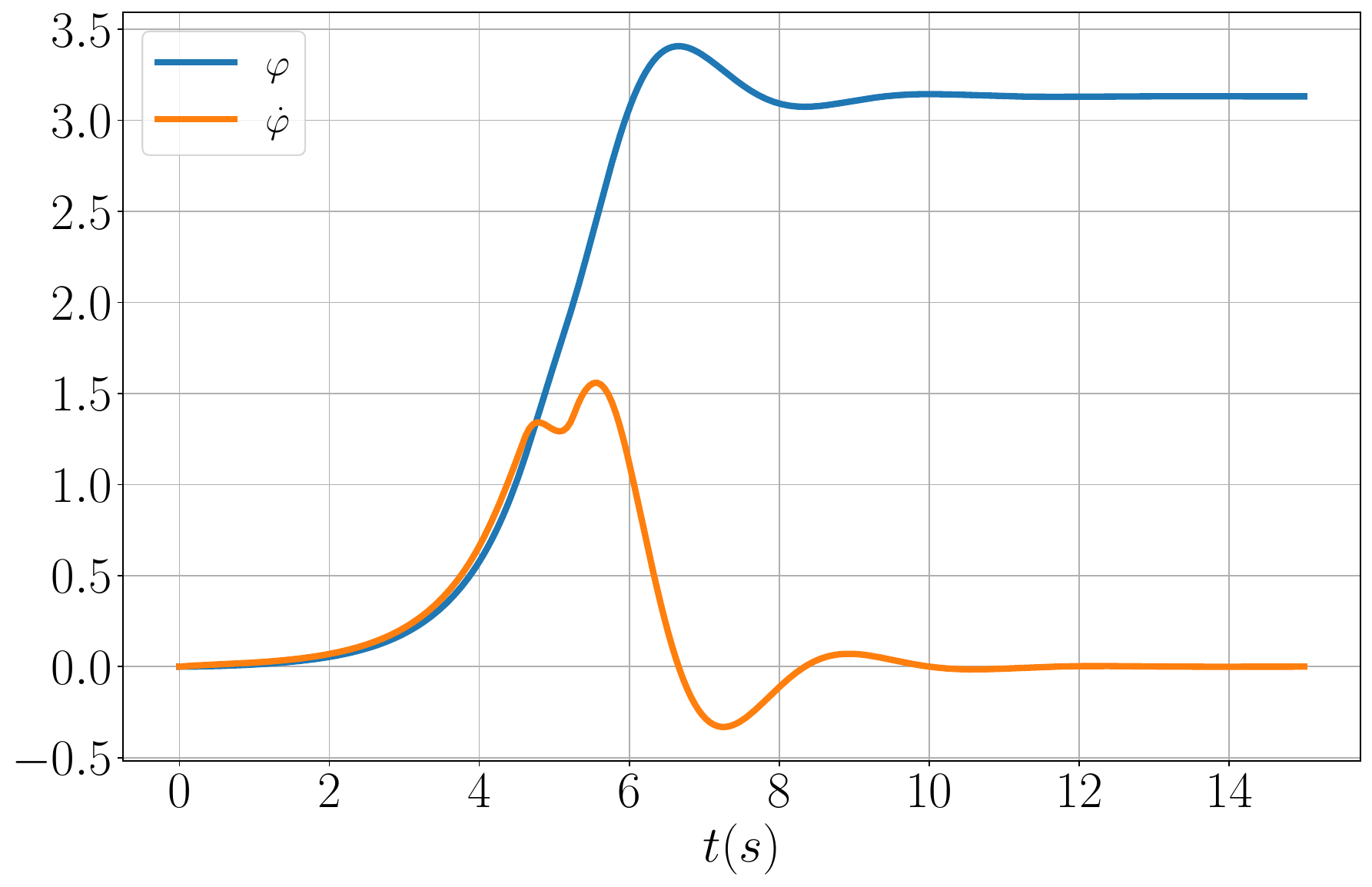}%
        \caption{Angle $\varphi$ and velocity $\dot{\varphi}$.}
        \label{fig:pend_control_theta_thetadot}
\end{subfigure}%
\hfill%
\begin{subfigure}[t]{0.23\textwidth}
        \centering
\includegraphics[width=\textwidth]{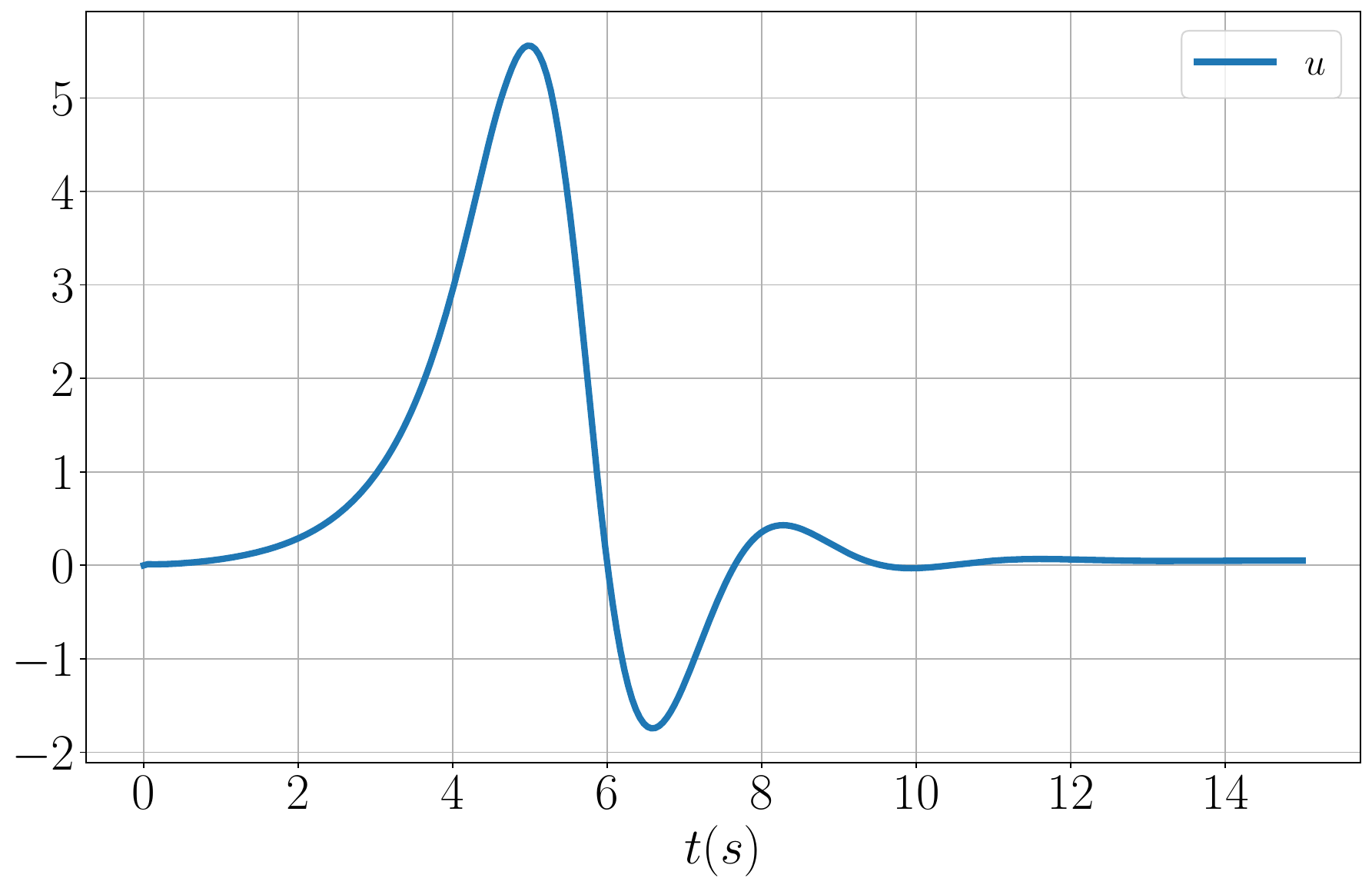}%
        \caption{Control input $u$.}
        \label{fig:pend_control_input}
\end{subfigure}%
\caption{Pendulum dynamics estimation using an $SO(3)$ port-Hamiltonian neural ODE network with scale factor $\beta = 1.33$.}
\label{fig:pend_exp}
\end{figure}

\begin{figure}[t]
\begin{subfigure}[t]{0.238\textwidth}
        \centering
        \includegraphics[width=\textwidth]{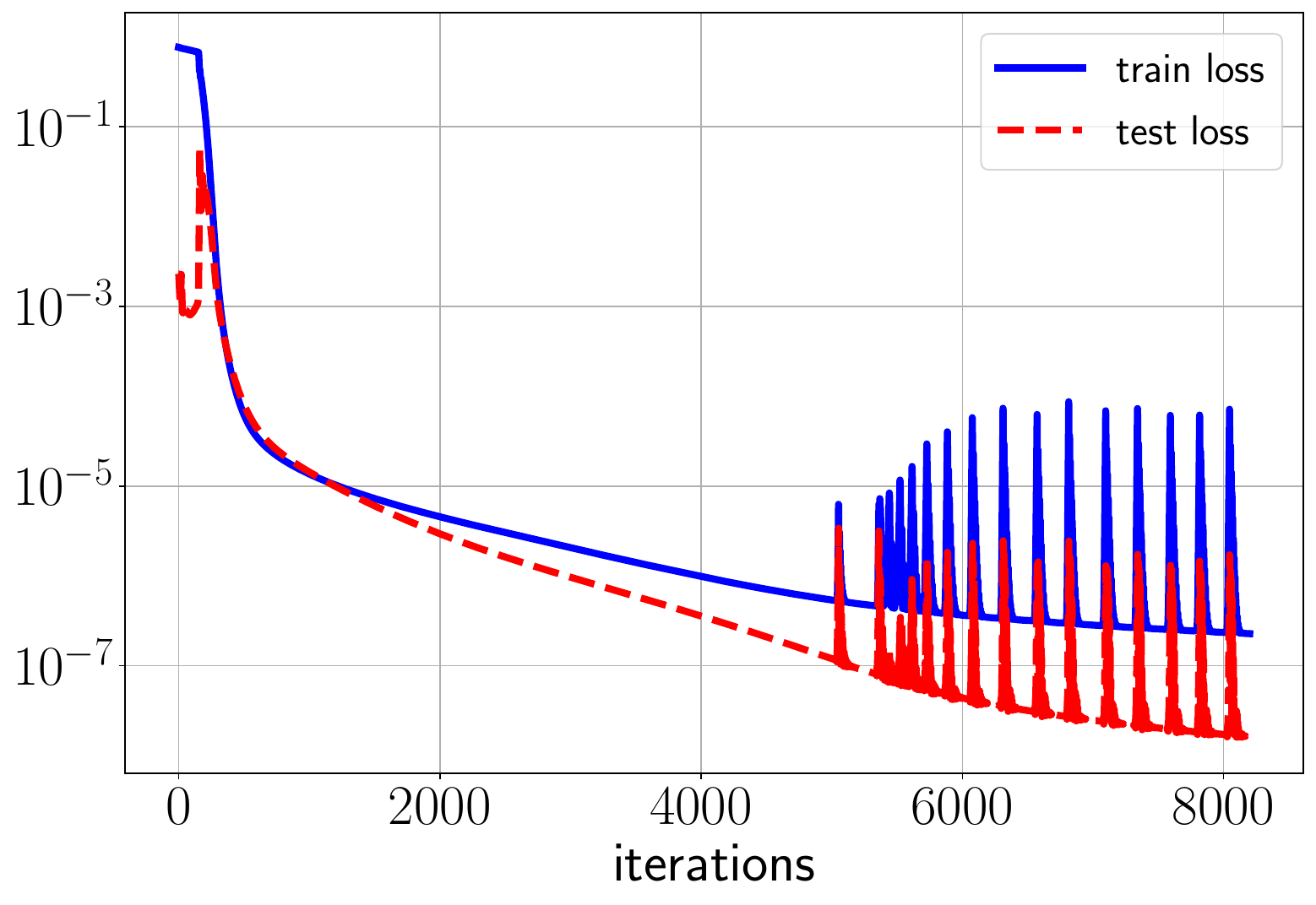}
        \caption{Loss (log scale)}
        \label{fig:car_loss}
\end{subfigure}%
\hfill
\begin{subfigure}[t]{0.245\textwidth}
        \centering
        \includegraphics[width=\textwidth]{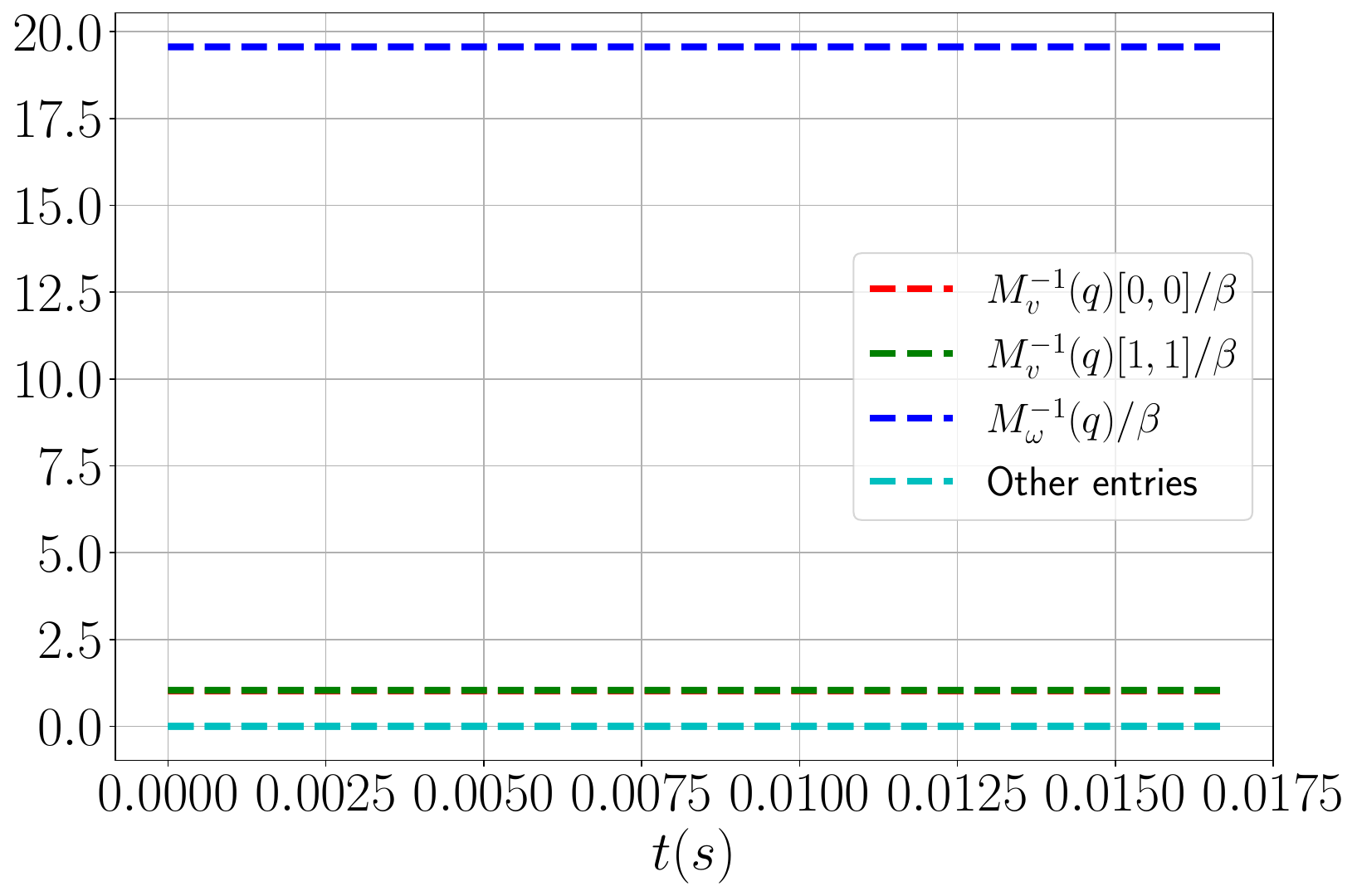}
        \caption{$\bfM_\bfv^{-1}(\frakq)$ and $\bfM_{\bfomega}^{-1}(\frakq)$}
        \label{fig:car_M_x_all}
\end{subfigure}%

\begin{subfigure}[t]{0.248\textwidth}
        \centering
\includegraphics[width=\textwidth]{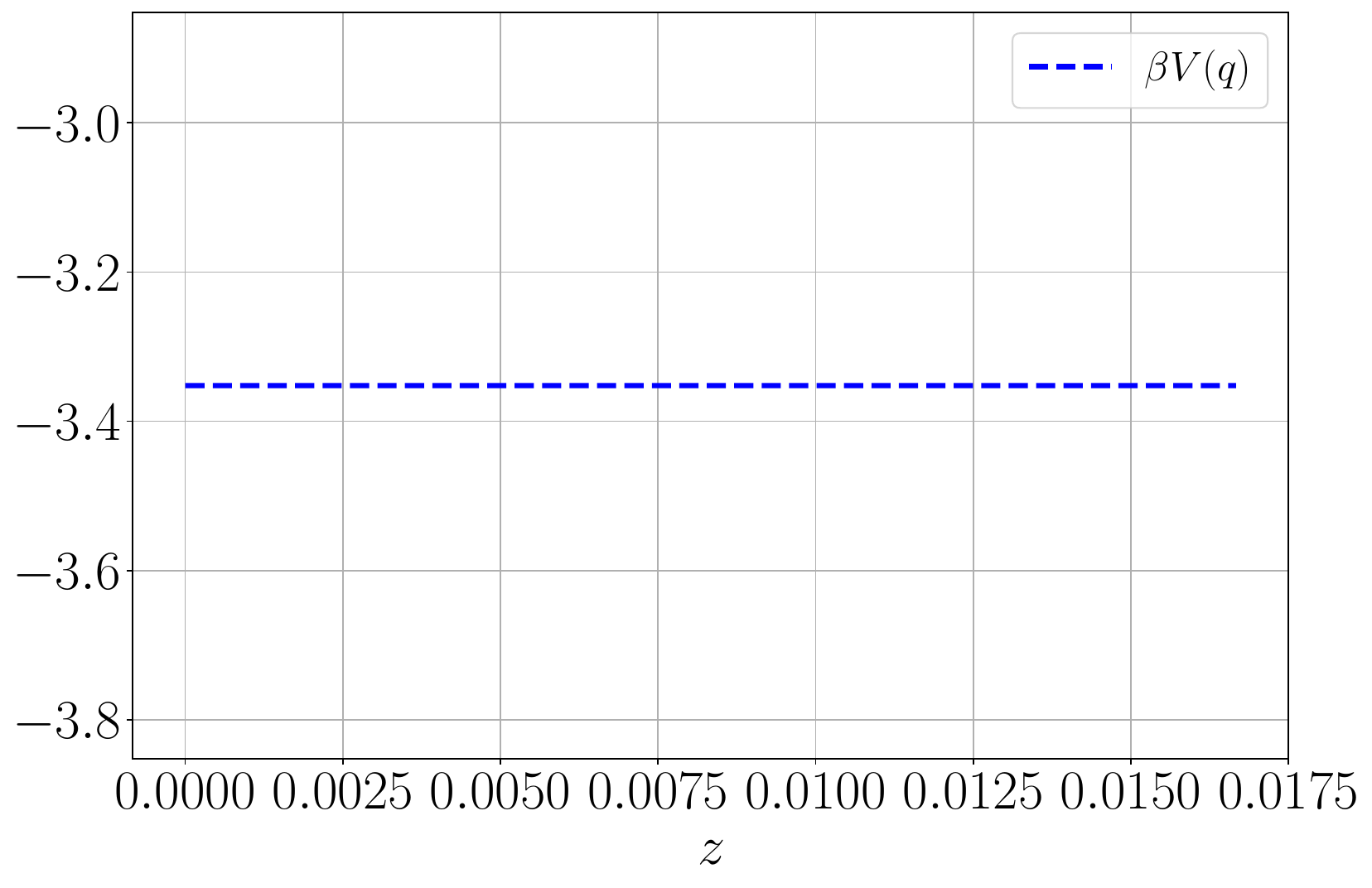}%
        \caption{$\calV(\frakq)$}
        \label{fig:car_Vx}
\end{subfigure}%
\hfill
\begin{subfigure}[t]{0.24\textwidth}
        \centering
        \includegraphics[width=\textwidth]{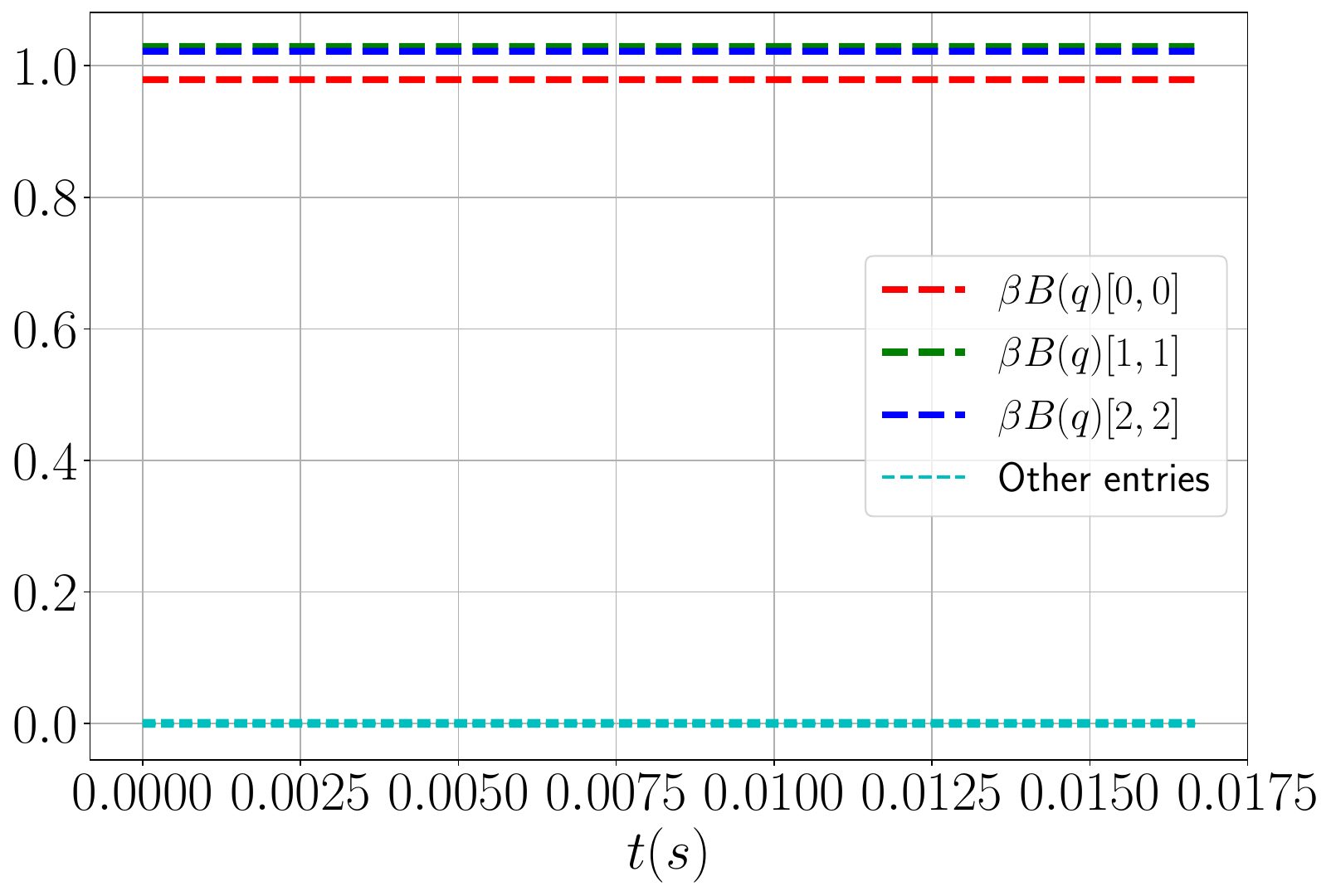}
        \caption{$\bfB_{\bfv}(\frakq)$}
        \label{fig:car_g_x_all}
\end{subfigure}%

\begin{subfigure}[t]{0.235\textwidth}
        \centering
\includegraphics[width=\textwidth]{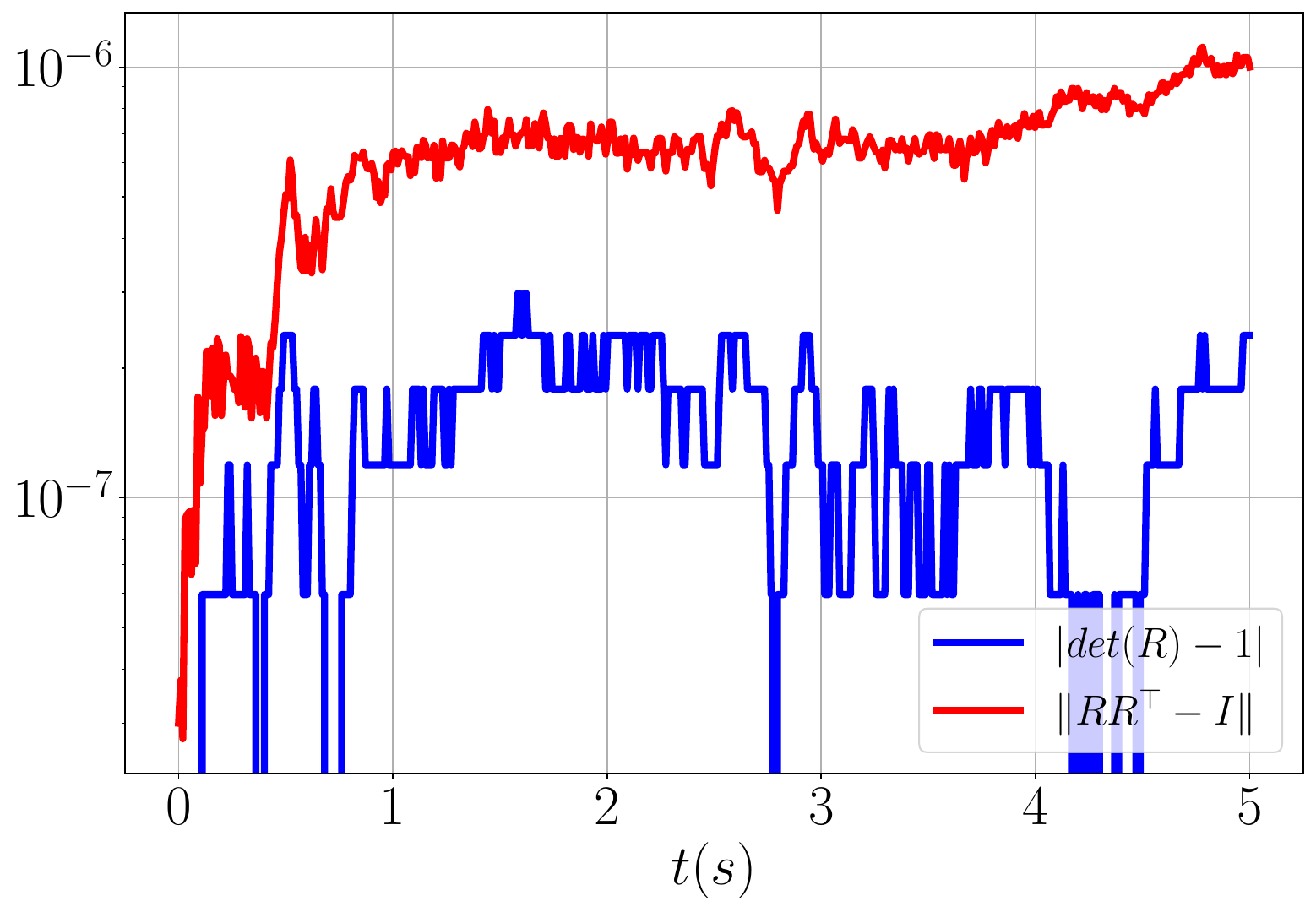}%
        \caption{SO(3) constraints.}
        \label{fig:car_so3_constraints}
\end{subfigure}%
\hfill
\begin{subfigure}[t]{0.25\textwidth}
        \centering
\includegraphics[width=\textwidth]{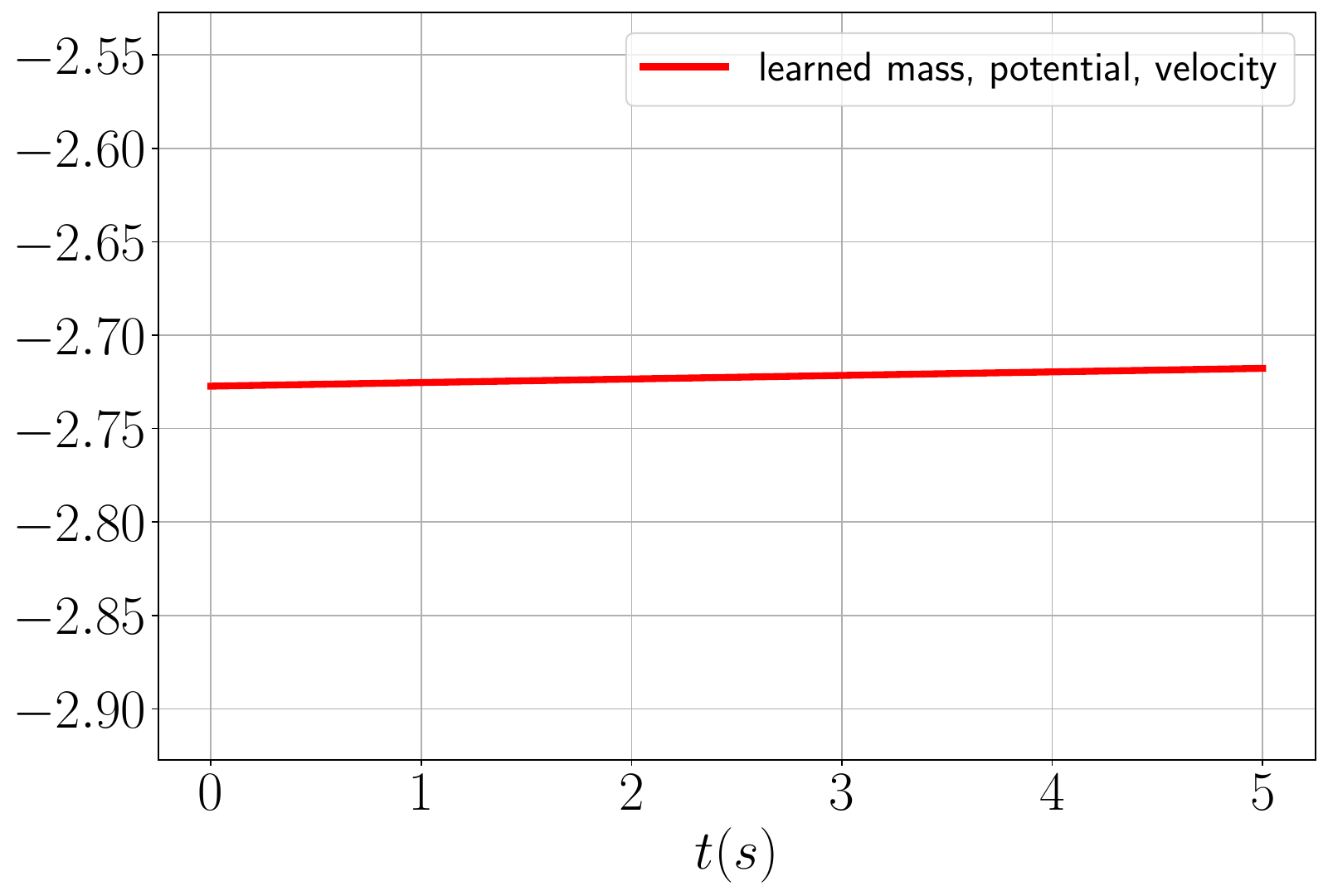}%
        \caption{Total energy.}
        \label{fig:car_total_energy}
\end{subfigure}%
\caption{\NEWW{Evaluation of $SE(2)$ port-Hamiltonian neural ODE network on a simulated omnidirectional ground vehicle with scale factor $\beta = 7.1$.}}
\label{fig:car_exp}
\end{figure}

Fig. \ref{fig:pend_exp} shows the training and testing behavior of our $SO(3)$ Hamiltonian ODE network.
Fig. \ref{fig:pend_M_x_all} and \ref{fig:pend_gx} show that the $\brl{\bfM(\mathbf\frakq)^{-1}}_{3,3}$ entry of the mass inverse and the $\brl{\bfB(\mathbf\frakq)}_3$ entry of the input matrix with \NEWW{scaling factor $\beta = 1.33$} are close to their correct values of $3$ and $1$, respectively, while the other entries are close to zero. Fig. \ref{fig:pend_Vx} indicates a constant gap between the learned and the ground-truth potential energy, which can be explained by the relativity of potential energy. 

We tested stabilization of the pendulum based on the learned dynamics to the unstable equilibrium at the upward position $\varphi = \pi$, with zero velocity.
Since the pendulum is a fully-actuated system, the energy-based controller in \eqref{eq:idapbc_pose_twist_tracking} exists and is obtained by removing the position error from the desired energy:
\begin{equation}
\mathcal{H}_d(\mathbf\frakq_e, \mathbf\frakp_e) = \frac{1}{2} \tr(\bfK_{\bfR}(\bfI - \bfR^{*\top}\bfR)) + \frac{1}{2}\frakp_{\bfomega}^\top\bfM^{-1}(\mathbf\frakq)\frakp_{\bfomega}.
\end{equation}
The controlled angle $\varphi$ and angular velocity $\dot{\varphi}$ as well as the control inputs $\bfu$ with gains $\bfK_\bfR = 2\bfI$ and $\bfK_\bfd = \bfI$ are shown over time in Fig. \ref{fig:pend_control_theta_thetadot} and \ref{fig:pend_control_input}. We can see that the controller was able to smoothly drive the pendulum from $\phi = 0$ to $\phi = \pi$, relying only on the learned dynamics.

\subsection{Omnidirectional Ground Vehicle}
\NEWW{In this section, we verify our port-Hamiltonian dynamics learning and control approach on a simulated omnidirectional ground vehicle, whose states evolve on the $SE(2)$ manifold. The ground-truth dynamics of the vehicle can be obtained from \eqref{eq:portham_dyn_SE3} by keeping only the components $x$ and $y$ in the position $\bfp$ and the yaw angle of the rotation matrix, leading to an $SO(2)$ rotation matrix:
\begin{equation}
\bfR = \begin{bmatrix}
\cos{\varphi} & - \sin{\varphi}\\
\sin{\varphi} & \cos{\varphi}
\end{bmatrix},
\end{equation}
where $\varphi$ is the vehicle's yaw angle. 
The vehicle moves on a flat ground with potential energy $\calV(\frakq) = c$, where $c$ is a constant, and has ground-truth mass $\bfM_\bfv(\frakq) = \bfI \in \bbS_{\succ0}^{2 \times 2}$ and inertia $\bfM_{\bfomega}(\frakq) = 0.05 \in \bbR_{>0}$. It is fully-actuated with control input $\bfu = [f_x, f_y, \tau_\varphi]$, where $f_x$ and $f_y$ are forces along the $x$ and $y$ axes of the body frame and $\tau_\varphi$ is the yaw torque, generated by the motors.

\begin{figure}[t]
\centering
\begin{subfigure}[ht]{0.5\textwidth}
        \centering
\includegraphics[width=\textwidth]{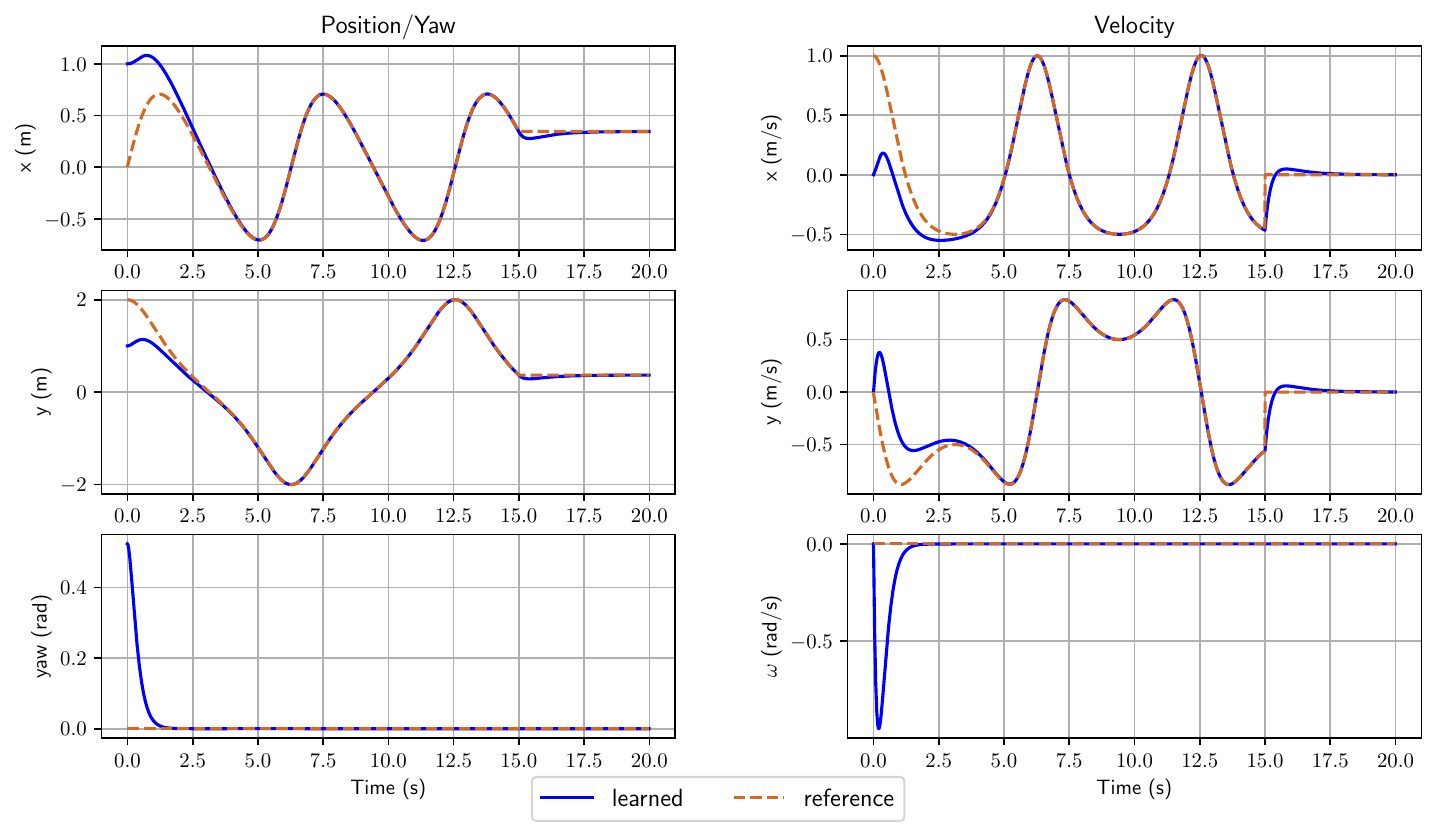}%
\caption{Tracking performance with a lemniscate trajectory.}
\label{fig:car_tracking_results}
\end{subfigure}%

\begin{subfigure}[ht]{0.5\textwidth}
        \centering
\includegraphics[width=0.47\textwidth]{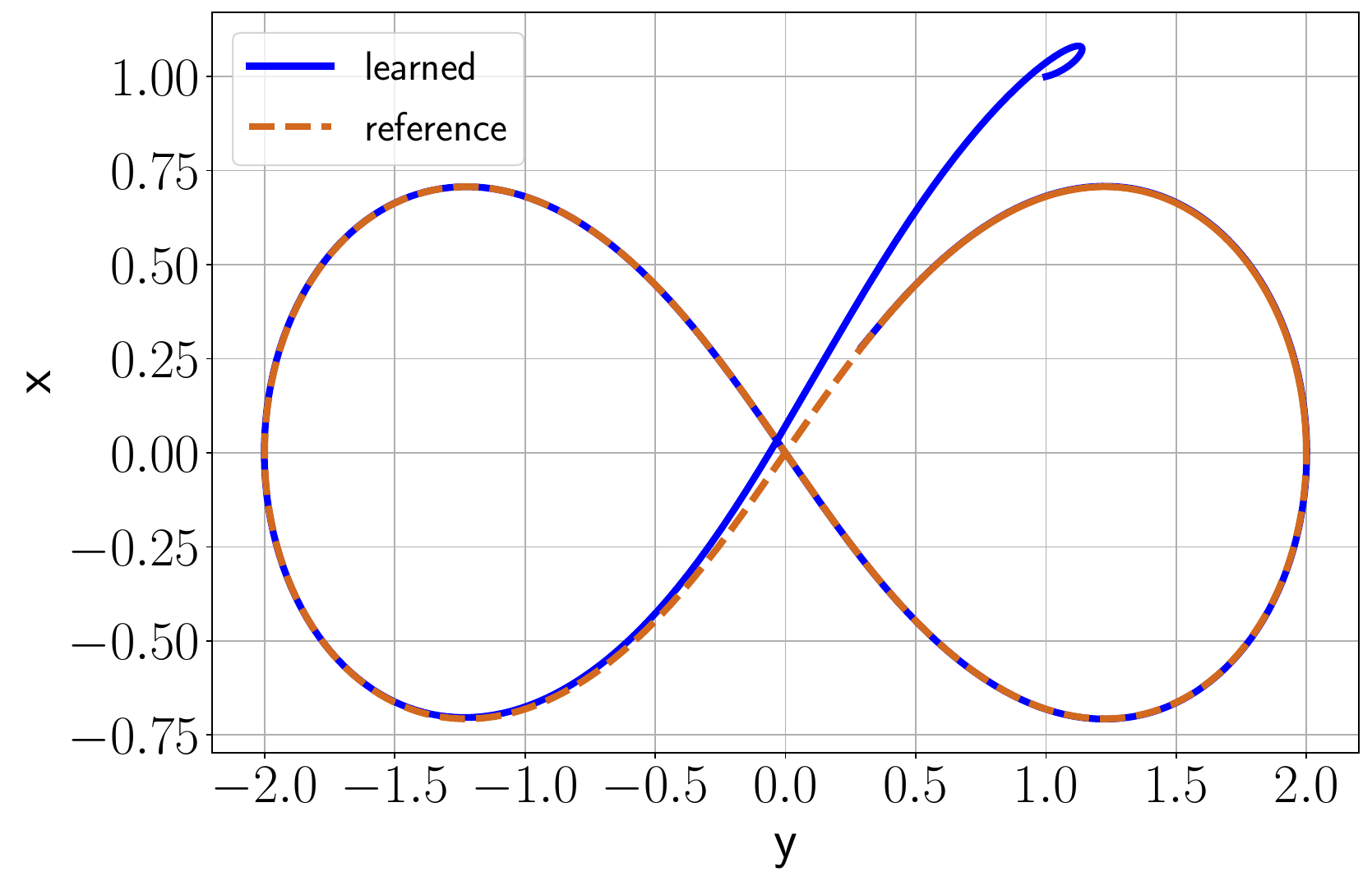}%
\hfill
\includegraphics[width=0.47\textwidth]{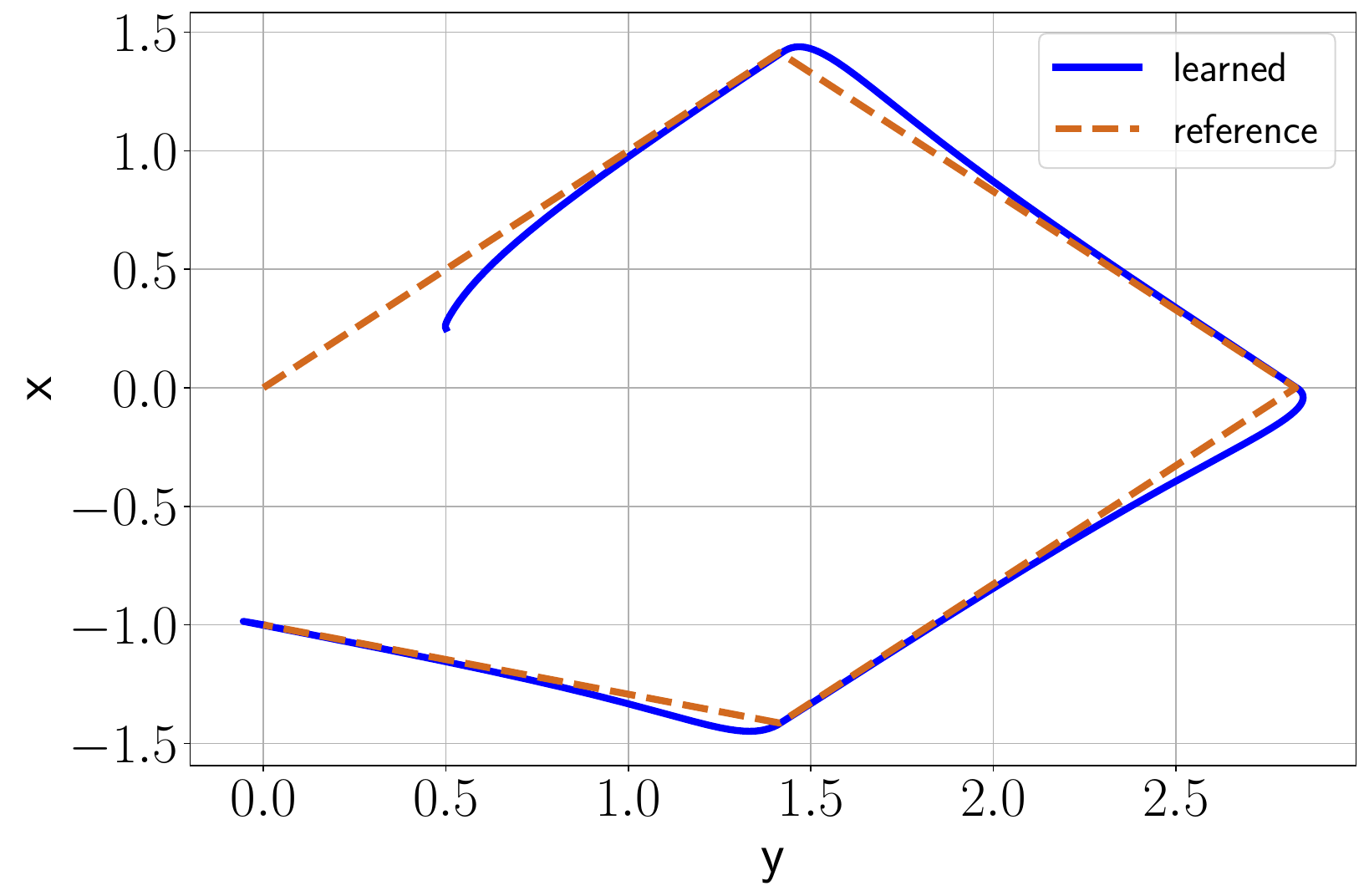}
        \caption{Tracking lemniscate (left) and piecewise-linear (right) trajectories.}
        \label{fig:car_trajviz}
\end{subfigure}%
\caption{\NEWW{Trajectories (blue) of omnidirectional ground vehicle tracking desired trajectories (orange) with our learned $SE(2)$ port-Hamiltonian dynamics and IDA-PBC control design.}}
\label{fig:car_traj_viz}
\end{figure}

\begin{figure}[t]
\begin{subfigure}[t]{0.239\textwidth}
        \centering
        \includegraphics[width=\textwidth]{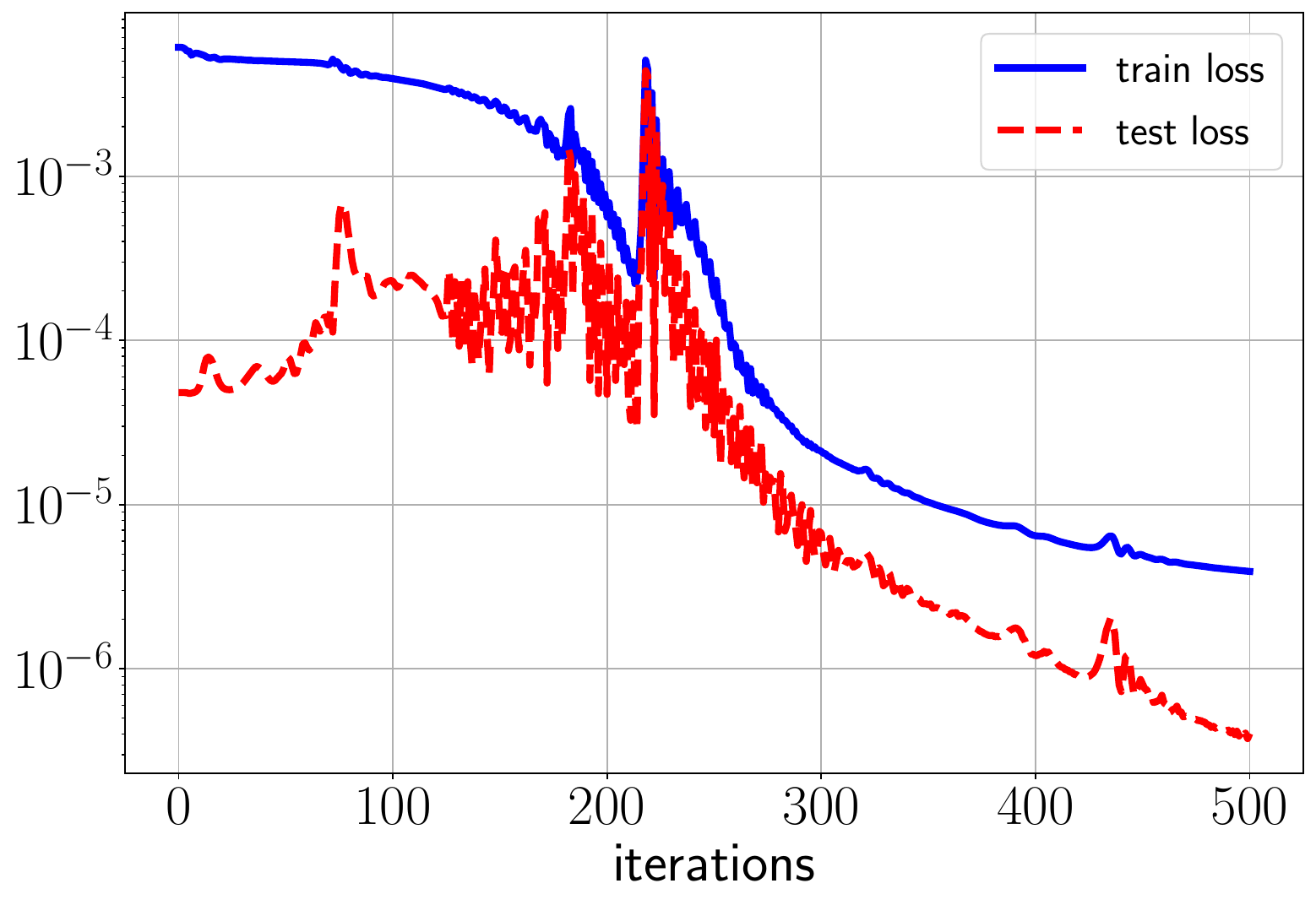}
        \caption{Loss (log scale)}
        \label{fig:pybullet_loss}
\end{subfigure}%
\hfill
\begin{subfigure}[t]{0.241\textwidth}
        \centering
        \includegraphics[width=\textwidth]{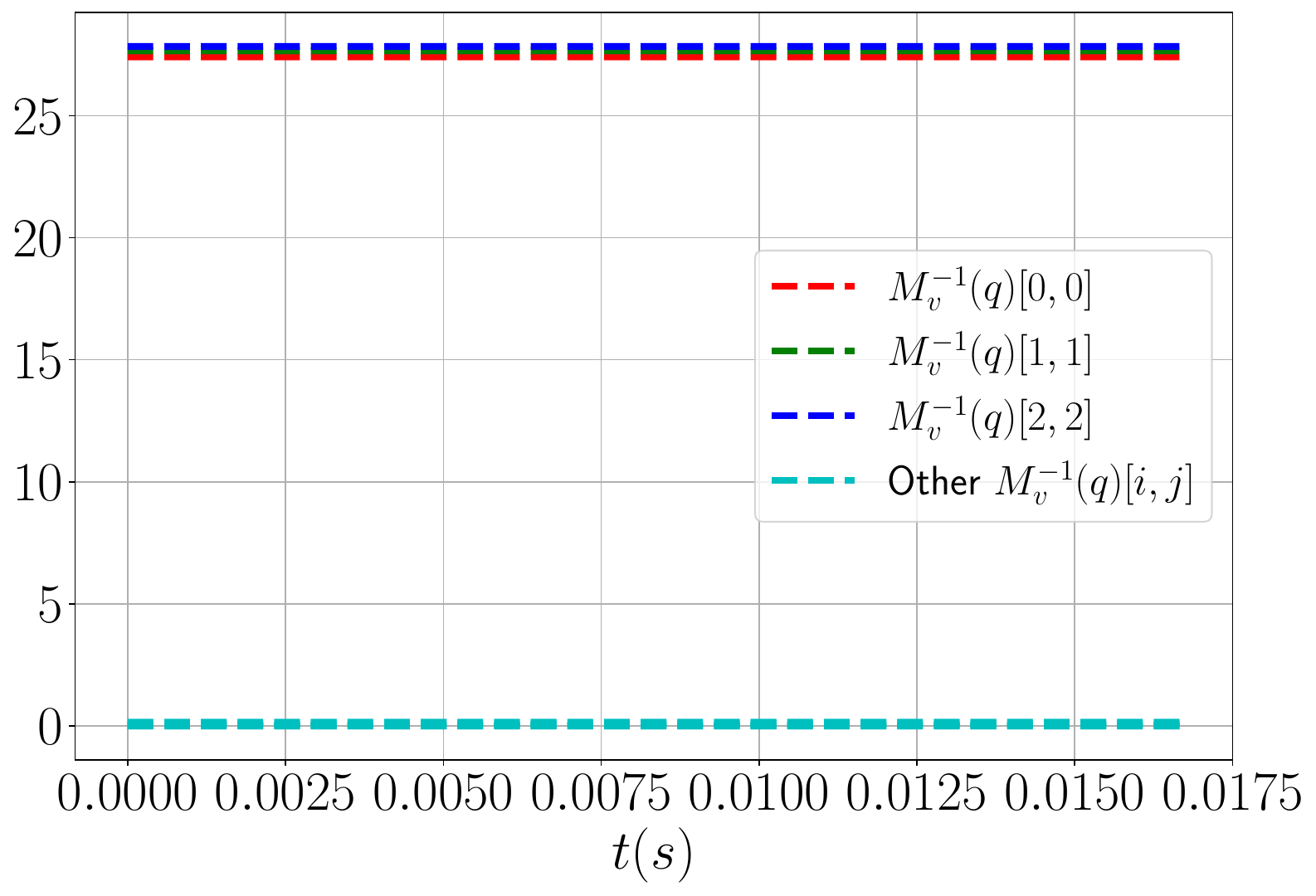}
        \caption{$\bfM_\bfv^{-1}(\frakq)$}
        \label{fig:pybullet_M1_x_all}
\end{subfigure}%

\begin{subfigure}[t]{0.235\textwidth}
        \centering
		\includegraphics[width=\textwidth]{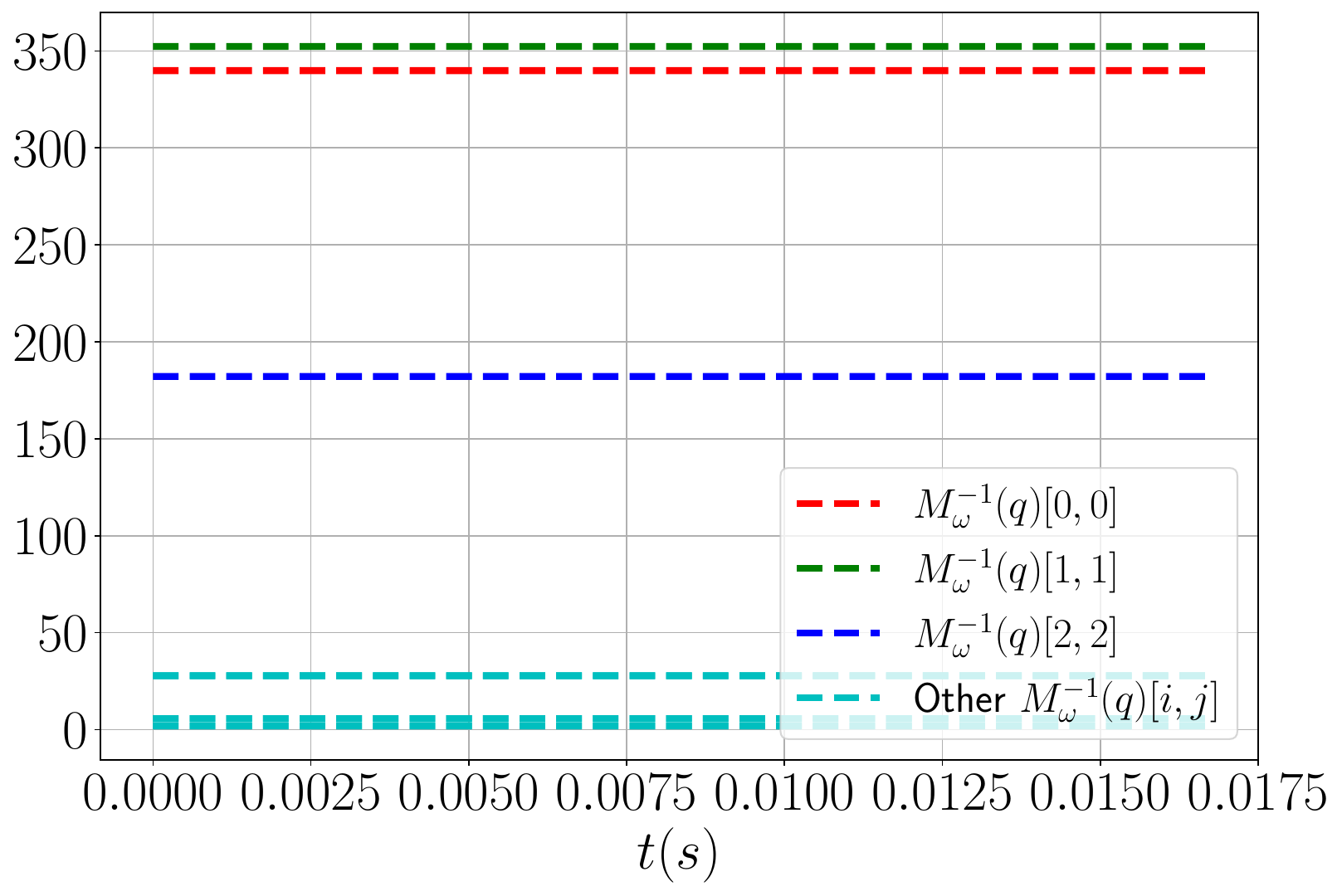}%
        \caption{$\bfM_{\bfomega}^{-1}(\frakq)$}
        \label{fig:pybullet_M2_x_all}
\end{subfigure}%
\hfill
\begin{subfigure}[t]{0.245\textwidth}
        \centering
\includegraphics[width=\textwidth]{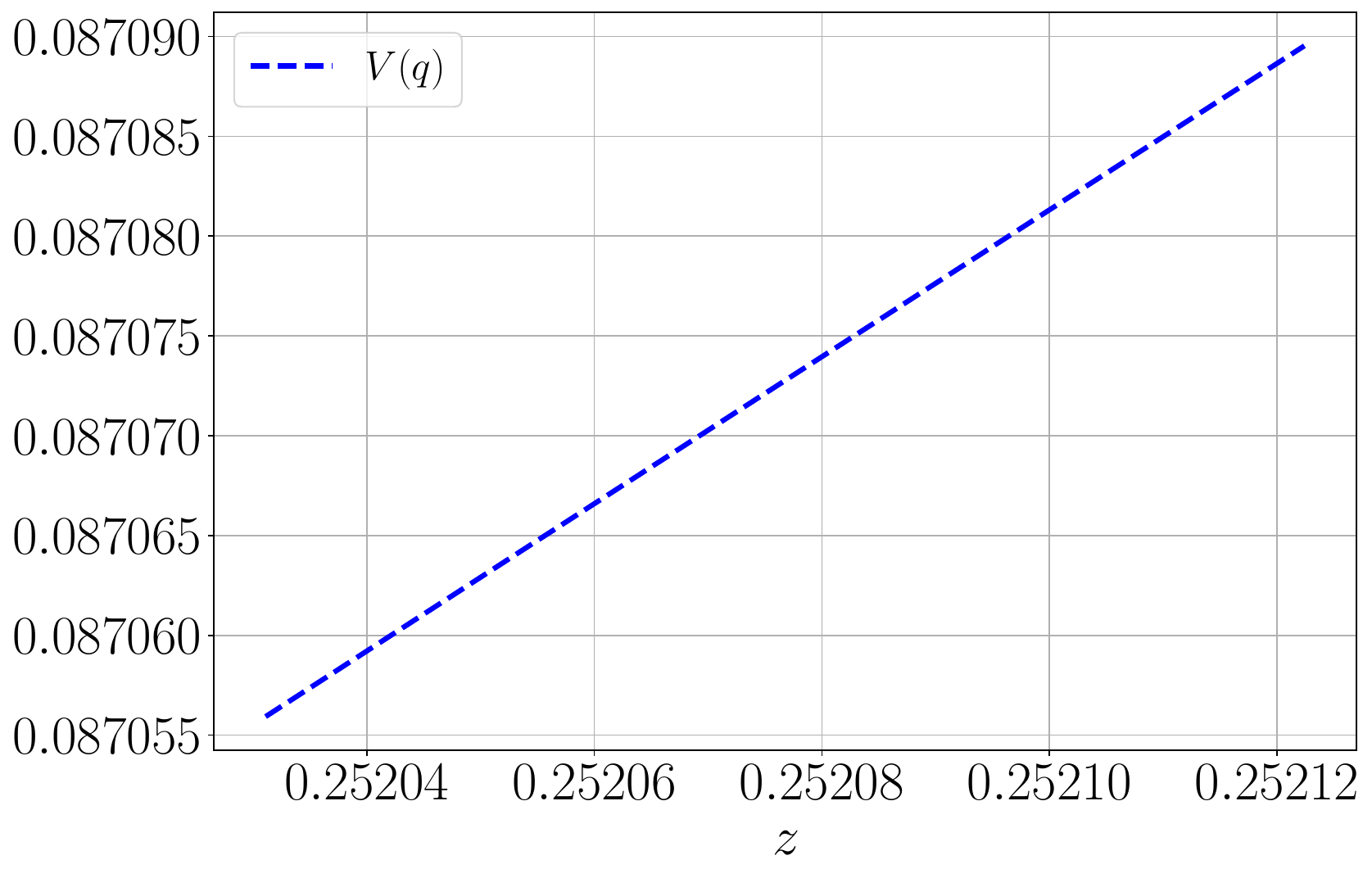}%
        \caption{$\calV(\frakq)$}
        \label{fig:pybullet_Vx}
\end{subfigure}%

\begin{subfigure}[t]{0.24\textwidth}
        \centering
        \includegraphics[width=\textwidth]{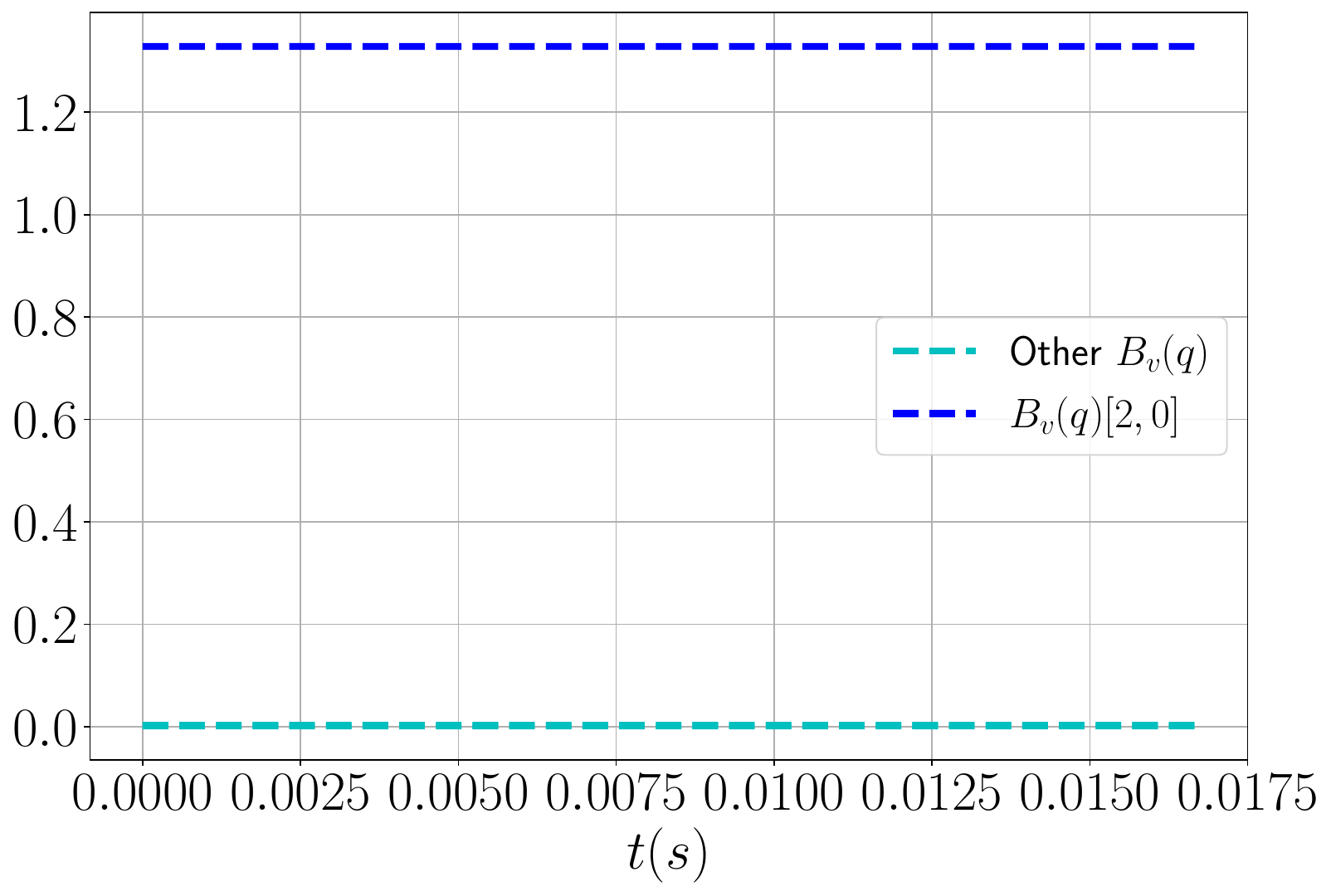}
        \caption{$\bfB_{\bfv}(\frakq)$}
        \label{fig:pybullet_g_v_x_all}
\end{subfigure}%
\hfill
\begin{subfigure}[t]{0.24\textwidth}
        \centering
        \includegraphics[width=\textwidth]{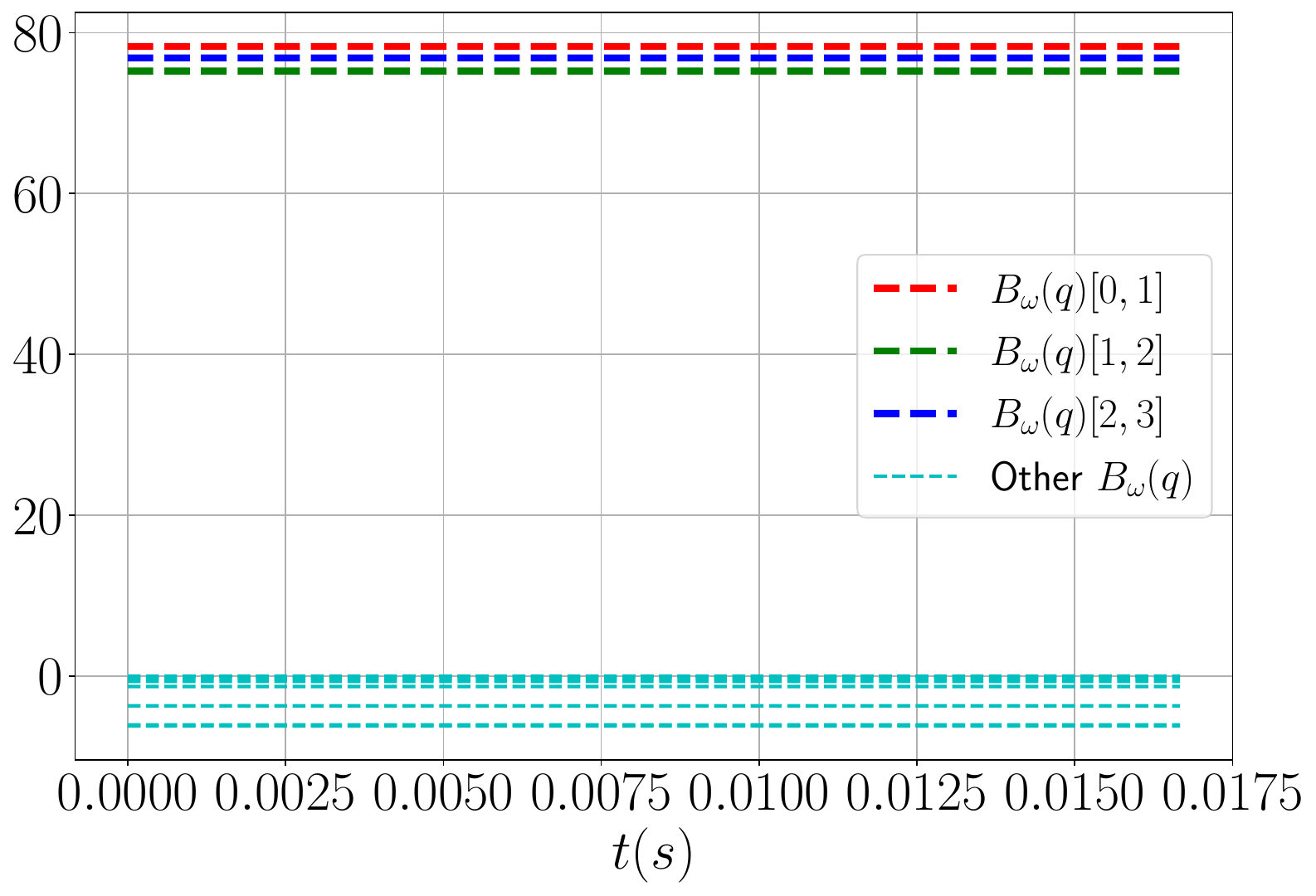}
        \caption{$\bfB_{\bfomega}(\frakq)$}
        \label{fig:pybullet_g_w_x_all}
\end{subfigure}%

\begin{subfigure}[t]{0.235\textwidth}
        \centering
\includegraphics[width=\textwidth]{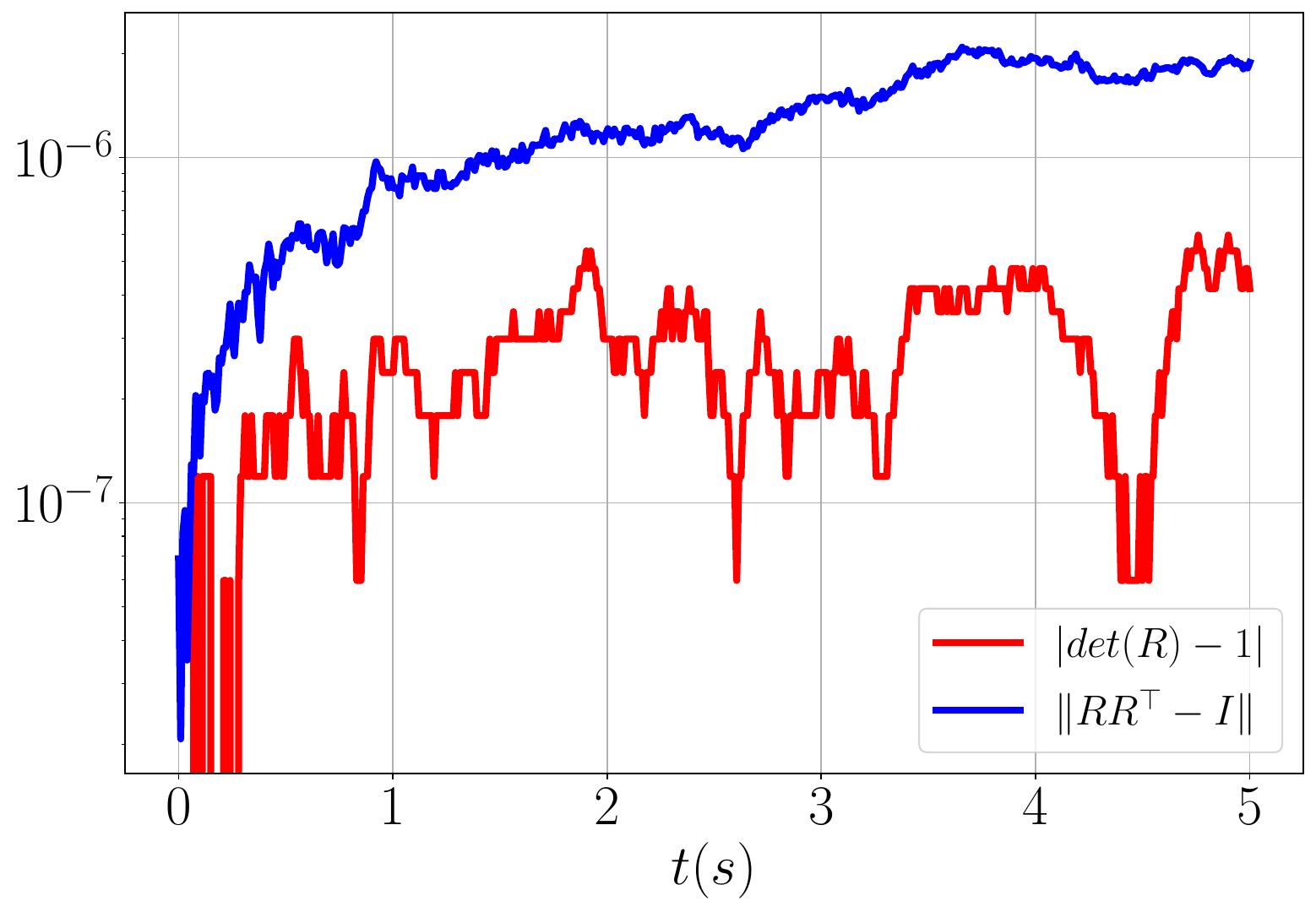}%
        \caption{SO(3) constraints.}
        \label{fig:pybullet_so3_constraints}
\end{subfigure}%
\hfill
\begin{subfigure}[t]{0.25\textwidth}
        \centering
\includegraphics[width=\textwidth]{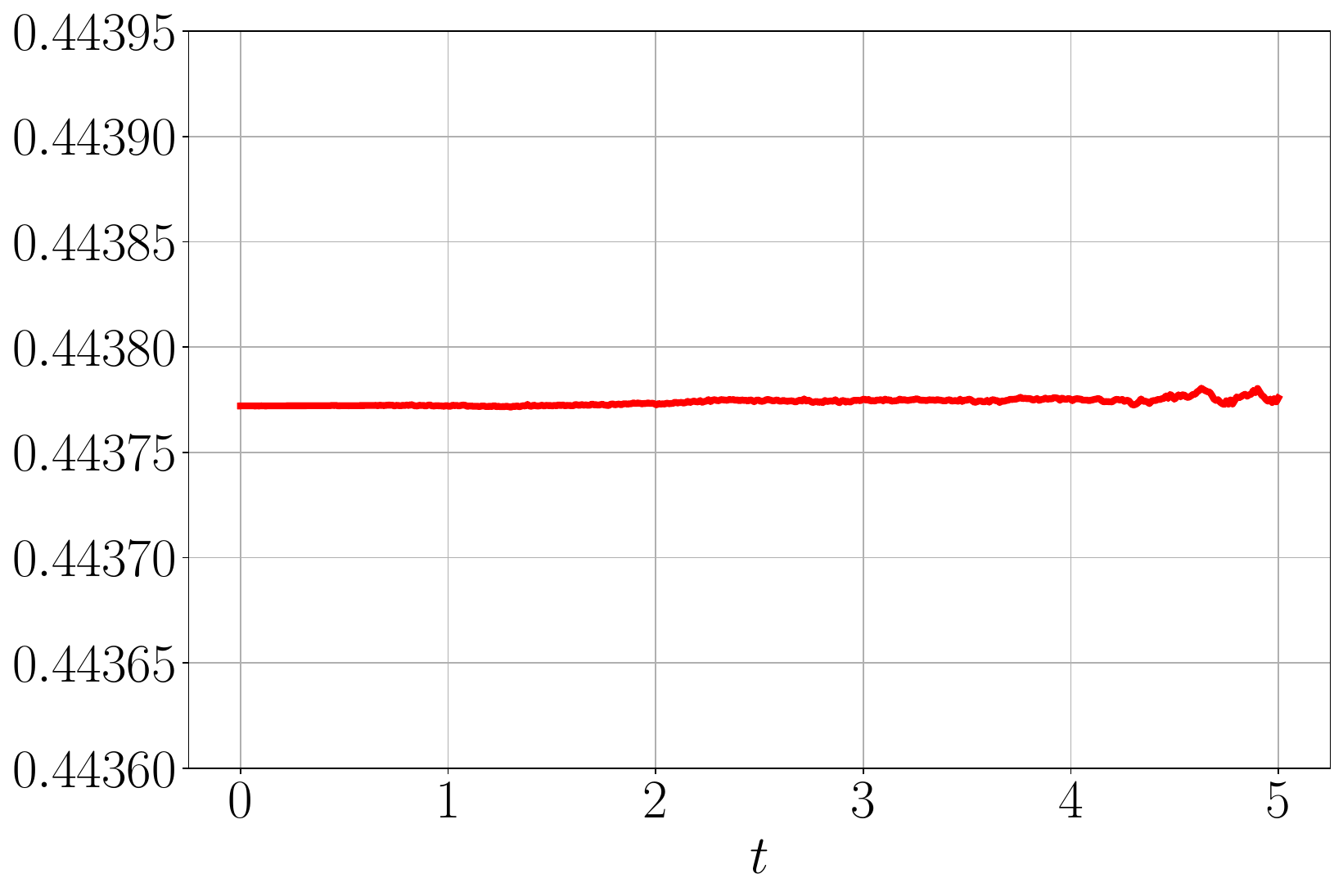}%
        \caption{Total energy.}
        \label{fig:pybullet_total_energy}
\end{subfigure}%
\caption{$SE(3)$ port-Hamiltonian neural ODE network on a Crazyflie quadrotor in the PyBullet simulator \cite{gym-pybullet-drones2020}.}
\label{fig:pybullet_exp}
\end{figure}

To collect training data, the vehicle was controlled from a random initial point to $9$ different desired positions and yaw angles using a PID controller, providing $9$ one-second trajectories. The trajectories were used to generate a dataset $\mathcal{D} = \{t_{0:N}^{(i)},\mathbf\frakq_{0:N}^{(i)}, \bfzeta_{0:N}^{(i)}, \bfu^{(i)})\}_{i=1}^D$ with $N = 5$ and $D = 432$. The $SE(2)$ port-Hamiltonian ODE network was formulated, as described in Sec. \ref{subsec:SE3_dyn_learning}, by ignoring the $z$ component of the position $\bfp$, and the pitch and roll components of the rotation $\bfR$. The model was trained for $8000$ iterations without a nominal model, i.e., $\bfM^{-1}_{\bfv 0}(\mathbf\frakq) = \bf0$, $\bfM^{-1}_{\bfomega 0}(\mathbf\frakq) = \bf0$, $\bfD_{\bfv 0}(\frakq, \frakp) = \bf0$, $\bfD_{\bfomega 0}(\frakq, \frakp) = \bf0$, $V_0(\frakq) = \bf0$ and $\bfB_0(\frakq) = \bf0$. We did not consider energy dissipation such as friction in the simulation, and we omitted the dissipation matrix $\bfD_{\bftheta}(\frakq, \frakp)$ in the model.

Fig. \ref{fig:car_exp} shows the training results for the $SE(2)$ Hamiltonian ODE network. Fig. \ref{fig:car_M_x_all} and \ref{fig:car_g_x_all} show that the mass inverse and the input gain matrix with scaling factor $\beta = 7.1$ are close to their correct values: $\bfM_\bfv(\mathbf\frakq)^{-1} \approx \bfI, \bfM_{\bfomega}(\mathbf\frakq)^{-1} \approx 20$ and $\bfB(\mathbf\frakq) \approx \bfI$. Fig. \ref{fig:car_Vx} shows a constant learned potential energy as expected. 

We verified our energy-based control design in Sec. \ref{sec:controller_design} by controlling the ground robot to track horizontal lemniscate and piecewise-linear trajectories. Fig. \ref{fig:car_trajviz} demonstrates that the omnidirectional ground vehicle controlled by our energy-based controller achieves successful trajectory tracking. The control gains were chosen as: $\bfK_\bfp = 0.72\bfI$, $\bfK_\bfv = 0.8\bfI, \bfK_{\bfR} = 9.1\bfI$, $\bfK_{\bfomega} = 3.6$. Fig. \ref{fig:car_tracking_results} plots the tracking errors in the position, yaw angles, linear velocity, and angular velocity.
}

\subsection{Crazyflie Quadrotor}
\label{subsec:crazieflie_quad}

In this section, we demonstrate that our $SE(3)$ dynamics learning and control approach can achieve trajectory tracking for an underactuated system. We consider a Crazyflie quadrotor, shown in Fig. \ref{fig:pybullet_crazyflie}, simulated in the physics-based simulator PyBullet \cite{gym-pybullet-drones2020}. The control input $\bfu = [f, \bftau]$ includes a thrust $f\in \mathbb{R}_{\geq 0}$ and a torque vector $\bftau \in \mathbb{R}^3$ generated by the $4$ rotors. The generalized coordinates and velocity are $\mathbf\frakq = [\bfp^\top \quad \bfr_1^\top \quad \bfr_2^\top \quad \bfr_3^\top]^\top$ and $\bfzeta = [\bfv^\top \quad \bfomega^\top]^\top$ as before. 

The quadrotor was controlled from a random starting point to $18$ different desired poses using a PID controller \cite{gym-pybullet-drones2020}, providing $18$ $2.5$-second trajectories. The trajectories were used to generate a dataset $\mathcal{D} = \{t_{0:N}^{(i)},\mathbf\frakq_{0:N}^{(i)}, \bfzeta_{0:N}^{(i)}, \bfu^{(i)})\}_{i=1}^D$ with $N = 5$ and $D = 1080$. The $SE(3)$ port-Hamiltonian ODE network was trained, as described in Sec. \ref{subsec:SE3_dyn_learning}, for $500$ iterations without a nominal model, i.e., $\bfM^{-1}_{\bfv 0}(\mathbf\frakq) = \bf0$, $\bfM^{-1}_{\bfomega 0}(\mathbf\frakq) = \bf0$, $\bfD_{\bfv 0}(\frakq, \frakp) = \bf0$, $\bfD_{\bfomega 0}(\frakq, \frakp) = \bf0$, $V_0(\frakq) = \bf0$ and $\bfB_0(\frakq) = \bf0$. We did not consider energy dissipation such as drag effect in the PyBullet simulator, and we omitted the dissipation matrix $\bfD_{\bftheta}(\frakq, \frakp)$ in the model design.

\begin{figure}[t]
\centering
\includegraphics[width=0.5\textwidth]{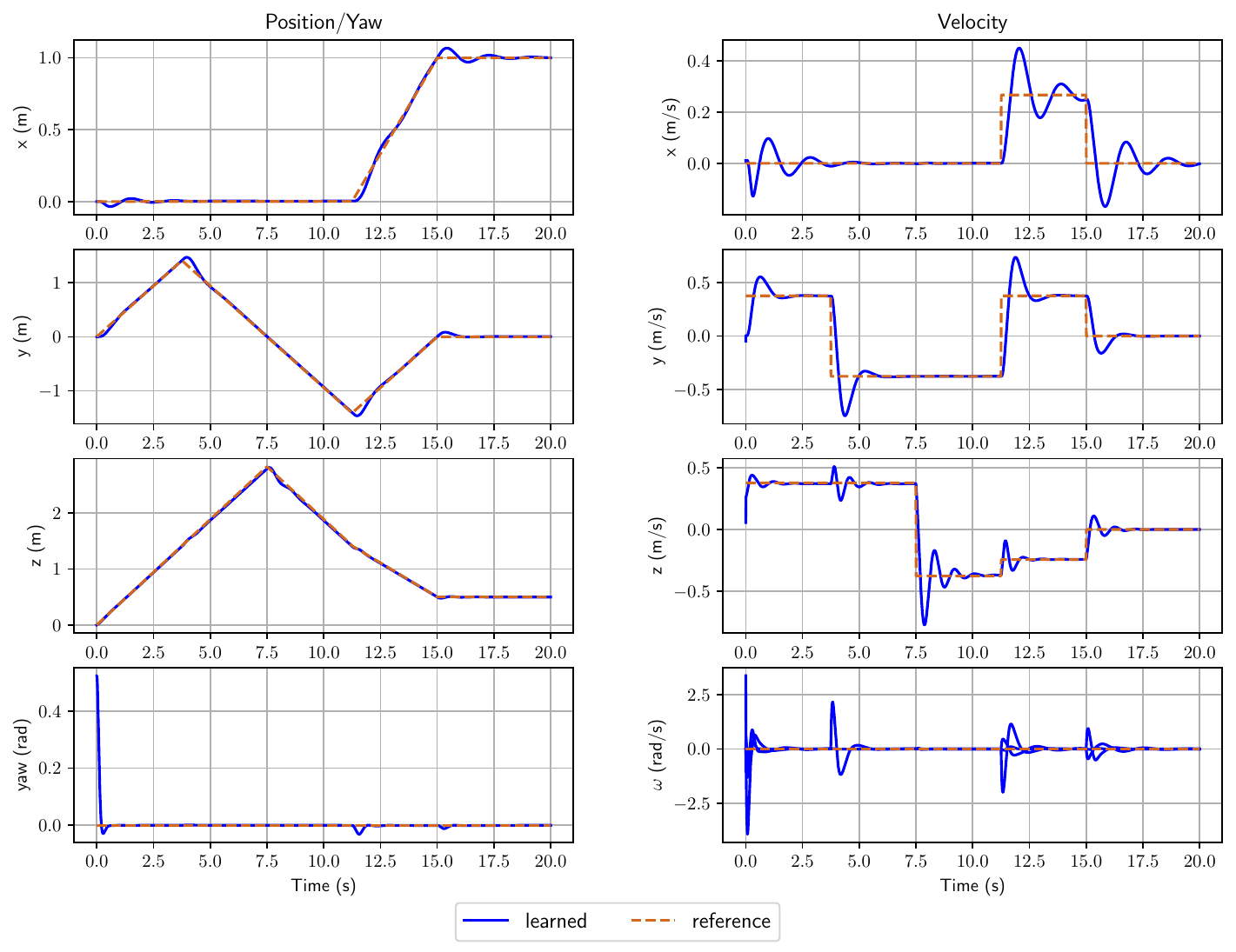}%
\caption{Crazyflie quadrotor trajectory (blue) tracking a desired diamond-shaped trajectory (orange) shown in Fig.~\ref{fig:pybullet_traj_viz}.}
\label{fig:pybullet_tracking_results}
\end{figure}

\begin{figure}[t]
\centering
\begin{subfigure}[ht]{0.25\textwidth}
        \centering
\includegraphics[width=\textwidth]{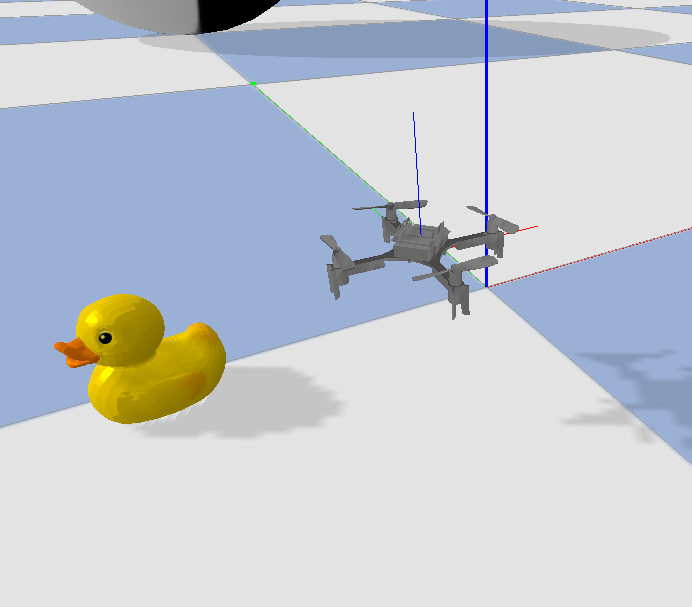}%
        \caption{Crazyflie simulator}
        \label{fig:pybullet_crazyflie}
\end{subfigure}%
\begin{subfigure}[ht]{0.25\textwidth}
        \centering
\includegraphics[width=\textwidth]{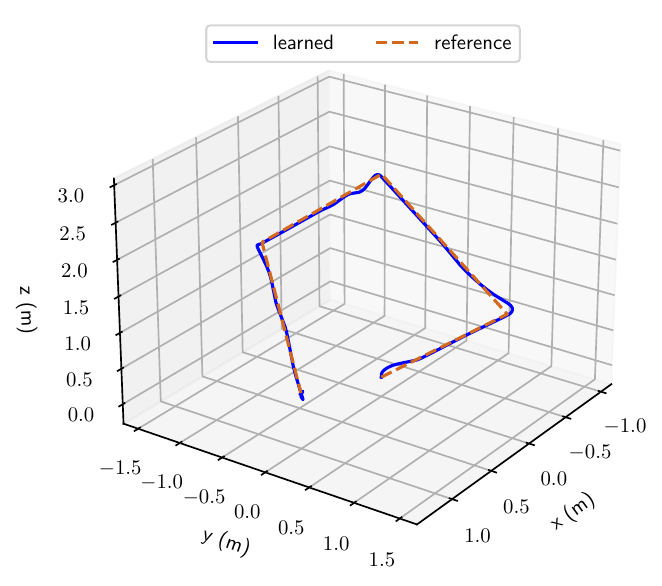}%
        \caption{Trajectory tracking}
        \label{fig:pybullet_trajviz}
\end{subfigure}%
\caption{\NEWW{Trajectory tracking experiment with a Crazyflie quadrotor in the PyBullet simulator \cite{gym-pybullet-drones2020}.}}
\label{fig:pybullet_traj_viz}
\end{figure}

\begin{figure*}[t]
\centering
\begin{subfigure}[t]{0.245\textwidth}
        \centering
\includegraphics[width=\textwidth]{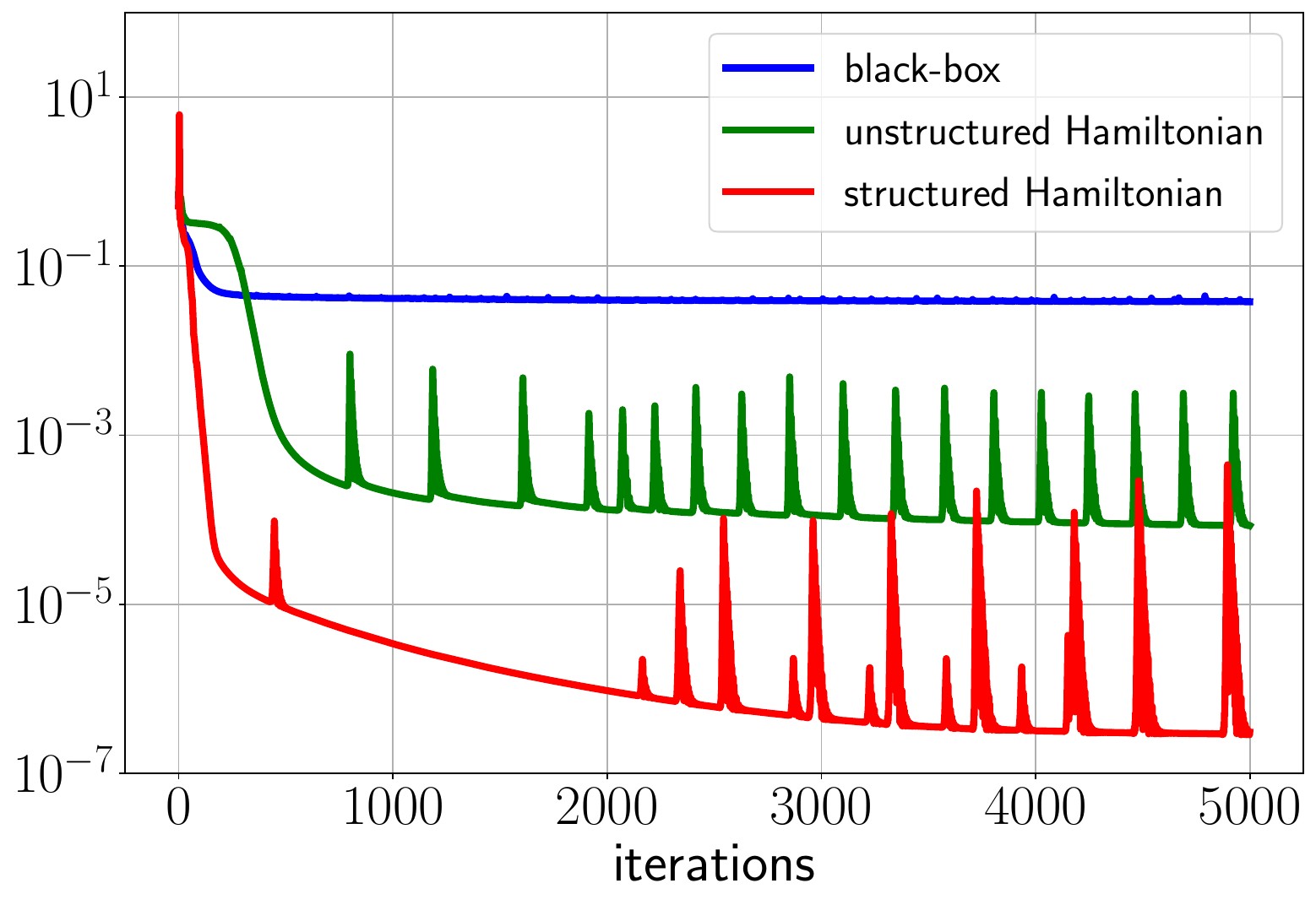}%
        \caption{Training loss}
        \label{fig:pend_training_loss}
\end{subfigure}%
\hfill
\begin{subfigure}[t]{0.25\textwidth}
        \centering
\includegraphics[width=\textwidth]{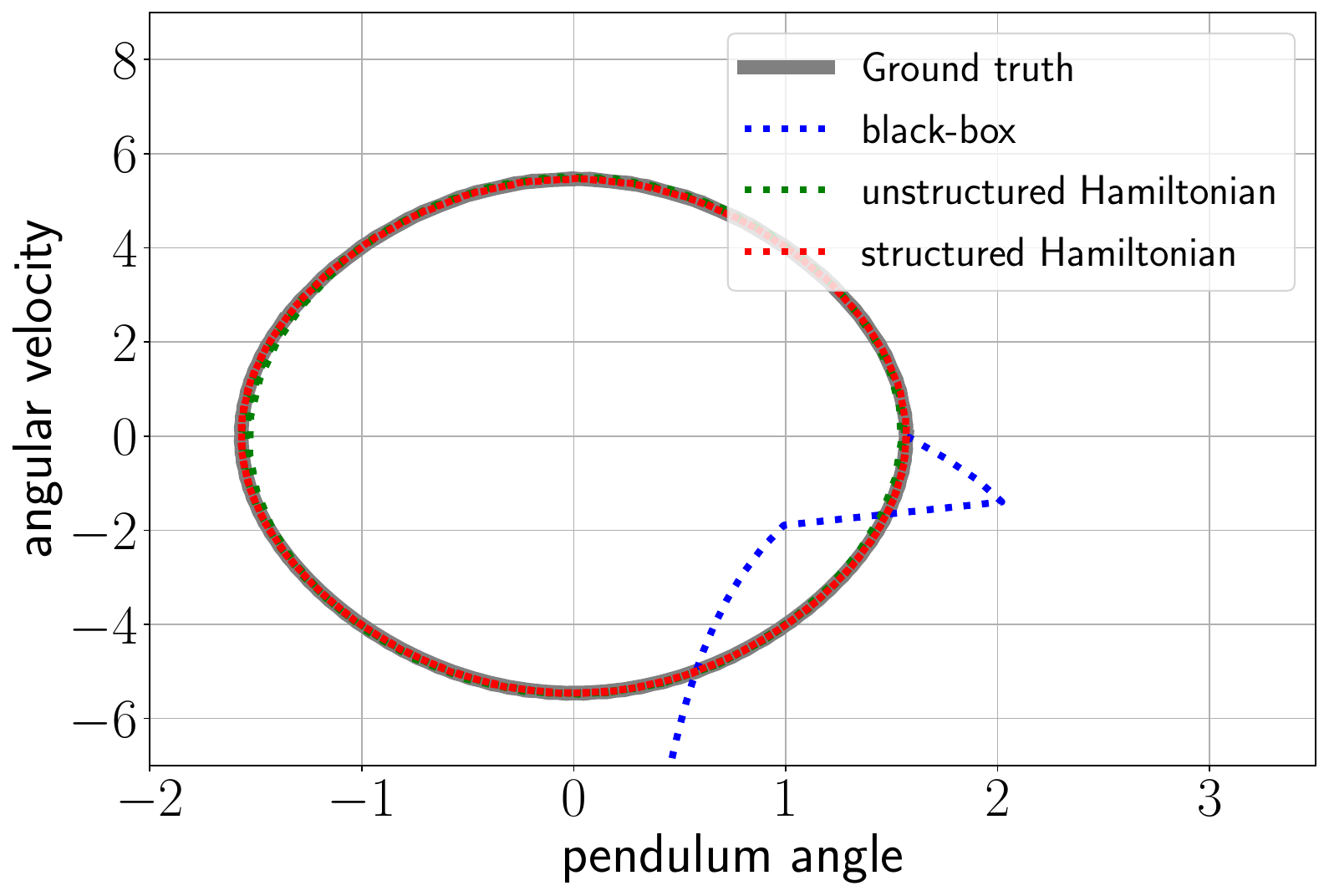}%
        \caption{Phase portraits}
        \label{fig:pend_phase_portrait}
\end{subfigure}%
\hfill
\begin{subfigure}[t]{0.24\textwidth}
        \centering
\includegraphics[width=\textwidth]{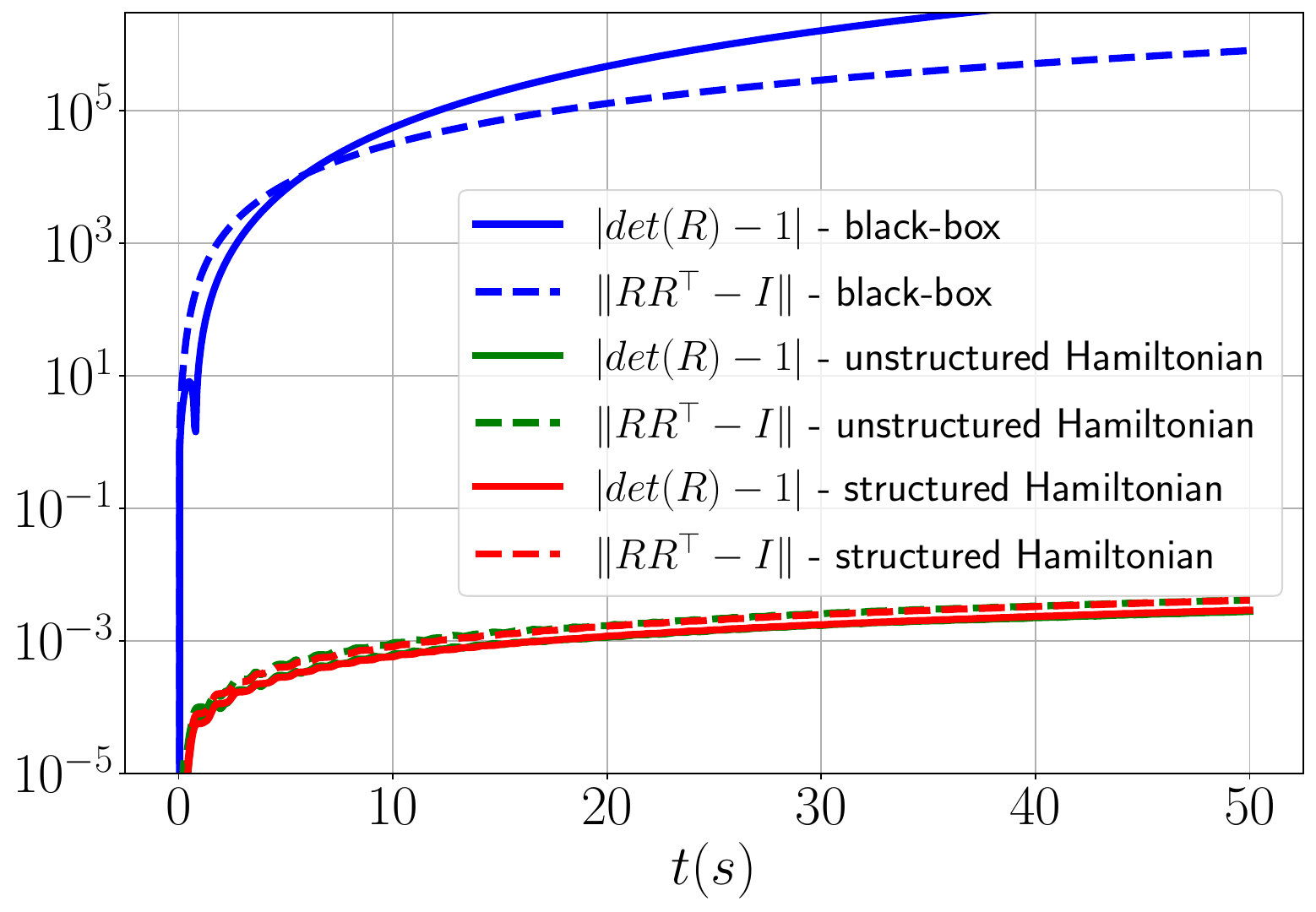}%
        \caption{SO(3) constraints}
        \label{fig:pend_so3_constraints}
\end{subfigure}%
\hfill
\begin{subfigure}[t]{0.235\textwidth}
        \centering
\includegraphics[width=\textwidth]{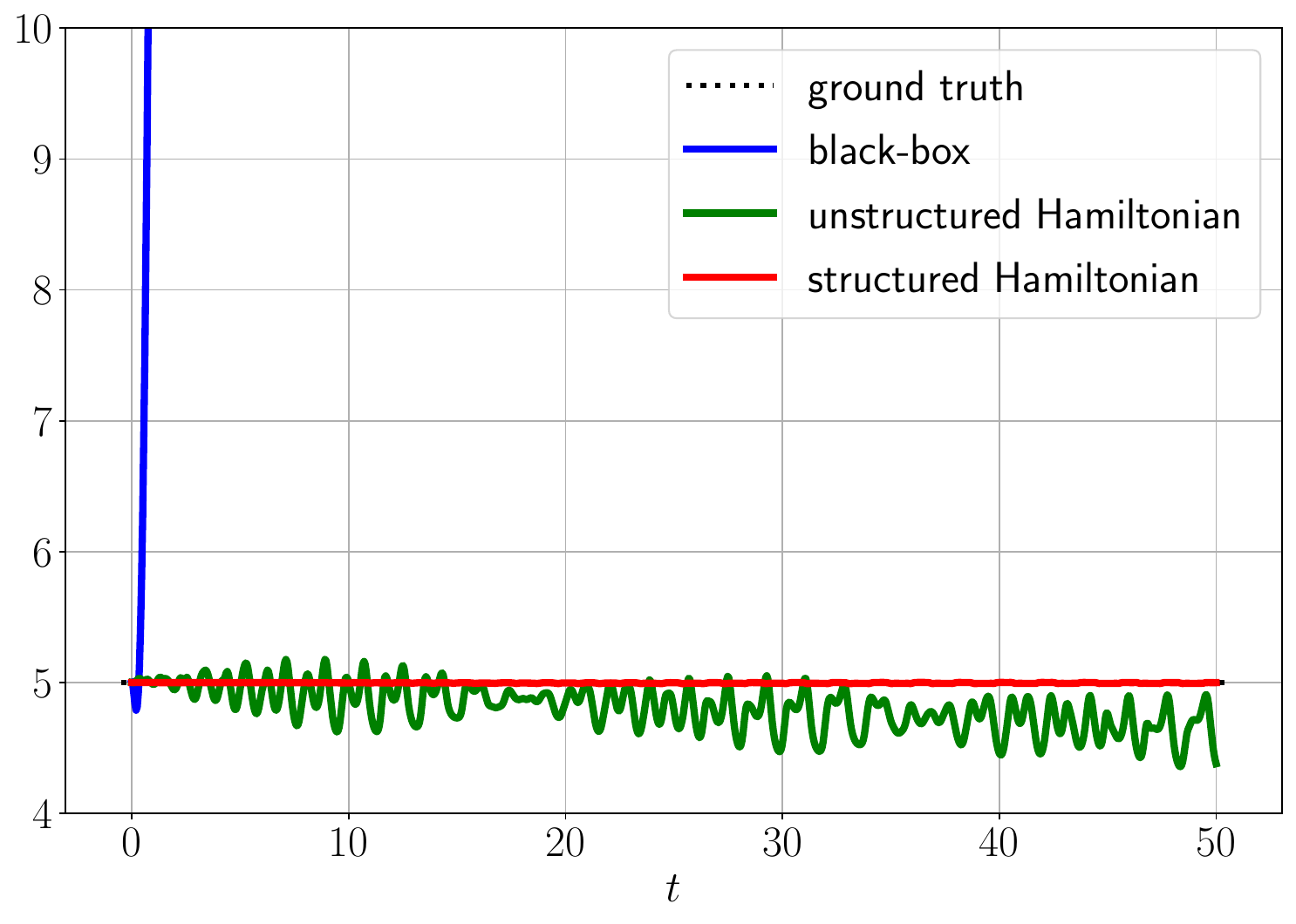}%
        \caption{Total energy}
        \label{fig:pend_total_energy}
\end{subfigure}%
\caption{Comparison of different neural network architectures to learn pendulum dynamics: 1) black-box, i.e., the dynamics function $\bff$ is represented by a multilayer perceptron network; 2) unstructured Hamiltonian, i.e., the Hamiltonian function is represented by a multilayer perceptron network instead of the sum of kinetic and potential energy as shown in Eq. \eqref{eq:hamiltonian_se3}; 3) structured Hamiltonian, i.e., the Hamiltonian function has the form of Eq. \eqref{eq:hamiltonian_se3}. Initialized at $\phi = \pi/2$, the learned pendulum dynamics are rolled out, showing that our approach with structured Hamiltonian preserves the phase portraits, $SO(3)$ constraints, and the conservation of energy better than the other models.}
\label{fig:pend_so3_comparison}
\end{figure*}

\begin{figure*}[t]
\centering
\begin{subfigure}[t]{0.26\textwidth}
        \centering
\includegraphics[width=\textwidth]{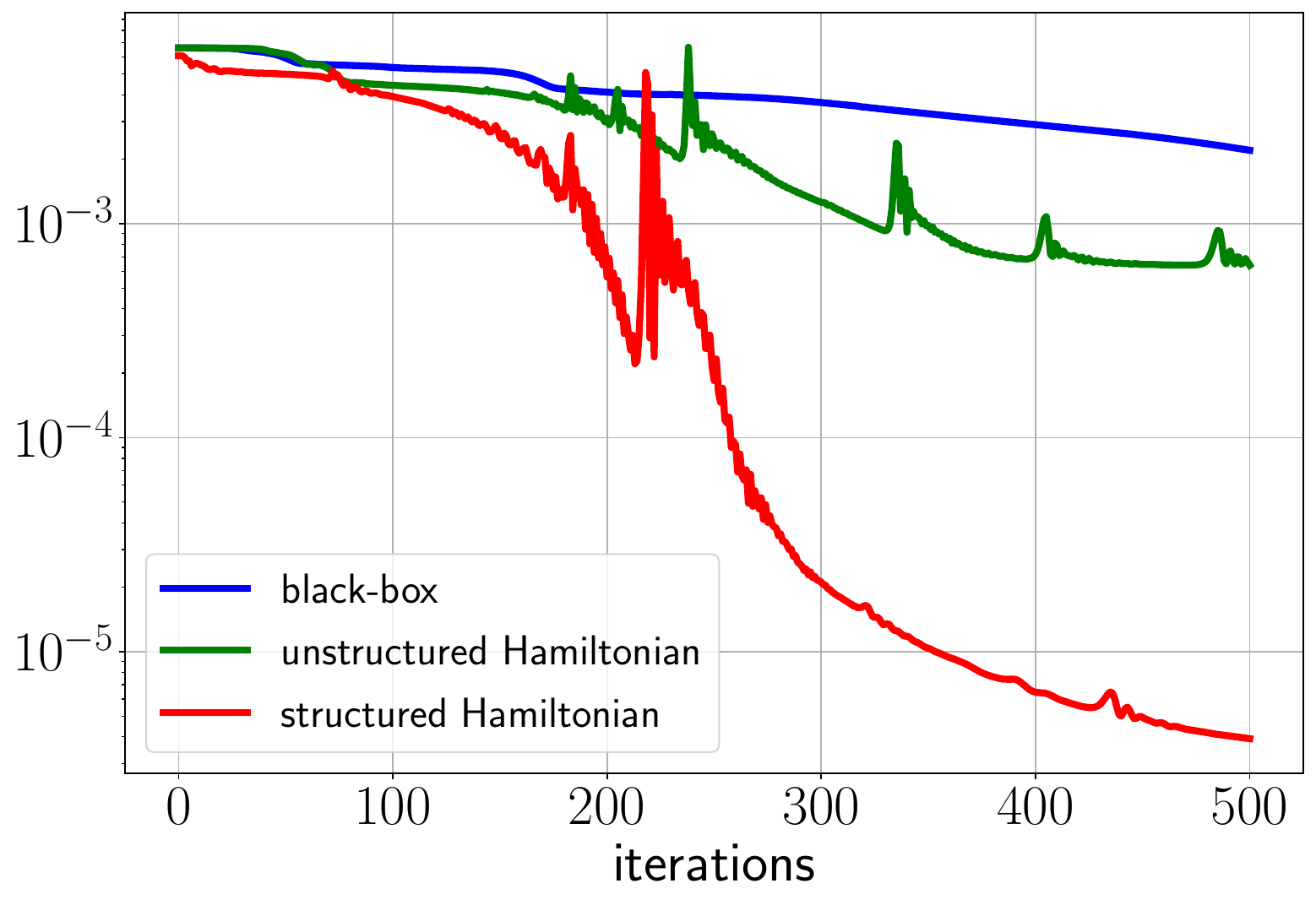}%
        \caption{Training loss}
        \label{fig:quad_loss_comparison}
\end{subfigure}%
\hfill
\begin{subfigure}[t]{0.26\textwidth}
        \centering
\includegraphics[width=\textwidth]{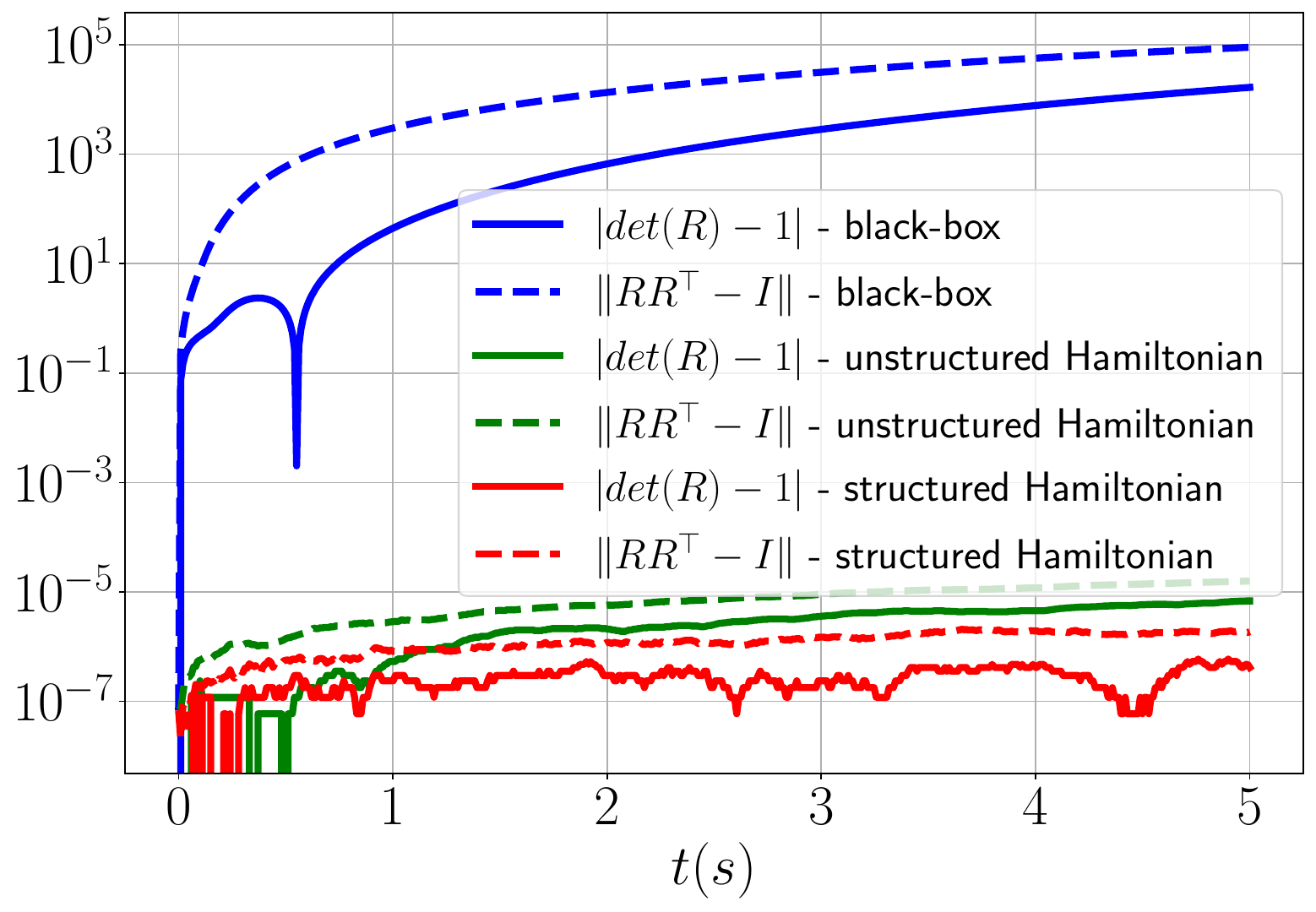}%
        \caption{SO(3) constraints}
        \label{fig:quad_se3_constraints_comparison}
\end{subfigure}%
\hfill
\begin{subfigure}[t]{0.26\textwidth}
        \centering
\includegraphics[width=\textwidth]{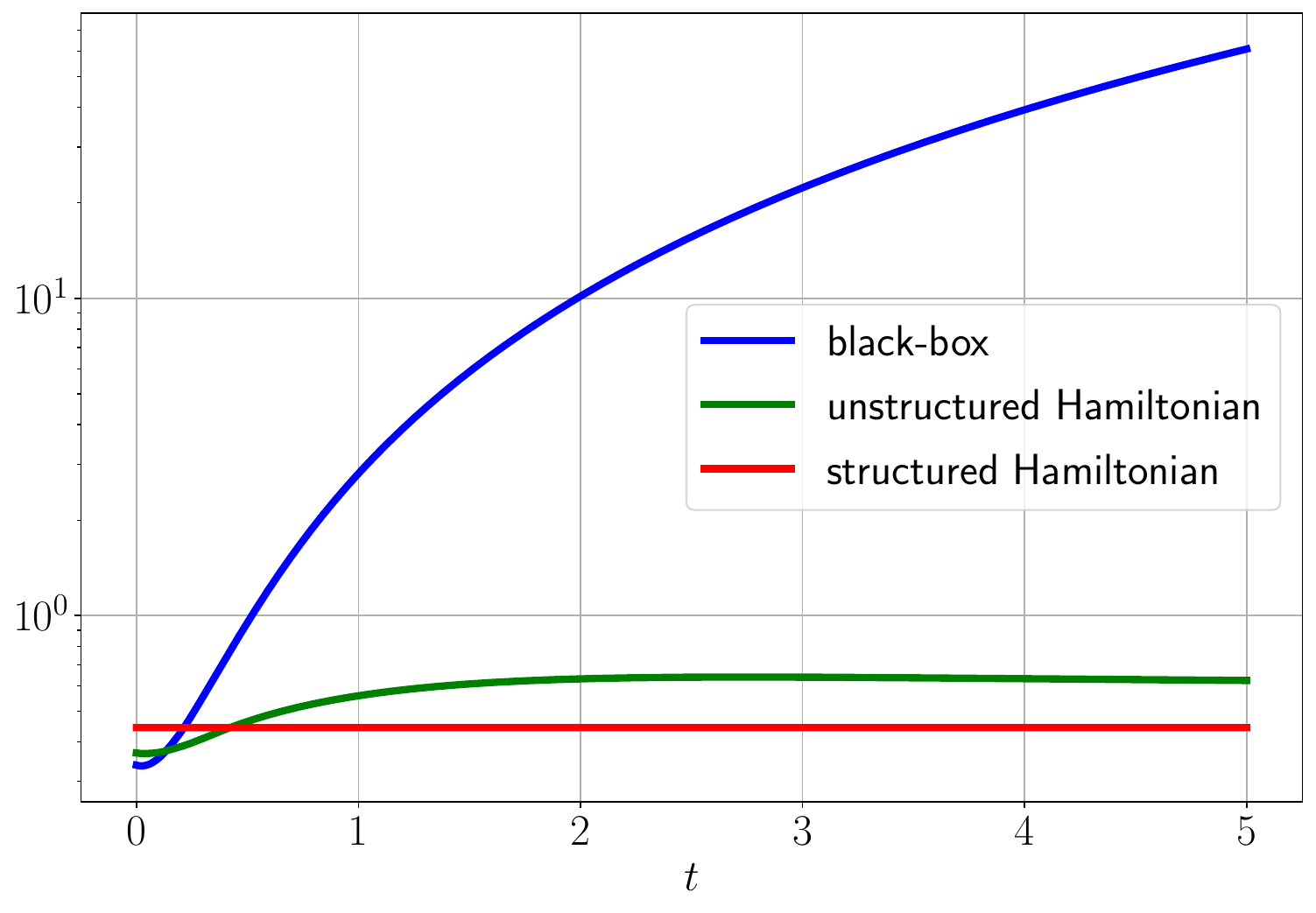}%
        \caption{Total energy}
        \label{fig:quad_hamiltonian_comparison}
\end{subfigure}%
\hfill
\begin{subfigure}[t]{0.21\textwidth}
        \centering
\includegraphics[width=\textwidth]{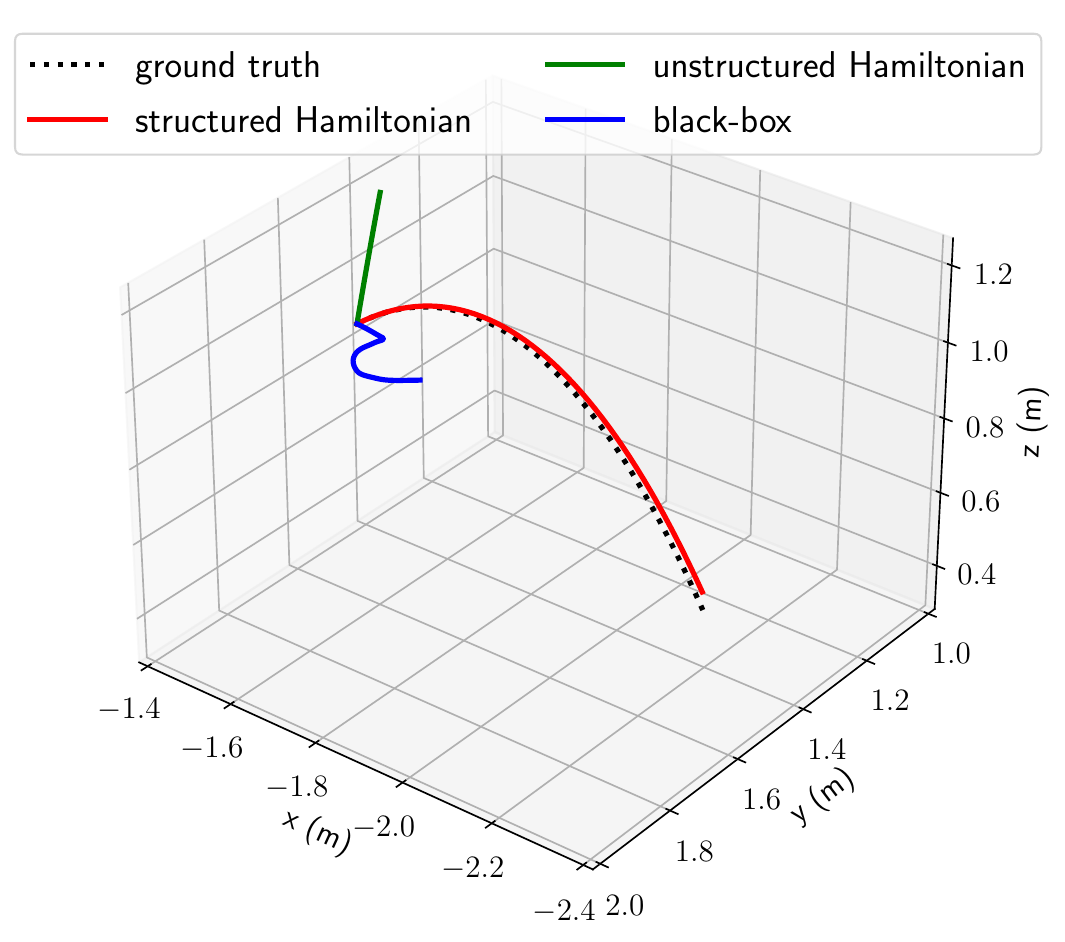}%
        \caption{Predicted trajectories}
        \label{fig:quad_predicted_traj}
\end{subfigure}%
\caption{Comparison of different neural network architectures to learn quadrotor dynamics: 1) black-box, i.e., the dynamics function $\bff$ is represented by a multilayer perceptron network; 2) unstructured Hamiltonian, i.e., the Hamiltonian function is represented by a multilayer perceptron network instead of the sum of kinetic and potential energy as shown in Eq. \eqref{eq:hamiltonian_se3}; 3) structured Hamiltonian, i.e., the Hamiltonian function has the form of Eq. \eqref{eq:hamiltonian_se3}. Initialized at a random pose and twist, the learned quadrotor dynamics are rolled out, showing that our approach with structured Hamiltonian preserves $SO(3)$ constraints and the conservation of energy better than the other models, and provides better state predictions.}
\label{fig:quad_se3_comparison}
\end{figure*}

The training and test results are shown in Fig. \ref{fig:pybullet_exp}. The learned generalized mass and inertia converged to constant diagonal matrices: $\bfM_\bfv^{-1}(\mathbf\frakq) \approx 27.5\bfI$, $\bfM_{\bfomega}^{-1}(\mathbf\frakq) \approx \text{diag}([351, 340, 181])$. The input matrix $\bfB_{\bfv}(\mathbf\frakq)$ converged to a constant matrix with $\begin{bmatrix}\bfB_{\bfv}(\mathbf\frakq)\end{bmatrix}_{2,0} \approx 1.32$ while the other entries were closed to $0$, consistent with the fact that the quadrotor thrust only affects the linear velocity along the $z$ axis in the body-fixed frame. The input matrix $\bfB_{\bfomega}(\mathbf\frakq)$ converged to $\sim 76\bfI$ as the motor torques affects all components of the angular velocity $\bfomega$. The learned potential energy $\calV(\mathbf\frakq)$ was linear in the height $z$, agreeing with the gravitational potential.

We also verified our energy-based control design in Sec. \ref{sec:controller_design} by controlling the quadrotor to track a desired trajectory based on the learned dynamics model. Given desired position $\bfp^*$ and heading $\bfpsi^*$ (yaw angle), we construct an appropriate $\bfR^*$ and $\frakp^*$ to be used with the energy-based controller in  \eqref{eq:idapbc_pose_twist_tracking}. 
By expanding the terms in  \eqref{eq:idapbc_pose_twist_tracking} and choosing the control gain $\bfK_\bfd$ of the form $\bfK_\bfd = \begin{bmatrix}
\bfK_v & \bf0\\
\bf0 & \bfK_\omega
\end{bmatrix}$, the control input can be written explicitly as
\begin{equation}
\label{eq:control_quadrotor}
\bfu = \bfB^{\dagger}(\mathbf\frakq) \begin{bmatrix}
\bftau_\bfv \\
\bftau_{\bfomega}
\end{bmatrix},
\end{equation}
where 
\begin{eqnarray}
\bftau_\bfv &=& \bfR^{\top} \frac{\partial \calV(\mathbf\frakq)}{\partial \bfp}  - \hat{\mathbf\frakp}_\bfv \bfomega - \bfR^\top\bfK_\bfp(\bfp - \bfp^*) \nonumber \\
&& - \bfK_\bfv(\bfv - \bfR^{\top}\dot{\bfp}^*)  + \bfM_1 (\bfR^\top \ddot{\bfp}^* - \hat{\bfomega}\bfR^\top \dot{\bfp}^*),  \\
\bftau_{\bfomega} &=& \sum_{i = 1}^3 \hat{\bfr}_i \frac{\partial \calV(\mathbf\frakq)}{\partial \bfr_i}- \bfK_{\bfomega}(\bfomega - \bfR^\top \bfR^* \bfomega^*)  \nonumber \\
&&   - (\hat{\mathbf\frakp}_{\bfomega} \bfomega + \hat{\mathbf\frakp}_\bfv \bfv) -\frac{1}{2}\prl{\bfK_{\bfR}\bfR^{*\top}\bfR-\bfR^\top\bfR^{*}\bfK_{\bfR}^\top}^{\vee}   \nonumber \\
&& + \bfM_2(\bfR^\top \bfR^* \dot{\bfomega}^* - \hat{\bfomega}_e\bfR^\top\bfR^* \bfomega^*).
\end{eqnarray}
Note that $\bftau_\bfv \in \mathbb{R}^3$ is the desired thrust in the body frame and depends only on the desired position $\bfp^*$ and the current pose. The desired thrust is transformed to the world frame as $\bfR\bftau_\bfv$.
Inspired by \cite{lee2010geometric}, the vector $\bfR\bftau_\bfv$ should be along the $z$ axis of the body frame, i.e., the third column $\bfr_3^*$ of the desired rotation matrix $\bfR^*$. The second column $\bfr^*_2$ of the desired rotation matrix $\bfR^*$ can be chosen so that it has the desired yaw angle $\psi^*$ and is perpendicular to $\bfr_3^*$. This can be done by projecting the second column of the yaw's rotation matrix $\bfr_2^\psi = [-\sin{\psi}, \cos{\psi}, 0]$ onto the plane perpendicular to $\bfr_3^*$. We have $\bfR^* = [\bfr_1^* \quad \bfr_2^* \quad \bfr_3^*]$ where:
\begin{equation}
\bfr_3^* = \frac{\bfR\bftau_\bfv}{\Vert \bfR\bftau_\bfv \Vert}, \bfr_1^* = \frac{\bfr_2^\psi \times \bfr^*_3}{\Vert  \bfr_2^\psi \times \bfr^*_3 \Vert}, \bfr_2^* = \bfr_3^* \times \bfr_1^*,
\end{equation}
and $\hat{\bfomega}^* = \bfR^{*\top}\dot{\bfR}^*$. The derivative $\dot{\bfR}^*$ is calculated as follows:
\begin{eqnarray}
\dot{\bfr}_3^* &=& \bfr_3^* \times \frac{\dot{\bfR\bftau_\bfv}}{\Vert \bfR\bftau_\bfv \Vert} \times \bfr_3^*,   \\
\dot{\bfr}_1^* &=& \bfr_1^* \times \frac{\dot{\bfr}_2^\psi \times \bfr_3^* + \bfr_2^\psi \times \dot{\bfr}_3^*}{\Vert \bfr_2^\psi \times \bfr_3^* \Vert} \times \bfr_1^*,  \\
\dot{\bfr}_2^* &=& \dot{\bfr}_3^* \times \bfr_1^* + \bfr_3^* \times \dot{\bfr}_1^*.
\end{eqnarray} 
Plugging $\bfR^*$ and $\bfomega^*$ back in $\bftau_{\bfomega}$, we obtain the complete control input $\bfu$ in  \eqref{eq:control_quadrotor}.


Fig. \ref{fig:pybullet_trajviz} shows qualitatively that the quadrotor controlled by our energy-based controller achieves successful trajectory tracking. The control gains were chosen as: $\bfK_\bfp = \diag([0.8, 0.8, 3.9])$, $\bfK_\bfv = 0.23\bfI, \bfK_{\bfR} = \diag([3.6, 3.6, 6.9])$, $\bfK_{\bfomega} = \diag([0.3, 0.3, 0.6])$. Fig. \ref{fig:pybullet_exp} shows quantitatively the tracking errors in the position, yaw angles, linear velocity, and angular velocity. On an Intel i9 3.1 GHz CPU with 32GB RAM, our controller's computation time in Python 3.7 was about $2.5$ ms per control input, including forward passes through the neural networks, showing that it is suitable for fast real-time applications.

\subsection{Comparison to Unstructured Neural ODE Models}

\begin{figure*}[t]
\centering
\begin{subfigure}[t]{0.33\textwidth}
        \centering
\includegraphics[width=\textwidth]{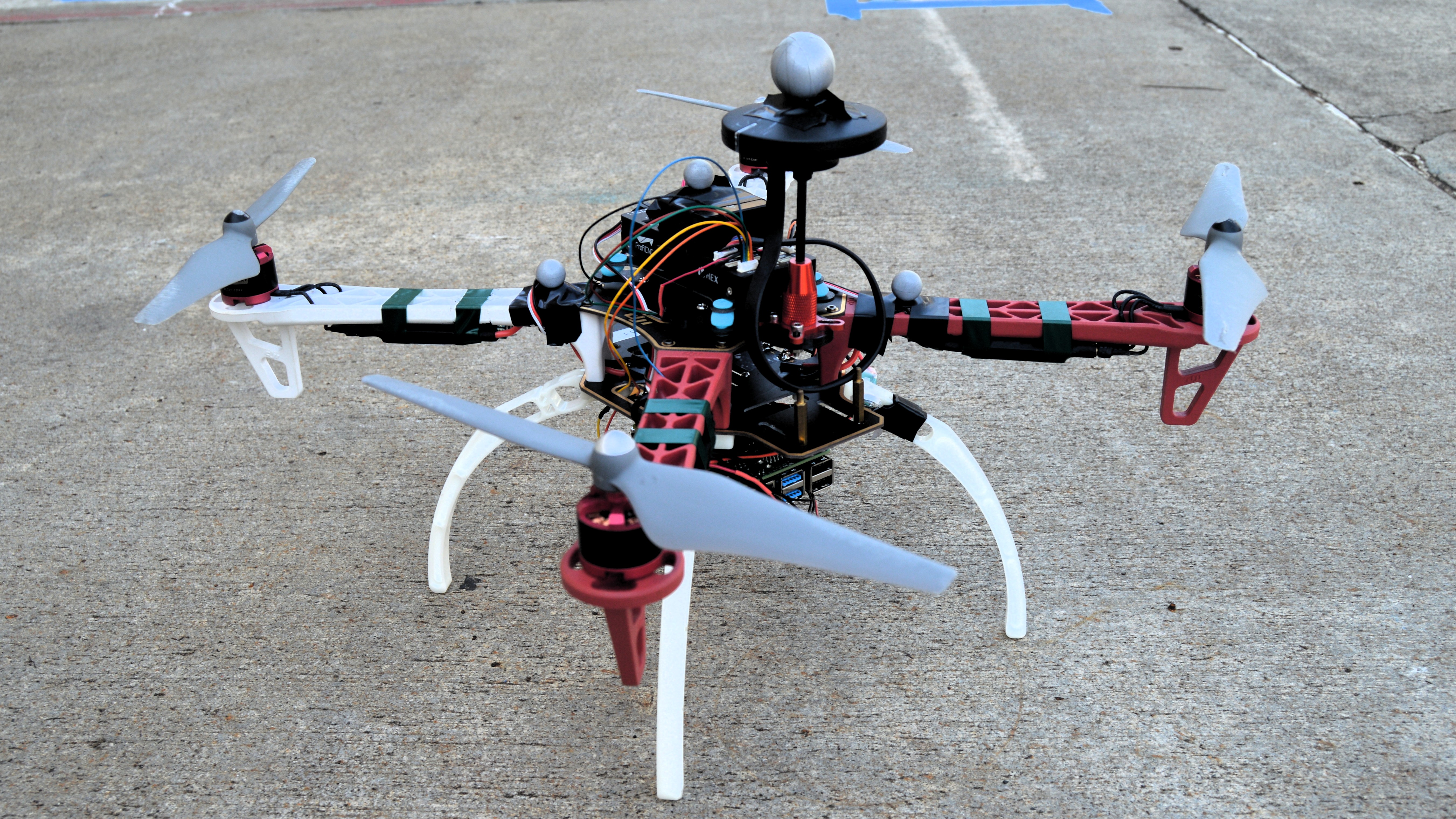}%
        \caption{RaspberryPi drone}
        \label{fig:quadrotor_barebone}
\end{subfigure}%
\hfill%
\begin{subfigure}[t]{0.33\textwidth}
        \centering
\includegraphics[width=\textwidth]{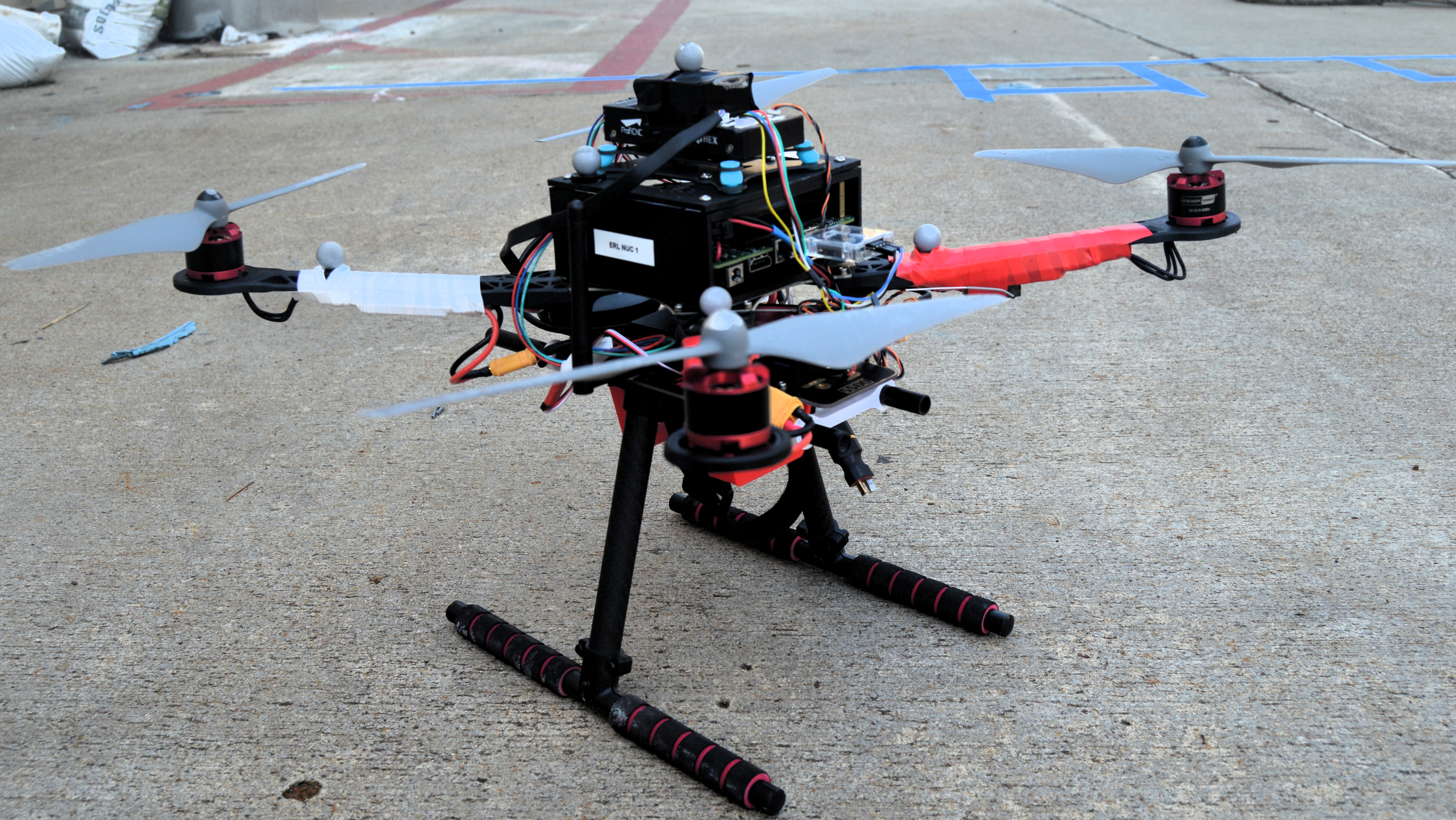}%
        \caption{Intel NUC drone}
        \label{fig:quadrotor_nuc}
\end{subfigure}%
\hfill%
\begin{subfigure}[t]{0.33\textwidth}
        \centering
\includegraphics[width=\textwidth]{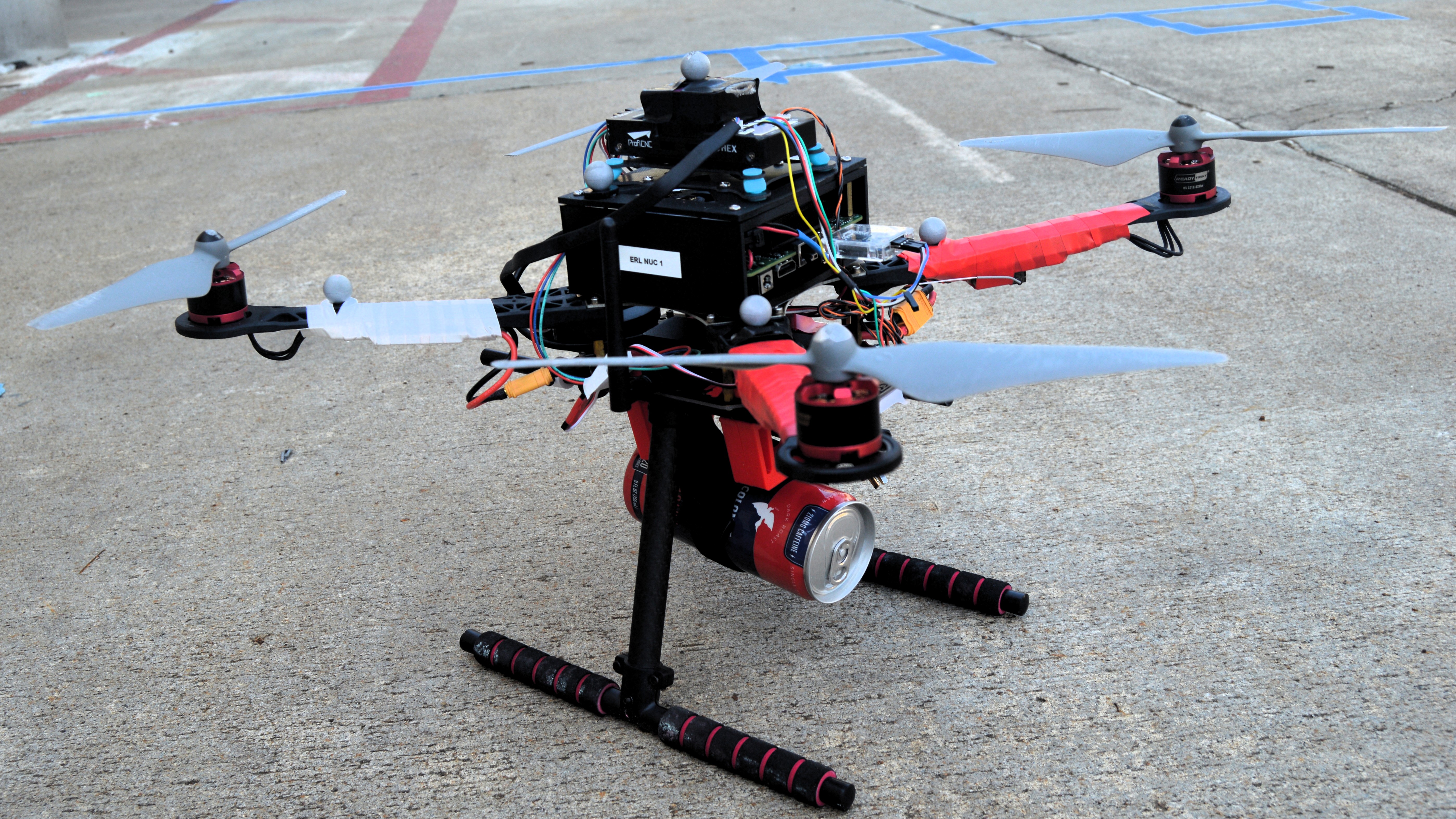}%
        \caption{Intel NUC drone with payload}
        \label{fig:quadrotor_nuc_payload}
\end{subfigure}%
\caption{Quadrotor robots used in the experiments: (a) RaspberryPi quadrotor whose mass and inertia matrix serve as nominal values for our learning framework, (b) Intel NUC quadrotor with a different frame, and (c) Intel NUC quadrotor carrying a coffee can as payload.}
\label{fig:quadrotor_photos}
\end{figure*}
\begin{figure*}[t]
\centering
\begin{subfigure}[t]{0.33\textwidth}
        \centering
\includegraphics[width=\textwidth]{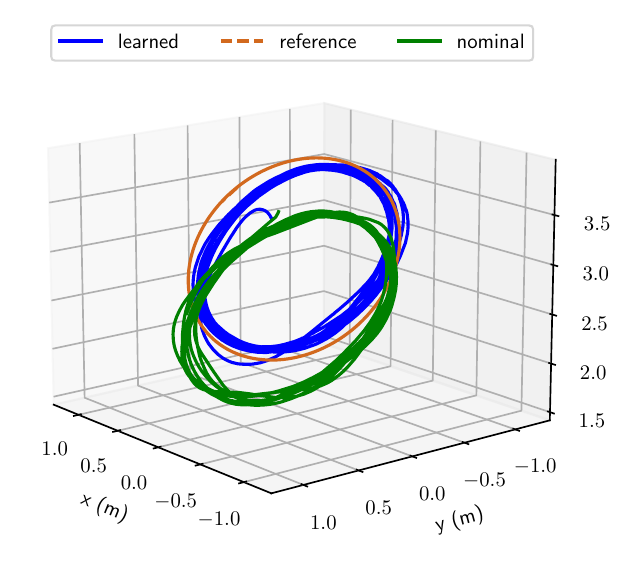}%
        \caption{Vertical circle}
        \label{fig:quadrotor_upgrade_circle_traj}
\end{subfigure}%
\hfill
\begin{subfigure}[t]{0.33\textwidth}
        \centering
\includegraphics[width=\textwidth]{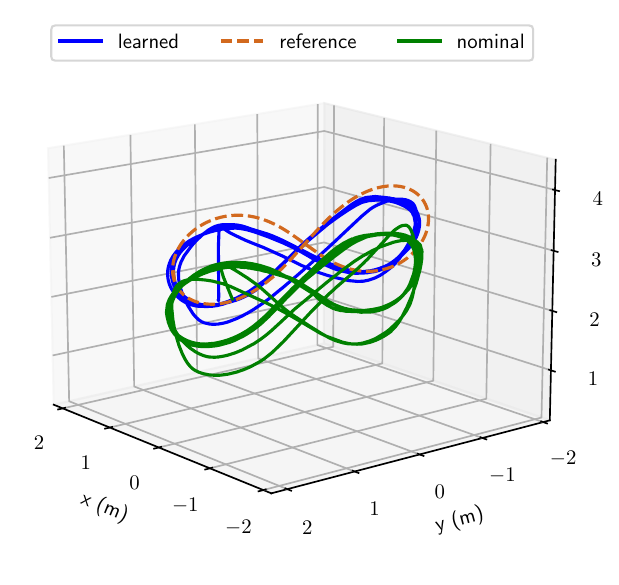}%
        \caption{Vertical lemniscate}
        \label{fig:quadrotor_upgrade_lemniscate_traj}
\end{subfigure}%
\hfill
\begin{subfigure}[t]{0.324\textwidth}
        \centering
\includegraphics[width=\textwidth]{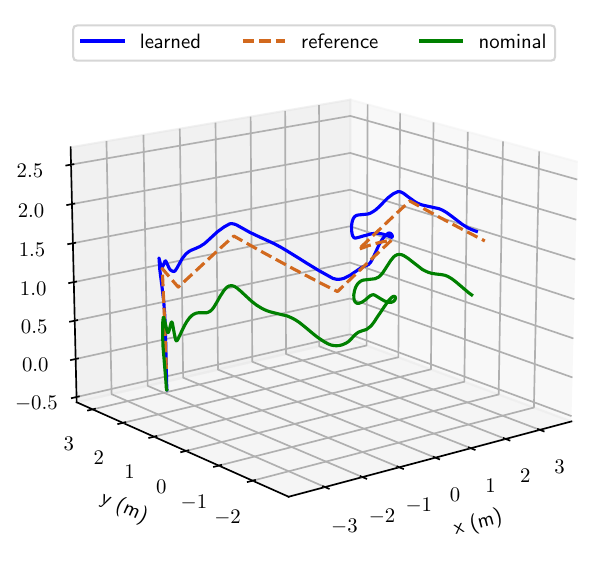}%
        \caption{Piecewise linear}
        \label{fig:quadrotor_upgrade_zigzag_traj}
\end{subfigure}%
\caption{Trajectory tracking with real quadrotors: (a) vertical circle, (b) vertical lemniscate, (c) piecewise linear trajectory.}
\label{fig:drone_upgrade_traj_3d}
\end{figure*}

\begin{figure*}[t]
\centering
\begin{subfigure}[t]{0.5\textwidth}
        \centering
\includegraphics[width=\textwidth]{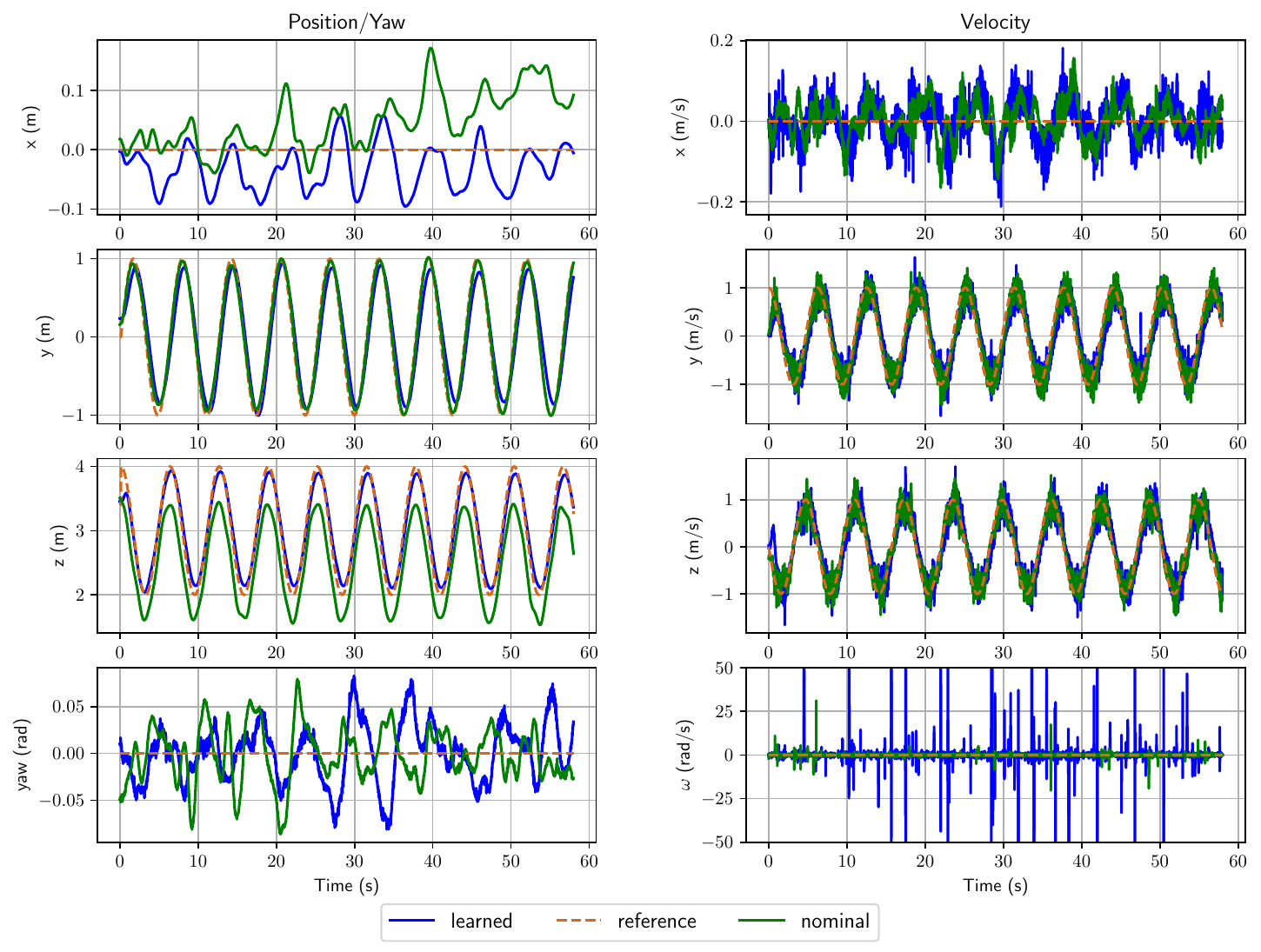}%
        \caption{Vertical circle}
        \label{fig:circle_tracking_plot}
\end{subfigure}%
\begin{subfigure}[t]{0.5\textwidth}
        \centering
\includegraphics[width=\textwidth]{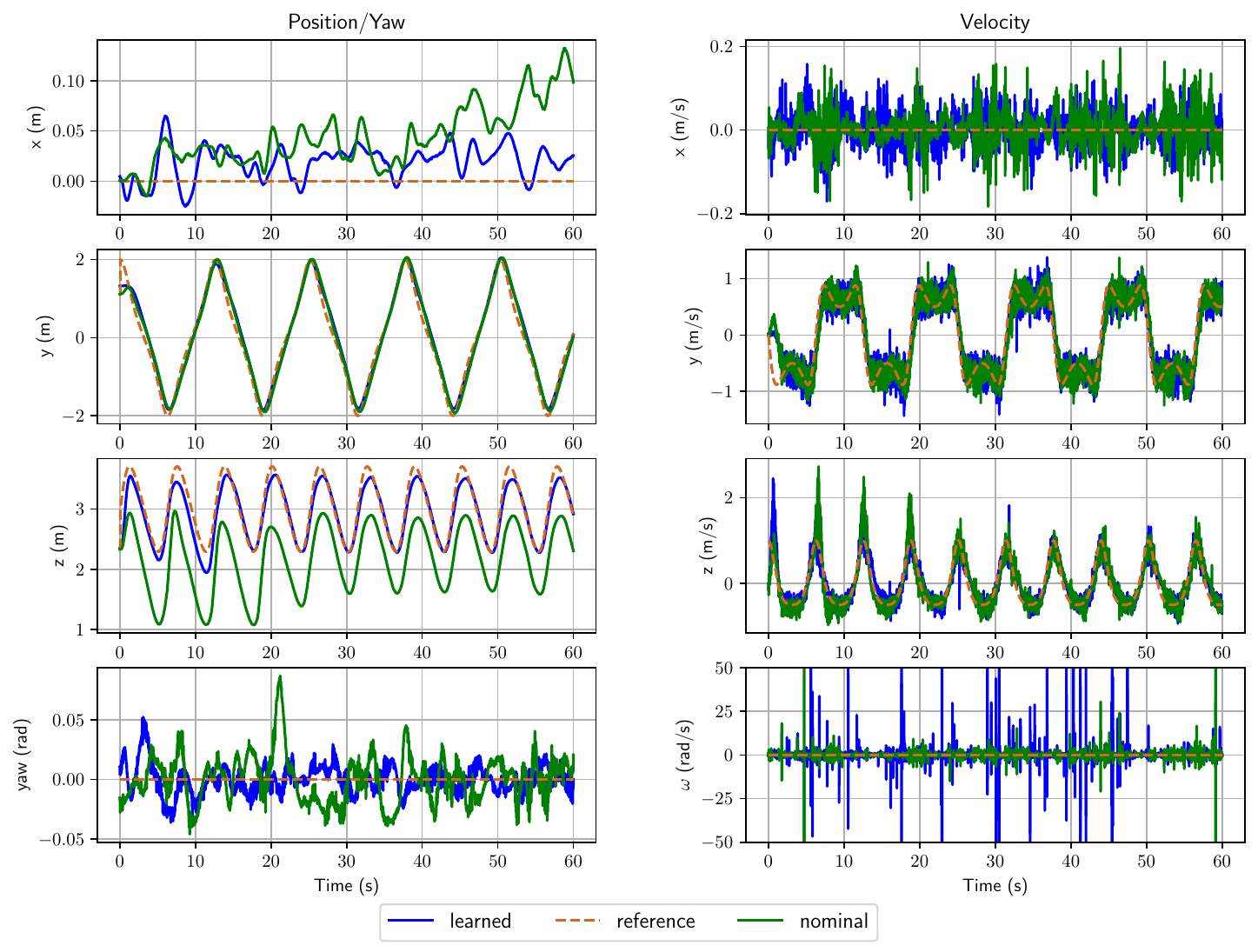}%
        \caption{Vertical lemniscate}
        \label{fig:lemniscate_tracking_plot}
\end{subfigure}%

\begin{subfigure}[t]{0.5\textwidth}
        \centering
\includegraphics[width=\textwidth]{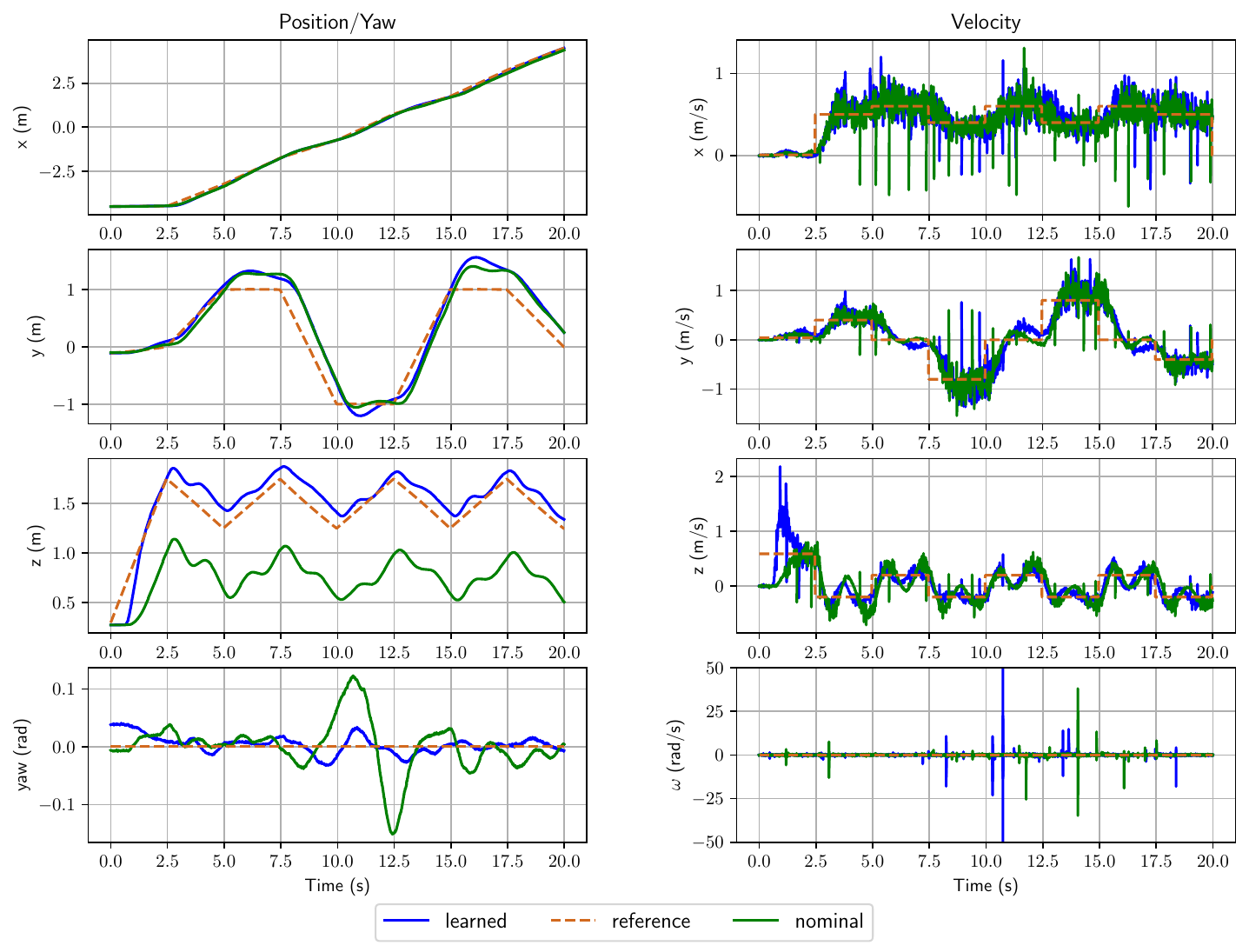}%
        \caption{Piecewise-linear}
        \label{fig:zigzage_tracking_plot}
\end{subfigure}%
\hfill
\begin{minipage}{0.48\textwidth}
\vspace{-60mm}
\begin{subfigure}[t]{0.49\textwidth}
        \centering
\includegraphics[width=\textwidth]{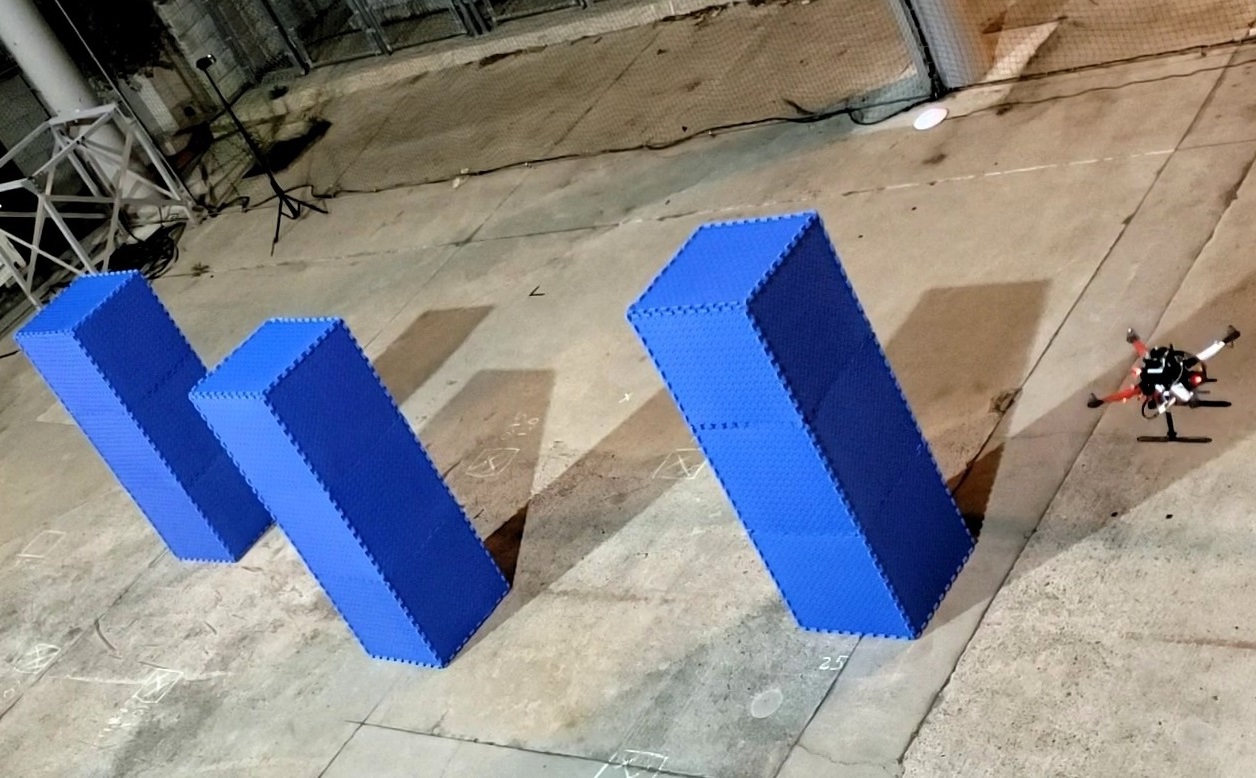}%
        \caption{Tracking at $t=0$s}
        \label{fig:zigzag_t0_snapshot}
\end{subfigure}%
\hfill
\begin{subfigure}[t]{0.49\textwidth}
        \centering
\includegraphics[width=\textwidth]{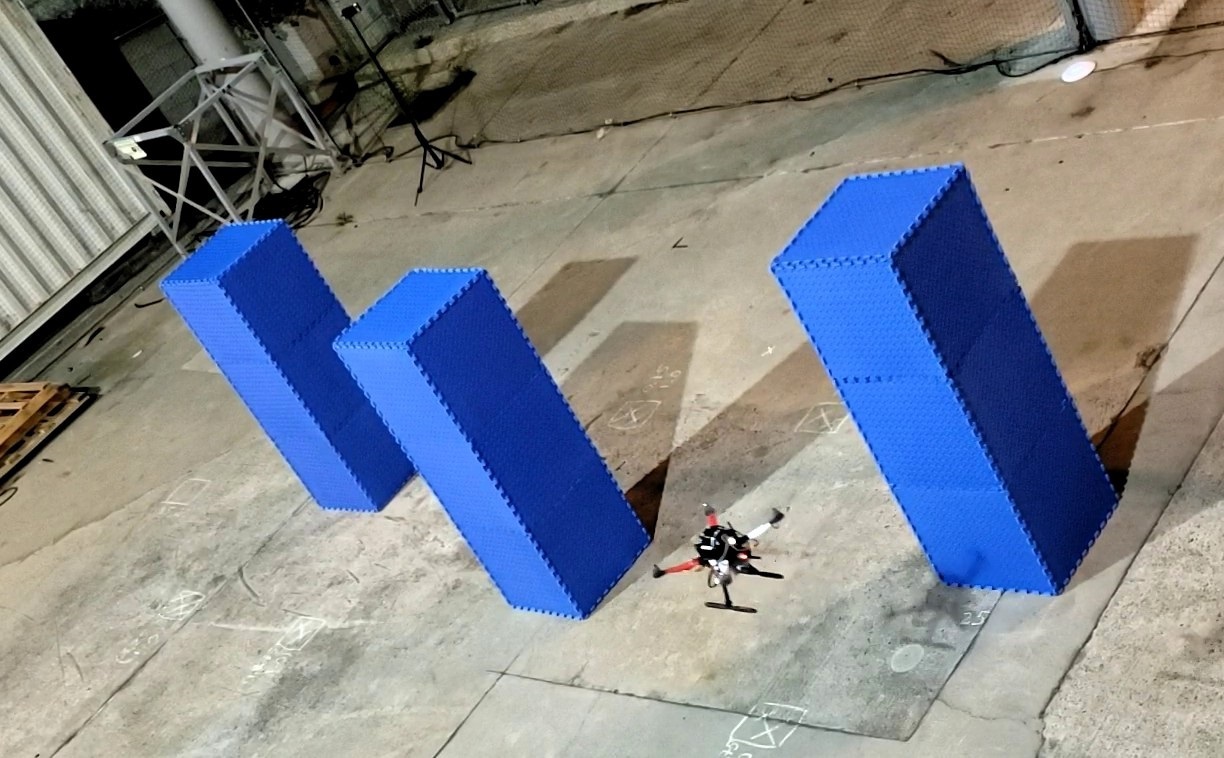}%
        \caption{Tracking at $t=3$s}
        \label{fig:zigzag_t3_snapshot}
\end{subfigure}%
\vspace{5mm}
\begin{subfigure}[t]{0.49\textwidth}
        \centering
\includegraphics[width=\textwidth]{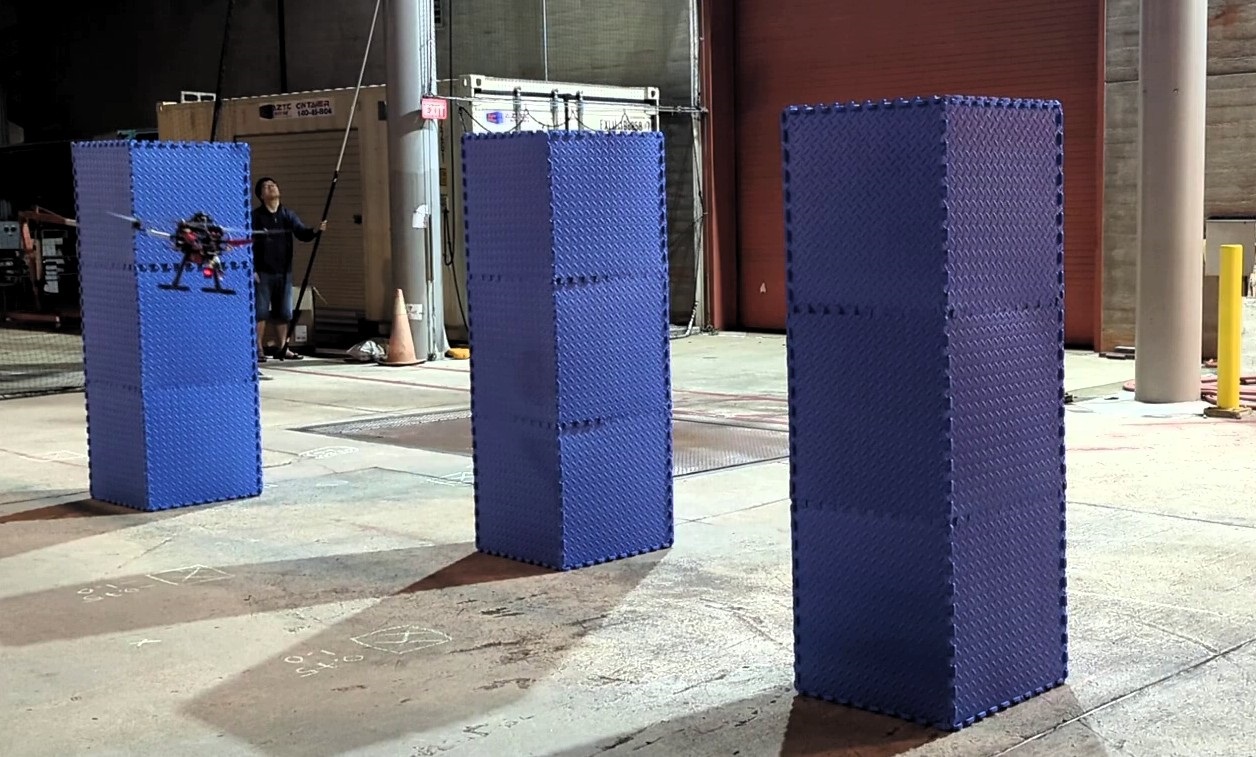}%
        \caption{Tracking at $t=9$s}
        \label{fig:zigzag_t9_snapshot}
\end{subfigure}%
\hfill
\begin{subfigure}[t]{0.49\textwidth}
        \centering
\includegraphics[width=\textwidth]{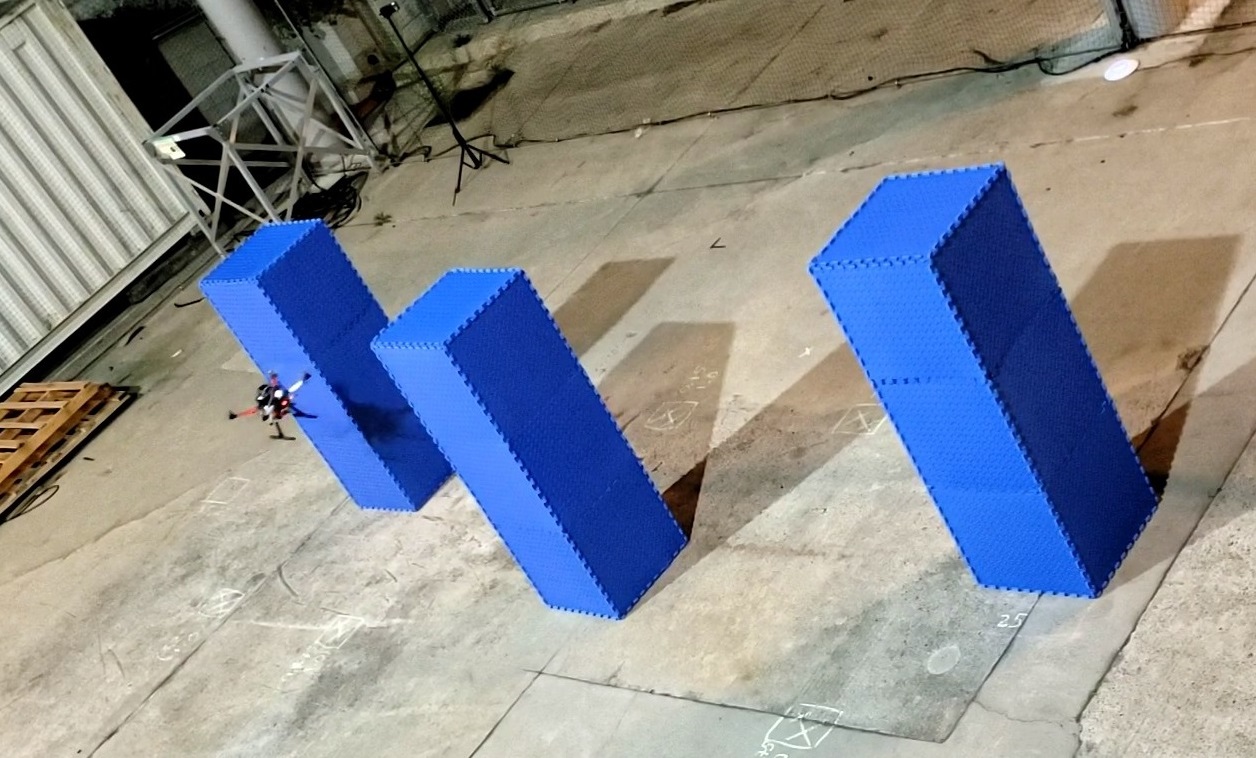}%
        \caption{Tracking at $t=12$s}
        \label{fig:zigzag_t12_snapshot}
\end{subfigure}%
\end{minipage}
\caption{Real quadrotor trajectory using our learned model and controller (blue) and using a nominal model and a geometric controller \cite{lee2010geometric} (green) tracking a desired trajectory.}
\label{fig:drone_upgrade_trajplot}
\end{figure*}

\begin{figure*}[t]
\centering
\begin{subfigure}[t]{0.33\textwidth}
        \centering
\includegraphics[width=\textwidth]{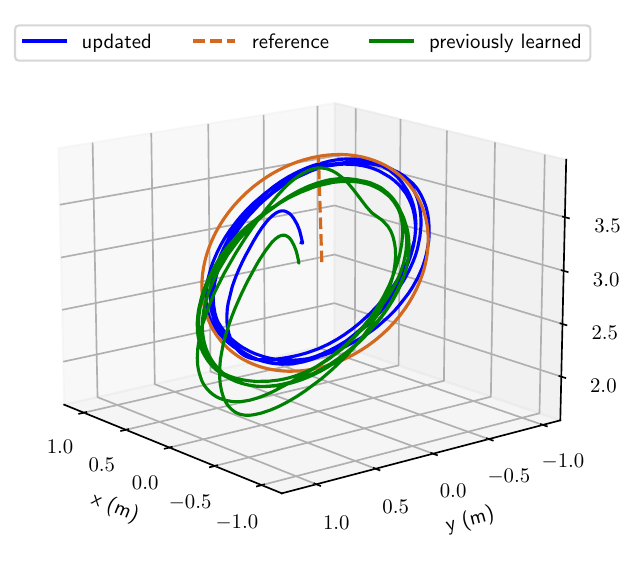}%
        \caption{Vertical circle}
        \label{fig:vertical_circle_payload}
\end{subfigure}%
\begin{subfigure}[t]{0.33\textwidth}
        \centering
\includegraphics[width=\textwidth]{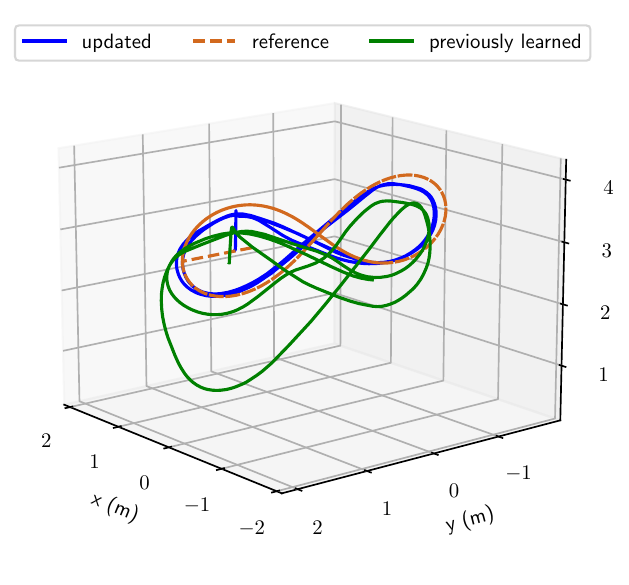}%
        \caption{Vertical lemniscate}
        \label{fig:vertical_lemniscate_payload}
\end{subfigure}%
\begin{subfigure}[t]{0.317\textwidth}
        \centering
\includegraphics[width=\textwidth]{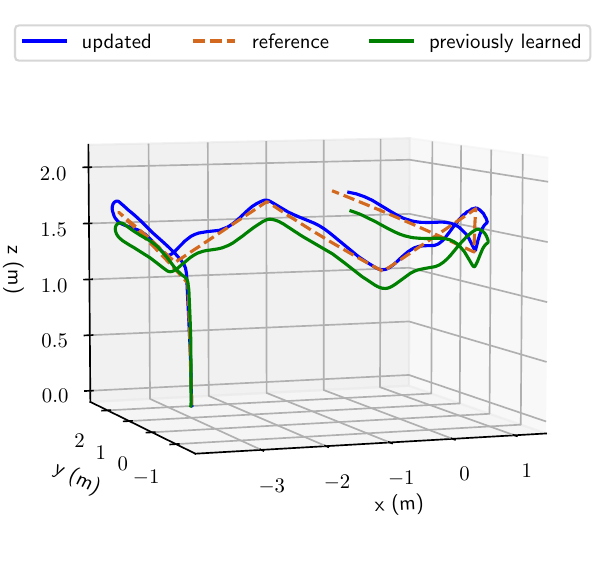}%
        \caption{Piecewise linear}
        \label{fig:vertical_zigzag_payload}
\end{subfigure}%

\caption{Quadrotor trajectory tracking experiment with extra payload: vertical circle (a), vertical lemniscate (b), piecewise linear trajectory (c).}
\label{fig:real_tracking_traj_payload}
\end{figure*}

In this section, we show the benefits of our neural ODE architecture by comparing 1) our \emph{structured Hamiltonian model}, 2) a \emph{black-box model}, i.e., the approximated dynamics $\bff$ is represented by a multilayer perceptron network, and 3) an \emph{unstructured Hamiltonian model}, i.e., the Hamiltonian function is represented by a multilayer perceptron network instead of using the structure in Eq. \eqref{eq:hamiltonian_se3}, in terms of training convergence rates, \NEWW{satisfaction} of energy conservation principle, and Lie group constraints. To verify energy conservation, we rolled out the learned dynamics and calculated the Hamiltonian via \eqref{eq:hamiltonian_se3} along the predicted trajectories for (1) the black-box model using ground-truth mass and potential energy with the predicted states; (2) the unstructured Hamiltonian model using the output of the multilayer perceptron Hamiltonian network; and (3) the structured Hamiltonian model using the learned mass and potential energy networks. We check the $SO(3)$ constraints by verifying that two quantities $|\det{\bfR} - 1|$ and $\Vert \bfR \bfR^\top - \bfI \Vert$ remain small along the predicted trajectories.

We first use a pendulum as described in Sec. \ref{subsec:pendulum_so3} without energy dissipation. The models are trained for $5000$ iterations from $512$ 0.2-second state-control trajectories and rolled out for a significantly longer horizon of $50$ seconds. Fig. \ref{fig:pend_so3_comparison} plots the training loss, the phase portraits, the SO(3) constraints and the total energy (Hamiltonian) of the learned models for a pendulum system. 
As the Hamiltonian structure is imposed in the neural ODE network architecture, our model is able to converge faster with lower loss (Fig. \ref{fig:pend_training_loss} \NEWW{and Table \ref{table:compare_net_arch}}), preserves the phase portraits for state predictions (Fig. \ref{fig:pend_phase_portrait}), \NEWW{and achieves the lowest angle prediction error in in Table \ref{table:compare_net_arch}}.
Fig. \ref{fig:pend_so3_constraints} and \NEWW{Table \ref{table:compare_net_arch}} show that the $SO(3)$ constraints are satisfied by our structured and unstructured Hamiltonian models as their values of $|\det{\bfR} - 1|$ and $\Vert \bfR \bfR^\top - \bfI \Vert$ remain small along a $50$-second trajectory rollout initialized at $\phi = \pi/2$. The constant Hamiltonian in Fig. \ref{fig:pend_total_energy} of our structured Hamiltonian \NEWW{with lowest standard derivation value in Table \ref{table:compare_net_arch}} verifies that our model obeys the energy conservation law \NEWW{with high precision, given} no control input and no energy dissipation. The Hamiltonian of the black-box model increases along the trajectory while that of the unstructured Hamiltonian model fluctuates and slightly decreases over time.

We also tested the models using the simulated Crazyflie quadtoror with the same dataset $\calD$ of $18$ trajectories as described in Sec. \ref{subsec:crazieflie_quad}. The $SE(3)$ port-Hamiltonian ODE network was trained, as described in Sec. \ref{subsec:SE3_dyn_learning}, for $500$ iterations. Our structured Hamiltonian model converges faster with significantly lower loss as seen in Fig. \ref{fig:quad_loss_comparison} \NEWW{and in Table \ref{table:compare_net_arch}}.
We verified that the predicted orientation trajectories from our learned models satisfy the $SO(3)$ constraints. Fig. \ref{fig:quad_se3_constraints_comparison} \NEWW{and Table \ref{table:compare_net_arch}} show two near-zero quantities $|\det{\bfR} - 1|$ and $\Vert \bfR \bfR^\top - \bfI \Vert$, obtained by rolling out our learned dynamics for $5$ seconds, while the learned black-box model significantly violates the constraints after a very short time. Fig. \ref{fig:quad_hamiltonian_comparison} shows a constant total energy along the predicted trajectory, i.e., \NEWW{lowest standard derivation value in Table \ref{table:compare_net_arch}}, from our structured Hamiltonian model without control input and dissipation networks, verifying that the learned model obeys the law of energy conservation. Fig. \ref{fig:quad_predicted_traj} and \NEWW{the prediction error in Table \ref{table:compare_net_arch}} show that our structured Hamiltonian model provides better trajectory predictions compared to the other methods.


\begin{table}[t]
    \caption{\NEWW{Comparison of different neural network architectures for pendulum and quadrotor dynamics learning.}}
    \label{table:compare_net_arch}
    \centering
  \begin{adjustbox}{width=\columnwidth,center}
	\begin{tabular}{ccccc} 
		Metrics & Platform & Black-box  & Unstructured & Structured \\
            & & & Hamiltonian & Hamiltonian \\
		\hline
		\hline
  		Training loss & Pendulum & $0.037$ & $6.3\times 10^{-5}$ & $\mathbf{2.3\times 10^{-7}}$ \\
            $\Vert det(\bfR) - 1 \Vert$ (avg.)  & Pendulum & $1888752$ & $1.4 \times 10^{-3}$ & $\mathbf{1.4 \times 10^{-3}}$\\
            $\Vert\bfR\bfR^\top - \bfI\Vert$ (avg.) & Pendulum & $267971.6$ & $2.1 \times 10^{-3}$ & $\mathbf{2.1 \times 10^{-3}}$ \\
            Total energy (std.) & Pendulum & $13557.1$ & $0.163$ & $\mathbf{0.003}$ \\
            Prediction error (avg.) & Pendulum & $1.02~(rad)$ & $0.08~(rad)$ & $\mathbf{0.008~(rad)}$
            \\\hline
		Training loss & Quadrotor & $2.2\times 10^{-3}$ & $6.4\times 10^{-4}$ & $\mathbf{3.92\times 10^{-6}}$ \\
              $\Vert det(\bfR) - 1 \Vert$ (avg.)  & Pendulum & $3741$ & $2.9 \times 10^{-6}$ & $\mathbf{2.6 \times 10^{-7}}$\\
            $\Vert\bfR\bfR^\top - \bfI\Vert$ (avg.) & Quadrotor & $29336.7$ & $7.6 \times 10^{-6}$ & $\mathbf{1.3 \times 10^{-6}}$ \\
            Total energy (std.) & Quadrotor & $18.1$ & $0.074$ & $\mathbf{1.64 \times 10^{-6}}$ \\
            Prediction error (avg.) & Quadrotor & $0.49~(m)$ & $0.46~(m)$ & $\mathbf{0.02~(m)}$
            \\\hline
	\end{tabular}
  \end{adjustbox}
\end{table}

\begin{table}[t]
    \caption{Position tracking errors using nominal and learned quadrotor models with and without payload.}
    \label{table:compare_tracking_errors}
    \centering
  \begin{adjustbox}{width=\columnwidth,center}
	\begin{tabular}{cccccc} 
		Model & Train  & Test & Circle  & Lemniscate & Piecewise-\\&with&with&&&linear \\&payload&payload&&& \\
		\hline
		\hline
		Nominal & - & No & $0.26~(m)$ & $0.52~(m)$ & $0.62~(m)$ \\
		Learned & No & No  & $\mathbf{0.13 (m)}$ & $\mathbf{0.14 (m)}$ & $\mathbf{0.22 (m)}$ \\
        Learned & No & Yes  & $0.20~(m)$ & $0.40~(m)$ & $0.30~(m)$ \\
		Learned & Yes & Yes & $\mathbf{0.13~(m)}$& $\mathbf{0.12~(m)}$ & $\mathbf{0.21~(m)}$ \\\hline
	\end{tabular}
  \end{adjustbox}
\end{table}

\begin{figure}[t]
\centering
\includegraphics[width=0.5\textwidth]{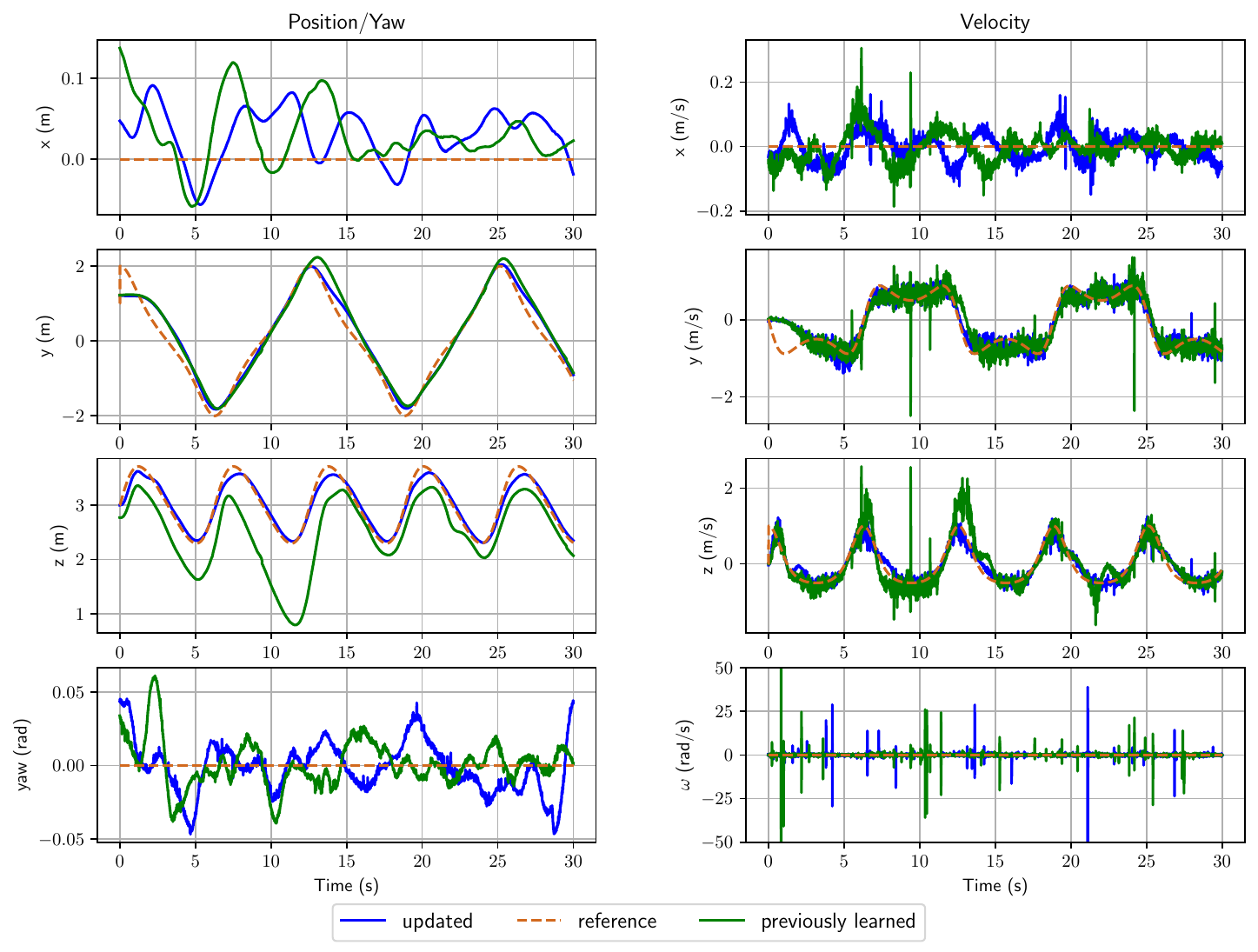}%
\caption{Tracking a piecewise-linear trajectory with extra payload using our previously learned and updated models.}
\label{fig:real_tracking_zigzag_payload}
\end{figure}

\subsection{Real Quadrotor Experiments}
\label{subsec:real_exp_quad}

In this section, we verify our approach using a real quadrotor robot, equipped with an onboard i7 Intel NUC computer and a PX4 flight controller (see Fig. \ref{fig:quadrotor_nuc}). The quadrotor's pose and twist were provided by a motion capture system.  

\subsubsection{Learning quadrotor dynamics after upgrade}
\label{subsubsec:real_quad_upgrade}

We consider a scenario in which the quadrotor is upgraded with a new frame and a new onboard computer, leading to changes in the robot dynamics that we aim to learn from data. \NEWW{The nominal model was obtained from a computer-aided design (CAD) model of another much lighter-weight Raspberry Pi quadrotor with a Raspberry Pi computer and an F450 frame (Fig. \ref{fig:quadrotor_barebone}), which is less accurate and far from the unknown ground-truth model of our upgraded quadrotors. Specifically, the nominal mass and inertia matrix are $\bfM_{\bfv 0} =  1.3\bfI$ and $\bfM_{\bfomega 0} = \diag([0.12, 0.12, 0.2])$, respectively, for the upgraded quadrotor in Fig. \ref{fig:quadrotor_nuc}}. The other nominal matrices were set to zero: $\bfD_{\bfv 0}(\frakq, \frakp) = \bf0$, $\bfD_{\bfomega 0}(\frakq, \frakp) = \bf0$, $V_0(\frakq) = \bf0$ and $\bfB_0(\frakq) = \bf0$. We modified the PX4 firmware \cite{meier2015px4} to expose the normalized thrust and torque being sent to the motors. The firmware's normalization of thrust and torque is unknown and, in fact, is learned from data via the input gain matrix $\bfB(\frakq)$. We collected $12$ state-control trajectories by flying the quadrotor from a starting pose to $12$ different poses using a PID controller provided by the PX4 flight controller \cite{meier2015px4}. The trajectories were used to generate a dataset $\mathcal{D} = \{t_{0:N}^{(i)},\mathbf\frakq_{0:N}^{(i)}, \bfzeta_{0:N}^{(i)}, \bfu^{(i)})\}_{i=1}^D$ with $N = 1$ and $D =~10000$. We trained our model as described in Sec. \ref{subsec:SE3_dyn_learning} for $5000$ steps.

The trained model was used with the control policy in Sec. \ref{subsec:crazieflie_quad} to track different trajectories: a verticle circle, a vertical lemniscate, and a 3D piecewise-linear trajectories. Fig. \ref{fig:drone_upgrade_traj_3d} and \ref{fig:drone_upgrade_trajplot} show that we achieve better tracking performance using our learned dynamics model and energy-based control compared to the nominal model and the geometric controller in \cite{lee2010geometric}. The tracking errors of our controller with a learned model improve by $2-4$ times compared to those of geometric control based on the nominal model, as shown in Table \ref{table:compare_tracking_errors}.

\subsubsection{Learning quadrotor dynamics with extra payload}
\label{subsubsec:real_quadrotor_payload}

In this section, we demonstrate that after our dynamics model is trained, if there is a change in the quadrotor dynamics, e.g., an extra payload is added, we are able to update the dynamics quickly starting from the previously trained model. We attached a coffee can to the quadrotor frame (Fig. \ref{fig:quadrotor_nuc_payload}) to change the mass and inertia matrix of the robot. We, then, collected a new dataset by driving the quadrotor to $12$ different poses, and trained our dynamics model for only $100$ steps, initialized with the trained model in Sec. \ref{subsubsec:real_quad_upgrade}.

In the presence of the coffee can payload, the tracking performance of the controller with the previously learned model degrades as shown in Fig. \ref{fig:real_tracking_traj_payload} and \ref{fig:real_tracking_zigzag_payload}. Meanwhile, after a quick model update, the robot is able to track desired trajectories accurately again. Table \ref{table:compare_tracking_errors} shows that our updated model improves the tracking errors by $1.5$--$4$ times compared to the previously learned model.

\section{Conclusion}
\label{sec:conclusion}

This paper proposed a neural ODE network design for robot dynamics learning that captures Lie group kinematics, e.g. $SE(3)$, and port-Hamiltonian dynamics constraints by construction. It also developed a general control approach for trajectory tracking based on the learned Lie group port-Hamiltonian dynamics. The learning and control designs are not system-specific and, thus, can be applied to different types of robots whose states evolve on Lie group. These techniques have the potential to enable robots to quickly adapt their models online, in response to changing operational conditions or structural damage, and continue to maintain stability during autonomous operation. Future work will focus on extending our formulation to allow learning multi-rigid-body dynamics, handling contact, and online adaptation to disturbances and structural changes in the dynamics.

\section{Appendix}
\label{sec:appendix}
\subsection{Implementation Details}
\label{subsec:implement_details}

We used fully-connected neural networks whose architecture is shown below. The first number is the input dimension while the last number is the output dimension. The numbers in between are the hidden layers' dimensions and activation functions. The value of $\varepsilon_\bfv$ and $\varepsilon_{\bfomega}$ in \eqref{eq:M_cholesky} is set to $0.01$.
\begin{enumerate}
	\item Pendulum:
	\begin{itemize}
		\item Input dimension: $9$. Action dimension: $1$.
		\item $\bfL(\mathbf\frakq)$:\\ 9 - 300 Tanh - 300 Tanh - 300 Tanh - 300 Linear - 6.
		\item $\calV_{\bftheta}(\mathbf\frakq)$: 9 - 50 Tanh - 50 Tanh - 50 Linear - 1.
		\item $\bfB_{\bftheta}(\mathbf\frakq)$: 9 - 300 Tanh - 300 Tanh - 300 Linear - 3.
	\end{itemize}
	\item Pybullet quadrotor:
		\begin{itemize}
		\item Input dimension: $12$. Action dimension: $4$.
		\item $\bfL_\bfv(\mathbf\frakq)$ only takes the position $\bfp \in \mathbb{R}^3$ as input:\\ 3 - 400 Tanh - 400 Tanh - 400 Tanh - 400 Linear - 6.
		\item $\bfL_{\bfomega}(\mathbf\frakq)$ only takes the rotation matrix $\bfR \in \mathbb{R}^{3\times 3}$ as input:\\ 9 - 400 Tanh - 400 Tanh - 400 Tanh - 400 Linear - 6.
		\item $\calV_{\bftheta}(\mathbf\frakq)$: 12 - 400 Tanh - 400 Tanh - 400 Linear - 1.
		\item $\bfB_{\bftheta}(\mathbf\frakq)$: 12 - 400 Tanh - 400 Tanh - 400 Linear - 24.
	\end{itemize}
 \item Real PX4 quadrotor:
		\begin{itemize}
		\item Input dimension: $12$. Action dimension: $4$.
		\item $\bfL_\bfv(\mathbf\frakq)$ only takes the position $\bfp \in \mathbb{R}^3$ as input:\\ 3 - 20 Tanh - 20 Tanh - 20 Tanh - 20 Linear - 6.
		\item $\bfL_{\bfomega}(\mathbf\frakq)$ only takes the rotation matrix $\bfR \in \mathbb{R}^{3\times 3}$ as input:\\ 9 - 20 Tanh - 20 Tanh - 20 Tanh - 20 Linear - 6.
  		\item $\bfD_{\bfv;{\bftheta}}(\mathbf\frakq)$ only takes the position $\bfp \in \mathbb{R}^3$ as input:\\ 3 - 20 Tanh - 20 Tanh - 20 Tanh - 20 Linear - 6.
		\item $\bfD_{\bfomega;{\bftheta}}(\mathbf\frakq)$ only takes the rotation matrix $\bfR \in \mathbb{R}^{3\times 3}$ as input:\\ 9 - 20 Tanh - 20 Tanh - 20 Tanh - 20 Linear - 6.
		\item $V(\mathbf\frakq)$: 12 - 20 Tanh - 20 Tanh - 20 Linear - 1.
		\item $\bfB_{\bftheta}(\mathbf\frakq)$: 12 - 20 Tanh - 20 Tanh - 20 Linear - 24.
            \end{itemize}
\end{enumerate}

\subsection{Derivation of Hamiltonian Dynamics on SE(3) from Hamiltonian Dynamics on a Matrix Lie Group}
\label{subsec:derivation_dual_map}

\NEWW{
The Hamiltonian dynamics on SE(3) in \eqref{eq:portham_dyn_SE3} can be obtained from the general matrix Lie group Hamiltonian dynamics in \eqref{eq:ham_dyn_lie_group} by computing explicit expressions for the terms $\sfa\sfd^*_{\bfxi} (\frakp)$ and $\sfT_\bfe^*\sfL_\frakq\left(\bfeta\right)$ with $\bfeta = \frac{\partial \calH (\frakq, \frakp)}{\partial \frakq}$. 

To obtain an explicit expression for $\sfT_\bfe^*\sfL_\frakq\left(\bfeta\right)$, we use \eqref{eq:dual_TeLq} and the pairing in Def.~\ref{def:dot_product}:
\begin{equation}
\begin{aligned}
\langle\sfT_\bfe^*\sfL_\frakq (\bfeta), \bfxi\rangle &= \langle\bfeta, \sfT_\bfe\sfL_\frakq (\bfxi)\rangle = \langle \bfeta, \frakq \bfxi\rangle\\
&= \tr(\bfeta^\top\frakq\bfxi) = \langle \frakq ^\top\bfeta, \bfxi\rangle.
\end{aligned}
\end{equation}
Thus, $\sfT_\bfe^*\sfL_\frakq (\bfeta) = \sfP_{\frakg^*}(\frakq ^\top\bfeta)$, where $\sfP_{\frakg^*}$ is an orthogonal projector on $\frakg^*$ \cite[Def.~3.60]{boumal2023intromanifolds}, which depends on the specific matrix Lie group. For example, on SE(3) \cite{arathoon2016coadjoint} with $\bfA \in \bbR^{3 \times 3}$, $\bfa,\bfb \in \bbR^3$, and $c \in \bbR$:
\begin{equation}
\sfP_{\frakg^*}\left( \begin{bmatrix}
\bfA & \bfa \\
\bfb^\top & c 
\end{bmatrix}\right) = \begin{bmatrix}
\frac{1}{2}(\bfA-\bfA^\top) & \bfa \\
\bf0^\top & 0 
\end{bmatrix}.
\end{equation}

To obtain an explicit expression for $\sfa\sfd^*_{\bfxi} (\frakp)$, we use Def.~\ref{def:coadjoint} and the pairing in Def.~\ref{def:dot_product}:
\begin{equation}
\begin{aligned}
\langle \sfad^*_{\bfxi}(\frakp), \bfpsi\rangle &= \langle \frakp, \sfad_{\bfxi}(\bfpsi)\rangle = \langle \frakp, [\bfxi,\bfpsi]\rangle\\
&= \tr(\frakp^\top (\bfxi \bfpsi - \bfpsi\bfxi)) = \langle [\bfxi^\top,\frakp], \bfpsi \rangle.
\end{aligned}
\end{equation}
Thus, $\sfad^*_{\bfxi}(\frakp) = \sfP_{\frakg^*}([\bfxi^\top,\frakp])$.


\subsubsection{Expression for $\sfT_\bfe^*\sfL_\frakq (\bfeta)$ on SE(3)}
On SE(3), we have:
\begin{equation}
\frakq = \begin{bmatrix}
\bfR & \bfp \\
\bf0^\top & 1 
\end{bmatrix}, \quad \bfxi = \begin{bmatrix}
\hat{\bfomega} & \bfv\\
\bf0^\top & 0
\end{bmatrix}, \quad \bfeta  = \begin{bmatrix}
    \bfeta_\bfR  & \bfeta_\bfp \\
    \bf0^\top & 0 
    \end{bmatrix},
\end{equation}
and
\begin{equation} \label{eq:Te_star_Lq}
\begin{aligned}
\langle \bfeta, & \frakq \bfxi\rangle = \langle \bfeta_\bfR, \bfR\hat{\bfomega}\rangle + \langle \bfeta_\bfp,\bfR\bfv\rangle\\
&= \langle\frac{1}{2}( \bfR^\top\bfeta_\bfR - \bfeta_\bfR^\top\bfR), \hat{\bfomega}\rangle + \langle \bfR^\top\bfeta_\bfp, \bfv\rangle\\
&= \left\langle \begin{bmatrix}
\frac{1}{2}( \bfR^\top\bfeta_\bfR - \bfeta_\bfR^\top\bfR) & \bfR^\top\bfeta_\bfp\\
\bf0^\top & 0
\end{bmatrix}, \begin{bmatrix}
\hat{\bfomega} & \bfv\\
\bf0^\top & 0
\end{bmatrix} \right\rangle\\
&= \langle\sfT_\bfe^*\sfL_\frakq (\bfeta), \bfxi\rangle,
\end{aligned}
\end{equation}
where we used the properties $\tr(\bfA\bfB) = \tr(\bfB\bfA)$ and $\tr(\hat{\bfx}\bfA) = \frac{1}{2}\tr(\hat{\bfx}(\bfA-\bfA^\top))$ of the hat map in the second equality.

\subsubsection{Expression for $\sfa\sfd^*_{\bfxi} (\frakp)$ on SE(3)} On $\frakse(3)$, we have:
\begin{equation}
\frakp = \begin{bmatrix}
\hat{\bfa} & \bfb \\
\bf0^\top & 0 
\end{bmatrix}, \quad \bfxi = \begin{bmatrix}
\hat{\bfomega} & \bfv\\
\bf0^\top & 0
\end{bmatrix}, \quad \bfpsi = \begin{bmatrix}
    \hat{\bfc}  & \bfd \\
    \bf0^\top & 0 
    \end{bmatrix},
\end{equation}
and
\begin{equation} 
\begin{aligned}
\langle \frakp,&[\bfxi,\bfpsi]\rangle = \left\langle \begin{bmatrix}
    \hat{\bfa} & \bfb \\ \bf0^\top & 0
\end{bmatrix}, \begin{bmatrix}
    [\hat{\bfomega}, \hat{\bfc}] & \hat{\bfomega} \bfd + \hat{\bfv}\bfc \\ \bf0 & 0\end{bmatrix} \right\rangle\\
&=  \langle \hat{\bfa}, [\hat{\bfomega}, \hat{\bfc}] \rangle + \langle \bfb, \hat{\bfomega} \bfd + \hat{\bfv}\bfc \rangle\\
&= \langle[\hat{\bfa},\hat{\bfomega}],\hat{\bfc}\rangle + \langle \bfb,\hat{\bfv}\bfc \rangle + \langle \bfb, \hat{\bfomega} \bfd \rangle\\
&= \langle[\hat{\bfa},\hat{\bfomega}],\hat{\bfc}\rangle -\tr(\bfv\bfb^\top \hat{\bfc}) + \langle \hat{\bfomega}^\top \bfb,  \bfd \rangle\\
&= \langle[\hat{\bfa},\hat{\bfomega}],\hat{\bfc}\rangle + \langle \frac{1}{2}(\bfv\bfb^\top - \bfb \bfv^\top),\hat{\bfc} \rangle + \langle \hat{\bfb}\bfomega,  \bfd \rangle\\
&= \langle[\hat{\bfa},\hat{\bfomega}] + \frac{1}{2}[\hat{\bfb},\hat{\bfv}],\hat{\bfc}\rangle + \langle \hat{\bfb}\bfomega ,  \bfd \rangle\\
&= \left\langle \begin{bmatrix}
    [\hat{\bfa},\hat{\bfomega}] + \frac{1}{2}[\hat{\bfb},\hat{\bfv}] & \hat{\bfb}\bfomega\\ \bf0^\top & 0
\end{bmatrix}, \begin{bmatrix} \hat{\bfc} & \bfd\\\bf0^\top & 0 
    \end{bmatrix}\right\rangle\\
&= \langle \sfad^*_{\bfxi}(\frakp), \bfpsi\rangle,
\end{aligned}
\end{equation}
where we used the hat map properties $\hat{\bfx}^\top = -\hat{\bfx}$, $\hat{\bfx}\bfy = - \hat{\bfy}\bfx$, and $\tr(\hat{\bfx}\bfA) = \frac{1}{2}\tr(\hat{\bfx}(\bfA-\bfA^\top))$.
}

\subsubsection{Consistency Between Hamiltonian Dynamics on a Matrix Lie Group and on SE(3)}
\NEWW{
Denote the momentum in \eqref{eq:p_lie_alg} as $\frakp_{\bfxi} = \begin{bmatrix}
    \frakp_{\hat{\bfomega}} & \frakp_\bfv \\ \bf0^\top & 0
\end{bmatrix}$ and the momentum in \eqref{eq:momenta_Mtwist} as  $\frakp_{\bfzeta} = \begin{bmatrix}
    \frakp_{\bfomega} \\ \frakp_\bfv
\end{bmatrix}$, where $\bfxi = \hat{\bfzeta}$. Let $\frakp_{\hat{\bfomega}} = \frac{\partial \calL}{\partial \hat{\bfomega}} = \hat{\bfmu}$. By the chain rule, we have:
\begin{equation}
    \frakp_{\omega_i} = \frac{\partial \calL}{\partial \bfomega_i} = \left\langle\hat{\bfmu}, \frac{\partial \hat{\bfomega}}{\partial \omega_i}\right\rangle =  2 \mu_i,
\end{equation}
where $\bfomega = [\omega_1\quad\omega_2\quad\omega_3]^\top$, $\bfmu = [\mu_1\quad\mu_2\quad\mu_3]^\top$ or 
\begin{equation} 
    \frakp_{\hat{\bfomega}} = \frac{1}{2} \hat{\frakp}_{\bfomega}
\end{equation}
Therefore, we have:
\begin{equation}  \label{eq:p_xi_p_zeta}
    \frakp_{\bfxi} =  \begin{bmatrix}
\frac{1}{2} \hat{\frakp}_{\bfomega} & \frakp_\bfv\\
\bf0^\top & 0
\end{bmatrix}, \text{i.e., } \bfa = \frac{1}{2} \hat{\frakp}_{\bfomega}, \bfb = \frakp_\bfv,
\end{equation}
leading to:
\begin{equation} \label{eq:ad_star_xi}
    \sfad^*_{\bfxi}(\frakp_{\bfxi}) = \begin{bmatrix}
    \left( \frac{1}{2} \hat{\frakp}_{\bfomega}\bfomega + \frac{1}{2}\hat{\frakp}_\bfv\bfv \right) ^\wedge & \hat{\frakp}_\bfv\bfomega \\ \bf0^\top & 0
\end{bmatrix},
\end{equation}
By plugging in \eqref{eq:p_xi_p_zeta}, \eqref{eq:ad_star_xi}, \eqref{eq:Te_star_Lq} in the matrix Lie group Hamiltonian dynamics \eqref{eq:ham_dyn_lie_group}, we obtain the Hamiltonian dynamics on SE(3) in \eqref{eq:portham_dyn_SE3}.}

\bibliographystyle{cls/IEEEtran}
\bibliography{bib/thai_ref_clean.bib}

\begin{IEEEbiography}[{\includegraphics[width=1in,height=1.25in,clip,keepaspectratio]{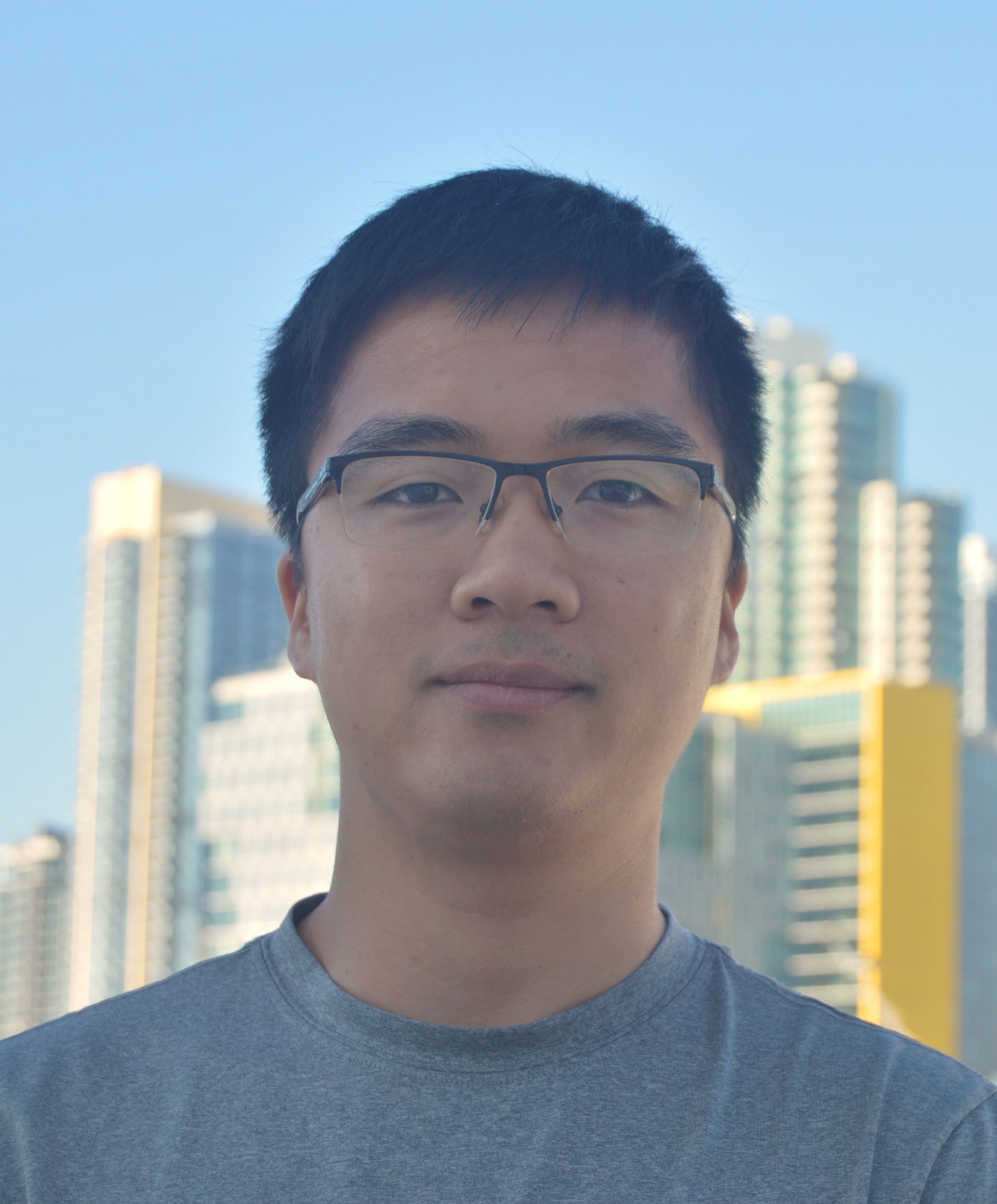}}]{Thai Duong}
	is a PhD candidate in Electrical and Computer Engineering at the University of California, San Diego. He received a B.S. degree in Electronics and Telecommunications from Hanoi University of Science and Technology, Hanoi, Vietnam in 2011 and an M.S. degree in Electrical and Computer Engineering from Oregon State University, Corvallis, OR,  in 2013. His research interests include machine learning with applications to robotics, mapping and active exploration using mobile robots, robot dynamics learning, planning and control.
\end{IEEEbiography}
\begin{IEEEbiography}[{\includegraphics[width=1in,height=1.25in,clip,keepaspectratio]{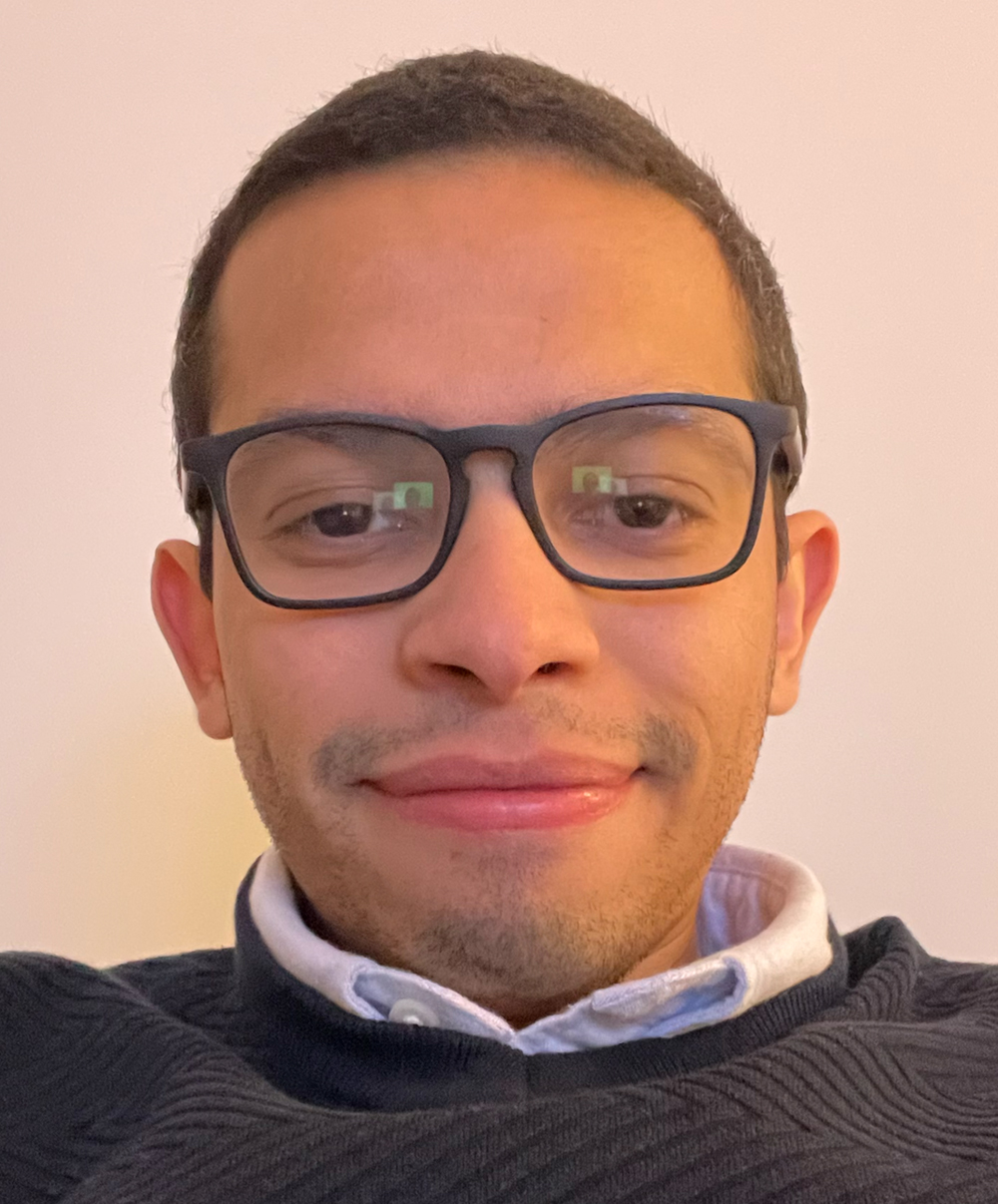}}]{Abdullah Altawaitan}
	 is a Ph.D. student in Electrical and Computer Engineering at the University of California San Diego, La Jolla, CA, USA. He received both the B.S. and M.S. degrees in Electrical Engineering from Arizona State University, Tempe, AZ, USA. He is also affiliated with the Electrical Engineering Department, College of Engineering and Petroleum, Kuwait University, Safat, Kuwait. His research interests include machine learning, control theory, and their applications to robotics.
\end{IEEEbiography}
\begin{IEEEbiography}
[{\includegraphics[width=1in,height=1.25in,clip,keepaspectratio]{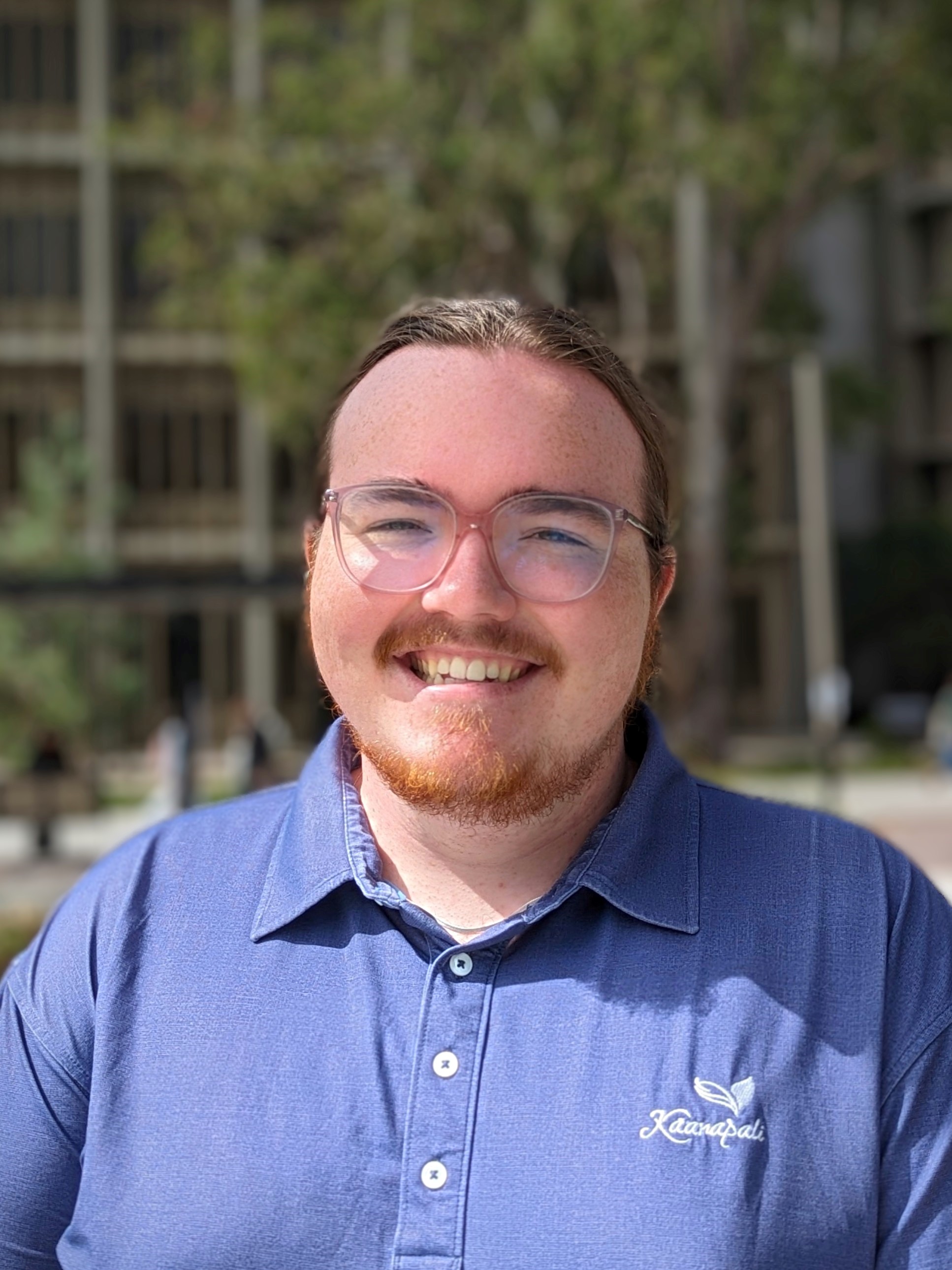}}]{Jason Stanley}
    is an undergraduate student in the Electrical and Computer Engineering department at the University of California, San Diego (UCSD). He is pursuing a major in Computer Engineering, and is planning to continue and do a Masters in Electrical Engineering, also at UCSD. His research interests include control theory, machine learning for robotics, and  mobile robots.
\end{IEEEbiography}

\begin{IEEEbiography}[{\includegraphics[width=1in,height=1.25in,clip,keepaspectratio]{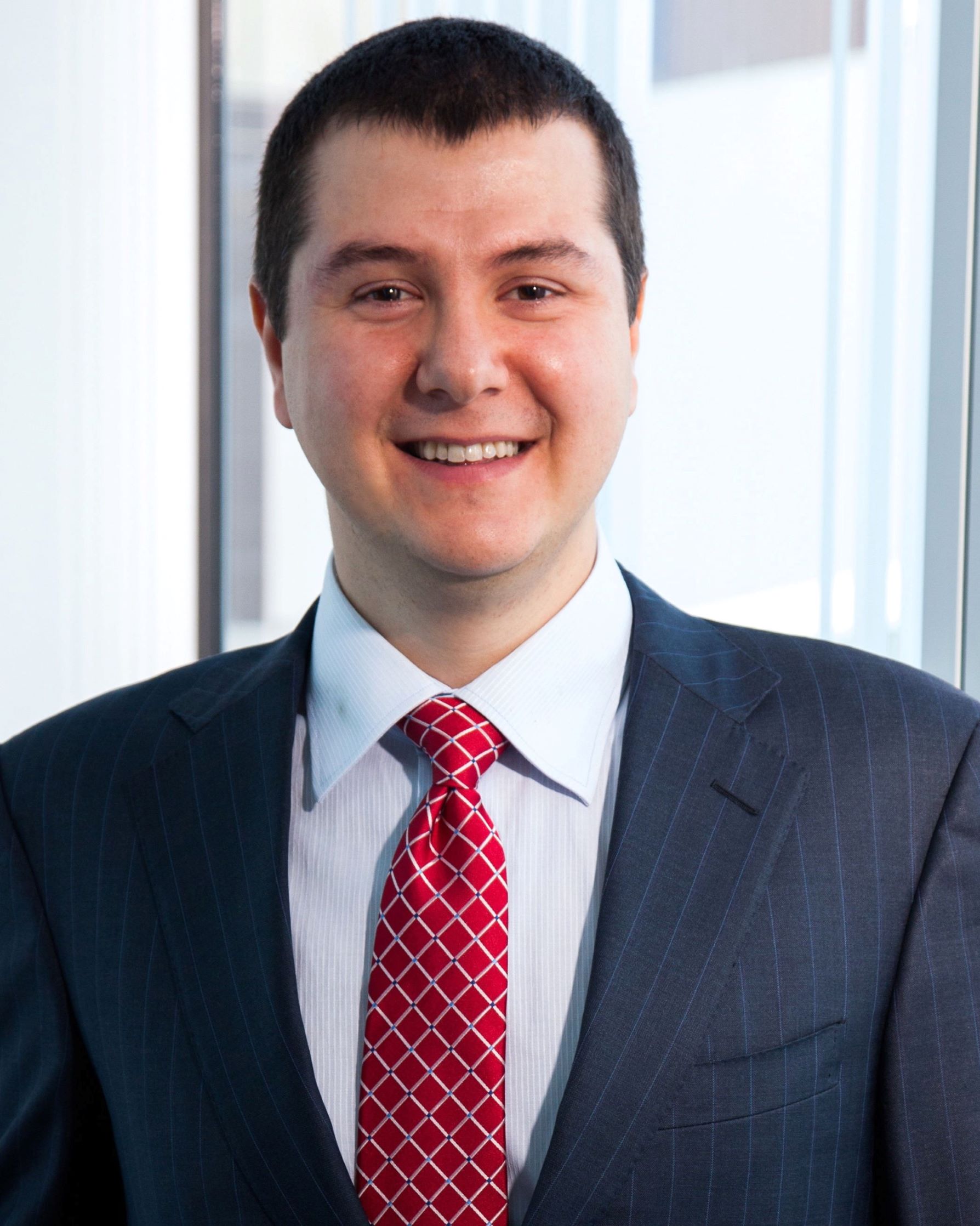}}]{Nikolay Atanasov}
(S'07-M'16-SM'23) is an Associate Professor of Electrical and Computer Engineering at the University of California San Diego, La Jolla, CA, USA. He obtained a B.S. degree in Electrical Engineering from Trinity College, Hartford, CT, USA in 2008 and M.S. and Ph.D. degrees in Electrical and Systems Engineering from the University of Pennsylvania, Philadelphia, PA, USA in 2012 and 2015, respectively. Dr. Atanasov's research focuses on robotics, control theory, and machine learning, applied to active perception problems for autonomous mobile robots. He works on probabilistic models that unify geometric and semantic information in simultaneous localization and mapping (SLAM) and on optimal control and reinforcement learning algorithms for minimizing probabilistic model uncertainty. Dr. Atanasov's work has been recognized by the Joseph and Rosaline Wolf award for the best Ph.D. dissertation in Electrical and Systems Engineering at the University of Pennsylvania in 2015, the Best Conference Paper Award at the IEEE International Conference on Robotics and Automation (ICRA) in 2017, the NSF CAREER Award in 2021, and the IEEE RAS Early Academic Career Award in Robotics and Automation in 2023.
\end{IEEEbiography}

\end{document}